\renewcommand{\bar}[1]{\overline{\smash{#1}\vphantom{I}}}
\newcommand{\magg}{f^{{\mathrm{agg}}, (\ell) }}
\newcommand{\mup}{f^{{ \mathrm{up}}, (\ell) }}
\newcommand{\maggk}{f^{{\mathrm{agg}}, (\ell_k) }}
\newcommand{\mupk}{f^{{ \mathrm{up}}, (\ell_k) }}
\newcommand{\agg}{\mathrm{agg}}
\newcommand{\up}{\mathrm{up}}
\newcommand{\R}{\mathbb R}
\newcommand{\Lreg}{\mathcal{L}_{\mathrm{reg}}}
\newcommand{\Lmae}{\mathcal{L}_{\mathrm{MAE}}}
\newcommand{\argmin}{\mathrm{argmin}}
\newtheorem{theorem}{Theorem}[section]
\newtheorem{corollary}[theorem]{Corollary}
\newtheorem{lemma}[theorem]{Lemma}
\newtheorem{definition}[theorem]{Definition}
\newtheorem{fact}[theorem]{Fact}
\newtheorem{claim}[theorem]{Claim}
\Crefname{equation}{Eq.}{Eqs.}
\Crefname{figure}{Fig.}{Figs.}
\Crefname{tabular}{Tab.}{Tabs.}
\Crefname{definition}{Def.}{Defs.}
\begin{document}

\title{Graph neural networks extrapolate out-of-distribution for shortest paths}

\author[a, 1, 2]{Robert R.\ Nerem}
\author[b, 1]{Samantha Chen}
\author[b]{Sanjoy Dasgupta}
\author[a]{Yusu Wang}

\affil[a]{Halıcıoğlu Data Science Institute, University of California San Diego}
\affil[b]{Department of Computer Science and Engineering, University of California San Diego}
\affil[1]{Equal contribution}
\affil[2]{Corresponding author, rrnerem@ucsd.edu}

\date{}
\maketitle
\begin{abstract}
Neural networks (NNs), despite their success and wide adoption, still struggle to extrapolate out-of-distribution (OOD), i.e., to inputs that are not well-represented by their training dataset. Addressing the OOD generalization gap is crucial when models are deployed in environments significantly different from the training set, such as applying Graph Neural Networks (GNNs) trained on small graphs to large, real-world graphs. One promising approach for achieving robust OOD generalization is the framework of \textit{neural algorithmic alignment}, which incorporates ideas from classical algorithms by designing neural architectures that resemble specific algorithmic paradigms (e.g.\ dynamic programming). The hope is that trained models of this form would have superior OOD capabilities, in much the same way that classical algorithms work for all instances. We rigorously analyze the role of algorithmic alignment in achieving OOD generalization, focusing on graph neural networks (GNNs) applied to the canonical \emph{shortest path} problem. We prove that GNNs, trained to minimize a sparsity-regularized loss over a small set of shortest path instances, exactly implement the Bellman-Ford (BF) algorithm for shortest paths. In fact, if a GNN minimizes this loss within an error of $\epsilon$, it implements the BF algorithm with an error of $O(\epsilon)$. Consequently, despite limited training data, these GNNs are \textbf{guaranteed to extrapolate} to arbitrary shortest-path problems, including instances of any size. Our empirical results support our theory by showing that NNs trained by gradient descent are able to minimize this loss and extrapolate in practice.
\end{abstract}

\section{Introduction}
Neural networks (NNs) have demonstrated remarkable versatility across domains, yet a persistent and critical challenge remains in their ability to generalize to out-of-distribution (OOD) inputs, i.e., inputs that differ distributionally from their training data. This challenge is pervasive in machine learning and arises whenever a model is applied to situations that are not represented in the training data. For instance, a medical diagnosis model trained on North American patients may struggle to generalize when applied to patients in the UK due to differences in underlying population distributions. This issue has motivated entire subfields, such as distribution shift, transfer learning, and domain adaptation \cite{hanneke2024more, wang2021oodgeneralization, Rosenfeld2022DomainAdjustedRO}.

Graphs, in particular, highlight this challenge as they can vary dramatically in size, connectivity, and topological features. Graph neural networks (GNNs) \cite{scarselli2008graph, jegelka2022theory} have seen tremendous development in the past decade  \cite{zhou2020graph, wu2021comprehensive}, and have been broadly applied to a wide range of domains, from social network analysis \cite{hamilton2017inductive, fan2020graph} and molecular property prediction \cite{gilmer2017neural, wu2018moleculenet, wieder2020compact} to combinatorial optimization \cite{cappart2023combinatorial, khalil2017learning}. However, these applications often involve scenarios where the graphs encountered in practice are significantly larger, more complex, or structurally distinct from those in training. The case of {\bf size generalization}, where we hope to generalize to graphs larger than seen in training, is an especially severe case of the OOD generalization problem as the graphs belong to distinct spaces, making the training and test distributions disjoint. In such cases, it is quite possible that the model learns patterns that fail to generalize to larger graphs (unless restrictive special-case generative models, such as graphons, are assumed). Consequently, the current statistical theory of generalization is inapplicable under these types of distributional shifts, underscoring the need for new theoretical frameworks.

\begin{tcolorbox}[colframe=blue!50, colback=blue!10, boxrule=0.5pt, arc=5pt]

We demonstrate that it is possible to train GNNs that provably overcome OOD generalization challenges for the canonical task of computing shortest paths. Our approach leverages two key components: algorithmic alignment and sparsity. By combining these principles, we show that training a GNN on just a few well selected small graphs can yield a model that generalizes provably well to arbitrarily large graphs, marking the first result of this kind.\end{tcolorbox}

 Message-passing graph neural networks are popular architectures for handling data in the form of graphs (cf. surveys \cite{jegelka2022theory,ZHOU202057}). At a high level, they operate by assigning each node $v$ an \emph{embedding}—say, a vector $h_v \in \R^d$—and then iteratively updating these embeddings until they contain a solution to the problem at hand. During each step, each node updates its embedding based on the embeddings of its neighbors and the weights of the edges connecting them.  More precisely, letting $h_v^{(\ell)}$ denote the embedding after $\ell$ update steps,
\begin{equation} \label{eq:gnn}
h_v^{(\ell)} = f^{\text{up}}\left(h_v^{(\ell-1)}, f^{\text{combine}}\left(\{h_u^{(\ell-1)} \oplus w_{uv} : u \in \mathcal{N}(v)\}\right)\right),
\end{equation}
where $\oplus$ denotes concatenation, $w_{uv}$ is the weight of the edge between $u$ and $v$, the set $\mathcal N(v)$ is the neighborhood of $v$, and where $f^{\text{combine}}$ and $f^{\text{up}}$ are functions realized by feedforward neural nets. In this way, after $\ell$ update steps, each node’s embedding can incorporate information from other nodes up to $\ell$ hops away. Since we focus only on message-passing graph neural networks in this work, we refer to them simply as GNNs henceforth. As
this model applies to graphs with any number of nodes, a key question is: when and how do GNNs perform well on inputs of varying size?

Neural algorithmic alignment is a well-studied framework aiming to design neural architectures that align structurally with specific algorithmic paradigms for the purpose of improving the OOD generalization abilities of a NN.  For instance, an astonishingly vast range of practical algorithms are based on \emph{dynamic programming}, an algorithmic strategy that exploits \emph{self-reducibility} in problems: that is, expressing the solution to the problem in terms of solutions to smaller problems of the same type. Shortest paths admit such a decomposition: if the shortest path from $s$ to $t$ goes through node $u$, then it consists of the shortest path from $s$ to $u$, followed by the shortest path from $u$ to $t$, two smaller subproblems. Interestingly, dynamic programs appear to be well-aligned with graph neural networks \cite{xu2019can, dudzik2022graph}.

The Bellman-Ford (BF) algorithm for shortest-path computations is the canonical example in the algorithmic alignment literature, highlighting the connection between dynamic programming and message-passing GNNs. In each iteration $k$ of BF, the shortest path distances from each node to the source that are achievable with at most $k$ steps are computed using the shortest path distances achievable with at most $(k-1)$ steps.
 For a specific node $v$, the distance $d_v^{(k)}$ is updated as
\begin{equation}\label{eq:bf}
d_v^{(k)} = \min\left\{d_u^{(k-1)} + w_{(u,v)}: u \in \mathcal{N}(v)\right\},
\end{equation}
where $w_{(u,v)}$ is the weight of the edge connecting $u$ to $v$. This iterative update process closely mirrors the message-passing mechanism in GNNs, where node features are updated layer by layer based on aggregated information from their neighbors.

It has been suggested (and empirically observed) that such alignment can lead to be better generalization \cite{xu2020neural, Dudzik2023AsynchronousAA}. In particular, since algorithms are expected to work correctly across all problem sizes, algorithmically aligned models may exhibit better generalization to larger instances than seen in training. However, it remains unclear precisely what benefit the alignment brings and how this improved generalization can be quantified. 

\paragraph{Contribution.}
Our work provides theoretical guarantees and empirical validation of out-of-distribution generalization, and marks significant advancement in understanding the benefits of neural algorithmic alignment.  
 While many prior studies have highlighted the expressivity of NNs, the structural similarity of GNNs and classical algorithmic control flows, and their capacity to mimic algorithmic behavior, they typically fall short of providing rigorous guarantees on generalization, particularly in the OOD setting. In contrast, we show that GNNs aligned with the BF algorithm provably extrapolate to arbitrary graphs, regardless of size or structure, under a sparsity-regularized training regime.  

We use sparsity to make the connection between GNNs and the Bellman-Ford (BF) algorithm rigorous and demonstrate how it provides guaranteed OOD generalization. In particular, we show that if a GNN minimizes a sparsity-regularized loss over a particular small set of shortest-path instances, then the GNN exactly implements the BF algorithm and hence works on arbitrary graphs, regardless of size. Furthermore, if the GNN minimizes the loss up to some error, then it generalizes with at worst proportional error.  In this sense, a small sparsity-regularized loss  over a specific set instances serves as a certificate guaranteeing the model's ability to generalize to arbitrary graphs. Furthermore, we empirically validate that gradient-based optimization indeed finds these BF-aligned solutions, highlighting the practical viability of leveraging algorithmic alignment for enhanced generalization. \textit{\textbf{This work moves algorithmic alignment beyond intuitive analogies or expressivity-based arguments. Moreover, these results highlight the unique potential of algorithmic alignment to bridge data-driven and rule-based paradigms, offering a principled framework for tackling generalization challenges in NNs. }}

\subsection{Related work}

\paragraph{Neural algorithmic alignment.} Data-dependent approaches to solving combinatorial optimization problems have surged in the past few years \cite{cappart2023combinatorial}. Among these approaches, GNNs have been especially popular, given the tendency of computer scientists to express algorithmic tasks in terms of graphs.

Early work on GNNs for algorithmic tasks was primarily empirical \cite{khalil2017learning, karalias2020erdos} or focused on representational results \cite{sato2019approximation, loukas2020what}. The idea of \emph{neural algorithmic alignment} emerged as a conceptual framework for designing suitable GNNs, by selecting architectures that could readily capture classical algorithms for similar tasks. This framework has sample complexity benefits \cite{xu2019can} and promising empirical results \cite{tang2020towards, grotschla2022learning, xu2024recurrent}. It has also gained traction as a way of understanding the theoretical properties of a given model in terms of its behavior for simple algorithmic tasks (such as BF shortest paths or dynamic programming as a whole) \cite{velivckovic2022clrs, velivckovic2019neural, velivckovic2021neural, georgiev2023beyond}. Our work is the first to establish, both theoretically and empirically, that a NN will converge to the correct parameters which implement a specific algorithm. 

\paragraph{Size generalization.} Size generalization of graph neural networks has been studied empirically, in a variety of settings including classical algorithmic tasks \cite{velivckovic2022clrs}, physics simulations \cite{sanchez2020learning}, and efficient numerical solvers \cite{luz2020learning}. Theoretical investigations have demonstrated that when GNNs employ maximally expressive aggregation functions (e.g., sum), their capacity to generalize in size depends on the training set graph structures and consistent sampling from the training set graph distribution \cite{yehudai2021local, le2024limits, levie2024graphon, Jain2025SubsamplingGW}. A related alternative perspective analyzes algorithms selection for size generalization \cite{Chatziafratis2024FromLT}.  There is also work on generalization properties of infinite-width GNNs  (the so-called \emph{neural tangent kernel} regime). For the simple problem of finding max degree in a graph, \cite{xu2020neural} show that graph neural networks in the NTK regime with max readout can generalize to out-of-distribution graphs. In contrast, we show OOD generalization properties for GNNs of {\bf any} width and depth, and study the more complex problem of computing shortest path distances.

\section{Extrapolation Guarantees}

\begin{figure}[t]
\begin{subfigure}[t]{\columnwidth}
    \includegraphics[width=3.75 in]{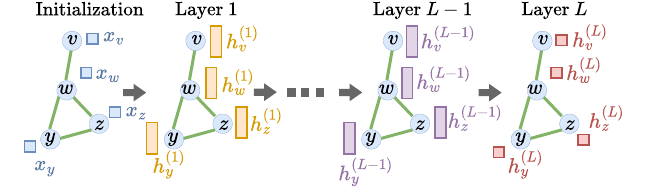}
        \caption{Diagram showing $L$ GNN layers.   The node features are represented by rectangles. At each layer, node features are updated according to neighboring node features and the weights of the adjoining edges.   }
        \label{fig:flowgnn}
\end{subfigure}
\begin{subfigure}{\columnwidth}
\centering
\includegraphics[width=\columnwidth]{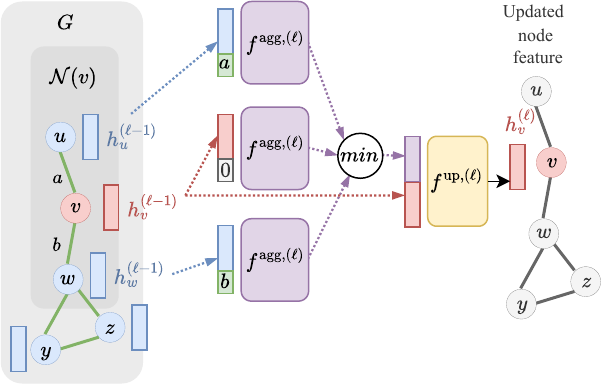}
        \caption{Visual representation of the $\ell$-th layer of a MinAgg GNN operating on a graph $G$ where $\magg$ is the aggregation MLP and $\mup$ is the update MLP.  Note that only nodes in the neighborhood $\mathcal N(v)$ of $v$ are used in the update, so the output at $v$ is independent of $x$ and $y$.  }
        \label{fig:gnn}
\end{subfigure}
\caption{Overview of GNN feature propagation and the MinAgg GNN layer architecture}
\end{figure}

\subsection{Model}

In graph neural networks, each node (and possibly edge) is associated
with a vector, and in each layer of processing, these vectors are
updated based on the vectors of neighboring nodes and adjacent edges.
An \emph{attributed} undirected graph is of the form $G = (V, E, X_{\mathrm{v}},
X_{\mathrm{e}})$, where $X_{\mathrm{e}} = \{x_e : e \in E\}  $ are the edge embeddings and $X_{\mathrm{v}} = \{x_v : v \in V\}$ are the node embeddings. In our case, the
edge embeddings will simply be fixed non-negative edge weights, $x_{(u,v)} =
w_{uv}$, with self-loops set to zero, $x_{(u,u)} = 0$. The initial
conditions and final answer are therefore contained in the node
embeddings $X_{\mathrm{v}}$. For  instances of shortest path problems, we take $x_v = 0$ if $v$ is the source node and use $x_v = \beta$ to indicate nodes with infinite distance to the source, where $\beta$ is some number greater than the sum of edge weights. The space of graphs we consider is then
\begin{equation*}
    \mathscr{G} = \left\{ G = (V, E, X_{\mathrm{v}}, X_{\mathrm{e}}) : \sum_{e \in E} x_e < \beta \right\}.
\end{equation*}
 The embedding of node $v$ at step $\ell$ is
denoted $h_{v}^{(\ell)}$ and follows the update rule in  \Cref{eq:gnn} above. (When referring to specific graphs we use   $h_v^{(\ell)}(G)$ and $x_{(u,v)}(G)$.) 

The $f^{\up, (\ell)}$ function is an MLP that takes two vectors as input: the current embedding of
node $v$, and a vector representing the aggregated information from
$v$'s neighbors. It outputs the new embedding of $v$. Here $\mathcal N(v)$
denotes the neighbors of node $v$, and we take them to include
$v$ itself. The $f^{\mathrm{combine}, (\ell)}$ function combines the embeddings of $v$'s
neighbors, and the edge weights, into a single vector. A common choice
is to apply some MLP $f^{\agg, (\ell)}$ to each (neighbor, edge weight) pair and to
then take the sum, or max, or min, of these $|\mathcal N(v)|$ values. We
adopt the min. This design choice aligns the network with the structure of the BF algorithm, while still representing a broad and expressive class of GNNs.
\begin{definition}\label{def:BFGNN}
    An $L$-layer Min-Aggregation Graph Neural Network (MinAgg GNN)  with $d$-dimensional hidden layers is a map $\mathcal{A}_{\theta}: \mathscr{G} \to \mathscr{G}$ which is computed by layer-wise node-updates (for all $\ell \in [L]$) defined as 
    \begin{equation}
    \label{eq:gnn-definition}
        h_v^{(\ell)} =  \mup \Big( \min_{ u \in\mathcal{N}(v)}\{\magg (h_u^{(
        \ell-1)} \oplus x_{(u, v)})  \} \oplus h_v^{(\ell-1)}\Big)
    \end{equation}
     where $\magg: \R^{d_{\ell-1}+1} \to \R^{d}$ and  $\mup: \R^{d + d_{\ell-1}} \to \R^{d_{\ell}}$ are $L$-layer ReLU MLPs, and $d_0 = d_{K } = 1$. Given an input $G = (V, E, X_{\mathrm v}, X_{\mathrm e})$ the initialization is $h_u^{(0)} = x_u$.
\end{definition}

For simplicity, we assume that $d_\ell = d$ for $L > \ell> 0$. This assumption is made to reduce the number of hyperparameters needed in the analysis, but is made without loss of generality -- all of our results hold with general $d_\ell$. Furthermore, the choice to make all MLP's have $L$ layers is also made for simplicity of presentation (again, without loss of generality). 
Let $\Gamma$ be a map which implements a single step of the BF algorithm. If $G = (V, E, X_\mathrm{e}, X_\mathrm{v})$ is an attributed graph, then $\Gamma(G) = (V, E, X_\mathrm{e}, X'_\mathrm{v})$ such that for any $v \in V$, 
\begin{equation*}
    x_{v}' = \min\{x_{u} + x_{(u, v)} : u \in \mathcal{N}(v) \cup \{v\}\}.
\end{equation*} We aim to train a GNN to learn $K$ iterations of $\Gamma$, which we denote by $\Gamma^K$.
\subsection{Toy Example}

We begin with a toy example that introduces the main ideas. Suppose we look at perhaps the simplest possible GNN that is capable
of computing shortest paths. It has updates of the form
\begin{equation}\label{eq:simple gnn}
    h_v^{(1)} = \sigma(w_2 \min_{u \in \mathcal{N}(v)} \{\sigma(W_1(x_u \oplus x_{(v, u)} + b_1 )) \} + b_2)
\end{equation}
Notice that there are just five parameters in this model: $b_1$,
$b_2$, $W_{11}$, $W_{12}$, and $w_2$. The BF algorithm can
be simulated by this GNN, as $b_1 = b_2 = 0$ and $W_{11} = W_{12} = w_2$ yields  \Cref{eq:bf}. 
Interestingly, there are many parameter choices that implement the
BF update: all that is needed is that $w_2W_{11} = w_2W_{11} = 1$ and $W_2b_1 + b_2 = 0$. 

Now, let's consider training this model using a small collection of eight graphs,
each a path consisting of just one or two edges. Specifically, let $\mathscr{H}_{\mathrm{small}}  = \mathscr H_0 \cup \mathscr H_1$ with $\mathscr {H}_{0} = {\{P_1^{(0)}(a_i)  : i \in \{1,\dots,4 \}\}} $ and $
\mathscr H_1 ={\{  P^{(1)}_{2}( a_i, 0) :  i \in \{5,\dots ,8 \}} \}$ 
 where $P^{(m)}_k$ denotes a path graph as defined in Fig.\ \ref{fig:training}. The labeled training set is then $\mathcal H_{\mathrm{small}} = {\{(G,\Gamma(G)): G \in \mathscr H_{\mathrm{small}}\}}$. 

\begin{theorem} \label{corollary:small-nn-error} \label{thm:main-small}
    Let $0 < \epsilon < 1$. If $\forall G \in \mathscr{H}_{\mathrm{small}}$ and $\forall u \in V(G)$, a  1-layer GNN $\mathcal{A}_{\theta}$ with update given by  \Cref{eq:simple gnn} computes a node feature satisfying $|h_u^{(1)}(G) - x_u(\Gamma(G))| < \frac{\epsilon}{20}$,  then for any $G' \in \mathscr G$ and $v \in V(G')$
        \begin{equation*}
            (1 - \epsilon) x_v(\Gamma(G')) - \epsilon \leq h^{(1)}_v(G') \leq (1 + \epsilon) x_v(\Gamma(G')) + \epsilon.
        \end{equation*}
\end{theorem}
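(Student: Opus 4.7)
The plan is to show that the eight training constraints, each satisfied to within $\epsilon/20$, pin down the five parameters $(W_{11}, W_{12}, w_2, b_1, b_2)$ close enough to the Bellman-Ford locus $\{w_2 W_{11} = w_2 W_{12} = 1,\ w_2 b_1 + b_2 = 0\}$ identified in the toy-example discussion that the resulting one-step update is a bounded multiplicative-plus-additive perturbation of one BF iteration, uniformly over all of $\mathscr{G}$. The argument is therefore in two parts: first, turn low loss on $\mathscr{H}_{\mathrm{small}}$ into coordinate bounds on the parameters; second, turn those coordinate bounds into an output bound on arbitrary graphs.

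First I would analyze the single-edge graphs in $\mathscr{H}_0$. For $P_1^{(0)}(a_i)$ the non-source node $v$ has exactly one non-self neighbor (the source with feature $0$) across an edge of weight $a_i$, plus its self-loop. Plugging into \Cref{eq:simple gnn}, the inner min compares $\sigma(W_{12} a_i + b_1)$ against $\sigma(W_{11}\beta + b_1)$. Because $\beta$ exceeds the total edge weight and the target value $a_i$ is non-trivial, the $\epsilon/20$-loss condition rules out degenerate sign patterns of $(W_{11}, W_{12}, w_2)$ and forces the first term to realise the minimum on the linear branch of the outer ReLU. The resulting affine identity $w_2(W_{12} a_i + b_1) + b_2 = a_i + O(\epsilon)$ at four distinct $a_i$ values lets me solve for $w_2 W_{12} = 1 + O(\epsilon)$ and $w_2 b_1 + b_2 = O(\epsilon)$. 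A parallel analysis on the two-edge graphs $P_2^{(1)}(a_i, 0) \in \mathscr{H}_1$, whose labels are engineered so that the probed node's minimiser is a neighbour with nonzero feature (rather than the source), isolates the remaining coordinate and yields $w_2 W_{11} = 1 + O(\epsilon)$.

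Next I would transfer these three approximate identities to arbitrary $G' \in \mathscr{G}$ and $v \in V(G')$. Let $u^{\ast} \in \mathcal{N}(v)$ realise the true BF minimum $x_v(\Gamma(G')) = x_{u^{\ast}} + x_{(u^{\ast}, v)}$. Using the estimates above, the contribution of $u^{\ast}$ to the inner min is
\begin{equation*}
w_2\,\sigma\bigl(W_{11} x_{u^{\ast}} + W_{12} x_{(u^{\ast}, v)} + b_1\bigr) + b_2 = (1 + O(\epsilon))\,x_v(\Gamma(G')) + O(\epsilon),
\end{equation*}
while any other neighbour $u$ satisfies $x_u + x_{(u, v)} \ge x_v(\Gamma(G'))$, so the same $(1+O(\epsilon))$-perturbation shows $u^{\ast}$ still approximately achieves the inner min. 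Since $x_v(\Gamma(G')) \ge 0$, the outer ReLU acts linearly on non-trivial outputs, and collecting the $O(\epsilon)$ constants through the three steps absorbs into the $(1\pm\epsilon)$ multiplicative and $\pm \epsilon$ additive bound. The constant $1/20$ in the hypothesis is chosen precisely to control the worst-case blow-up in this propagation.

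The main obstacle is the piecewise-linear behaviour of the ReLUs and the inner min. I must verify, on the training graphs, that the correct argument realises the inner min so that I am actually probing the coordinate I intend; and, on a general $G'$, that the outer ReLU is on its linear branch and that the approximate minimiser of the inner min still coincides with the true BF minimiser up to $O(\epsilon)$. The rest of the proof is scalar bookkeeping: solving $4 \times 2$ affine systems for the parameter-level estimates and tracking signs as the $\epsilon/20$ bound propagates to a uniform $(1\pm\epsilon)$-multiplicative, $\pm\epsilon$-additive guarantee.
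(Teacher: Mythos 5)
Your proposal follows essentially the same route as the paper's proof: use the single-edge graphs $P_1^{(0)}(a_i)$ and two-edge graphs $P_2^{(1)}(a_i,0)$ to force, via a case analysis on which branch of the min and of the ReLUs is active (at most one training index can fall into each degenerate branch, leaving enough indices for an affine interpolation argument), the approximate identities $w_2W_{11}\approx w_2W_{12}\approx 1$, $w_2b_1+b_2\approx 0$ together with the sign constraints, and then propagate these to an arbitrary graph by comparing the network's inner minimizer with the true BF minimizer $u^\ast$. The decomposition, the key lemma (affine functions pinned down by two well-separated interpolation points), and the error bookkeeping all match the paper's argument.
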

This theorem shows that if the GNN in  \Cref{eq:simple gnn}  achieves low loss on $\mathscr H_{\mathrm{small}}$ then it must implement the $\Gamma$ operator (a BF step) up to proportionally small error.  
\begin{proof}[Proof Sketch]
Recall that $\sigma(\cdot)$ is the ReLU activation
function, which effectively divides the input space into two
halfspaces. This means that the output of the model on any of the
input graphs is one of just 4 possible linear functions of the input.
The number of input graphs is enough to cover all these cases, so  if there is small error on $\mathcal{H}_{\mathrm{small}}$, the model must simplify to 
\begin{equation}
    h_v^{(1)} = w_2(\min_{u \in \mathcal N(v)}W_1(x_u \oplus x_{(v,u)} + b_1) +b_2)
\end{equation}
for most training instances. It is now straightforward to show small error is only achieved if  $w_2W_{11}$ and $w_2W_{12}$ are close to 1 and $w_2b_1 + b_2$ is close to zero. These conditions 
guarantee that the BF algorithm is approximately identified.
\end{proof}
\begin{figure*}[t]
\centering
\begin{center}
\includegraphics[width=11 cm]{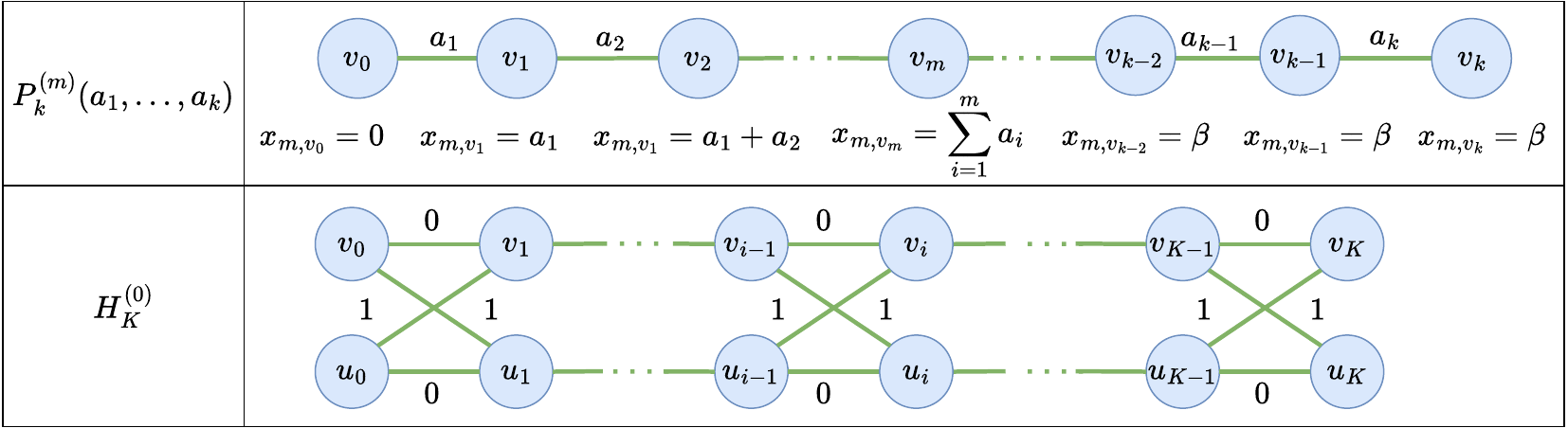} 

        \caption{Graphs used in the training sets $\mathscr H_{\mathrm{small}}$ and $\mathscr G_K.$}
\label{fig:training}
\end{center}
\end{figure*}
\subsection{Main Result}

Now we move to our main result. This time, we consider a full
MinAgg GNN as given by Def.\ \ref{def:BFGNN}
To train this model, we again use a small number of simple graphs. The training set contains
\begin{align}
\mathscr G_{K} &= \mathscr{G}_{\text{scale}, K}\cup \{P^{(0)}_1(1), P^{(1)}_2(1,0), H^{(0)}_K\}\label{eq:train}
\end{align} 
where $\mathscr{G}_{\text{scale}, K}$ contains all path graphs of the form $P^{(1)}_{K+1}(a, 0, \dots,0, b, 0 ,\dots,0)$  for $(a, b) \in \{0,1, \dots, 2K \} \times \{0, 2K  + 1\}\}$ ( $b$ is the weight of the $k$th edge). The training instance $H^{(0)}_K$, is shown in Fig.\ \ref{fig:training}. The labeled set is $\mathcal G_K = \{(G,\Gamma^K(G)) : G \in \mathscr G_K\}$. 

For each graph in the training set $G \in \mathscr G_{\mathrm{train}}$, we compute the loss only over the set of nodes reachable from the source $V^*(G)$ (the total number of reachable nodes is $|\mathscr G_{\mathrm{train}}|^*$).
The regularized loss we use $\mathscr L_{\mathrm{reg}}$ is
\begin{equation}
\label{eq:l0-loss}
    \Lreg(\mathscr G_{\mathrm{train}}, \mathcal{A}_\theta) =   \Lmae(\mathscr G_{\mathrm{train}}, \mathcal{A}_\theta) + \eta \|\theta\|_0,
\end{equation}
where $\Lmae$ is
\begin{equation*}
    \frac{1}{|\mathscr G_{\mathrm{train}}|^*} \sum_{G\in \mathscr{G}_{\mathrm{train}}}\sum_{v \in V^*(G)} |x_{v}(\Gamma^K(G)) - h_v^K(G)|. 
\end{equation*}

\begin{theorem}\label{thm:main-deep}
    Consider a training set $\mathscr G_{\mathrm{train}}$ with $M$ total reachable nodes and  $\mathscr G_K \subset \mathscr G_{\mathrm{train}}.$  For $L \geq K > 0$, if an $L$-layer MinAgg GNN $\mathcal A_{\theta}$ with $m$-layer MLPs achieves  a loss $\Lreg(\mathcal G_{\mathrm{train}}, \mathcal{A}_\theta)$
within $\epsilon$ of its global minimum, where ${0 < \epsilon <\eta < \frac{1}{2M(mL + mK+K)}}$, then on any $G \in \mathscr G$ the features computed by the MinAgg GNN satisfy 
\begin{equation*}
    (1-M\epsilon)x_v(\Gamma^K(G)) \leq h_v^{(L)}(G) \leq (1+M\epsilon) x_v(\Gamma^K(G))
\end{equation*}
for all $v \in V(G).$
\end{theorem}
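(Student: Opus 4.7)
The plan is to exploit the $\ell_0$ regularizer to pin down the sparsity pattern of the minimizer, and then use the structure of $\mathscr G_K$ to force every nonzero weight in that sparse skeleton to take the value demanded by the BF update.

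\textbf{Step 1: Locate the global minimum.} First I would exhibit a particular parameter setting $\theta^*$ that implements $\Gamma^K$ exactly on every graph (so $\Lmae = 0$), using the smallest possible number of nonzero weights. Intuitively, each MLP need only carry a single ``BF channel'' that passes the current distance through linearly; call this sparsity count $N^*$. Counting along the $m$-layer MLPs, across $L$ aggregation and $L$ update MLPs, $N^*$ is bounded by a quantity on the order of $mL + mK + K$ (matching the denominator appearing in the hypothesis on $\eta$). This gives $\min_\theta \Lreg \le \eta N^*$.

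\textbf{Step 2: Use $\epsilon < \eta$ to freeze the sparsity pattern.} Since $\mathcal{A}_\theta$ achieves $\Lreg \le \eta N^* + \epsilon$, we get $\eta \|\theta\|_0 \le \eta N^* + \epsilon$; because $\epsilon < \eta$ and $\|\theta\|_0$ is an integer, this forces $\|\theta\|_0 \le N^*$. Combined with the requirement that $\mathcal{A}_\theta$ must actually route signal from the input to the output through every one of the $L$ layers in order to achieve small MAE on the training instances with nontrivial reachable distances, this pins down $\|\theta\|_0 = N^*$ and restricts the nonzero entries to a ``single path'' skeleton in each MLP. In particular every aggregation and update MLP collapses to a scalar affine map composed with ReLUs on a one-dimensional channel.

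\textbf{Step 3: Reduce to a scalar toy problem per layer.} Once the skeleton is one-dimensional, each layer's behavior on the distance channel is governed by a constant-size set of parameters, exactly as in the toy example of \Cref{thm:main-small}. At this point I would adapt the case-analysis from the toy proof to the multi-layer setting: for graphs in $\mathscr G_{\text{scale}, K}$, the source weight $a$ ranges over enough distinct values that the ReLUs along the BF channel cannot clip (else the MAE paid on at least one of the $O(K)$ scale graphs would exceed $\epsilon/M$, contradicting the loss bound). Hence on these instances the network reduces to a purely affine function of $a$ and $b$. The instance $H^{(0)}_K$ ensures the min aggregation is exercised on a graph where a shortcut changes the answer, forcing the $\min$ to be honestly used rather than absorbed into a single neighbor.

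\textbf{Step 4: Propagate constants from scale graphs.} Varying $a$ over $\{0,1,\ldots,2K\}$ with $b=0$ or $b = 2K+1$ gives two lines of training points. Matching the affine model to these with MAE at most $\epsilon/M$ per node forces the composite per-layer map to be within $O(\epsilon)$ of the identity $x \mapsto x + w$; the instances $P^{(0)}_1(1)$ and $P^{(1)}_2(1,0)$ pin down the zero-bias and unit-weight behavior specifically at the boundaries.

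\textbf{Step 5: Layerwise error accumulation.} Finally, I would run induction on $\ell = 1, \ldots, L$: if layer $\ell - 1$ is within multiplicative factor $1 \pm \ell \epsilon$ of $\Gamma^{\ell-1}$ on every graph, the layer-$\ell$ update (now known to be within $O(\epsilon)$ of an exact BF step from Steps 3--4) yields $(1 \pm \ell\epsilon)$ on $\Gamma^\ell$. After $K$ iterations (with $L \ge K$ and the extra layers acting as identity on the already-converged features), this gives the claimed $(1 \pm M\epsilon)$ bound on arbitrary $G \in \mathscr G$.

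\textbf{Main obstacle.} The hardest part will be Step 3: verifying that the sparsity skeleton really forces every MLP to behave as a scalar affine map in the regime exercised by the training graphs, and that no ``creative'' use of ReLU nonlinearities can fit $\mathscr G_K$ within error $\epsilon/M$ while still implementing something other than an approximate BF step. This requires a careful enumeration of which ReLUs can be active on which training instances, handling the interaction between the aggregation MLP's min and the update MLP's nonlinearity, and ruling out degenerate solutions that happen to match BF on the small training graphs but fail on larger ones. The scale instances $\mathscr G_{\text{scale}, K}$ are designed precisely to preclude such degeneracies by sweeping inputs across a range broad enough to activate or deactivate every potentially problematic ReLU.
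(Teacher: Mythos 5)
Your overall strategy matches the paper's: exhibit a sparse exact BF implementation to locate the global minimum, use $\epsilon<\eta$ and integrality of $\|\theta\|_0$ to freeze the sparsity count, collapse each MLP to a one-dimensional channel, pin down the surviving scalars using $\mathscr G_{\mathrm{scale},K}$, and finish by induction over BF steps. However, there are three places where your sketch glosses over the work that actually carries the proof. First, the lower bound $\|\theta\|_0\ge mL+mK+K$ does not follow from ``signal must route through every layer'': that argument only yields the $mL$ from the update MLPs (via $P^{(0)}_1(1)$ forcing every $\mup$ to be non-constant). The extra $mK+K$ comes from showing that correctness on $H^{(0)}_K$ forces at least $K$ layers whose aggregation MLP depends on \emph{both} its node and its edge input (so the first aggregation weight matrix needs two nonzeros in those layers); you assign $H^{(0)}_K$ a different role (``forcing the min to be honestly used''), which would not produce the right count, and without the exact count the minimum of $\Lreg$ is not identified and Step 2 collapses.

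Second, the obstacle you flag at the end is real and your Step 3 does not resolve it: on a scale graph the $\min$ over $\mathcal N(v_{k+1})$ may select a neighbor other than the predecessor $v_k$, in which case the output is \emph{not} an affine function of $(a,b)$ and fitting two lines tells you nothing about the layer parameters. The paper handles this by proving that whenever the computation is not ``path derived'' the output is forced to lie in a set of at most $K$ distinct values, and then pigeonholes over the $2K+1$ choices of $a$ to find two path-derived instances differing only in the $k$th edge; some argument of this kind is indispensable and is missing from your plan. Third, your Step 5 accumulates error multiplicatively per layer to reach $(1\pm M\epsilon)$, but $M$ is the number of reachable training nodes, not the depth; in the paper the factor $M$ enters once, from the normalization of $\Lmae$ (a single node's error can be as large as $M\epsilon$), and the per-layer bounds on $\nu^{(k)}\prod_{i\ge k}\mu^{(i)}$ are telescoping so that the induction does \emph{not} compound error across layers. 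As written, your accumulation scheme would produce a depth-dependent constant rather than the stated one.
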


This theorem shows that low regularized loss implies that an $L$ layer MinAgg GNN correctly implements $\Gamma^K$ (i.e., $K$-steps of BF), where the error in implementing this operator is proportional to the distance of the loss from optimal. We later show in experiments that this low loss can be achieved via $L_1$-regularized gradient descent.  
Here we allow for the training set $\mathscr G_{\mathrm{train}}$ to be larger than $\mathscr G_K$. However, these additional training examples dilute the the training signal from $\mathscr G_K$, and so the strongest bounds are given if $\mathscr G_{\mathrm{train}} = \mathscr G_K$. 
\begin{proof}[Proof Sketch] \
\\
\begin{enumerate}
    \item \textbf{Implementing BF:}
     $mL + mK+K$ non-zero parameters are sufficient for the MinAgg GNN to perfectly implement $K$ steps of the BF algorithm. 
    \item \textbf{Sparsity Constraints:}
        Next, we show that small loss on $\mathscr G_K$ necessitate at least $mL + mK+K$ non-zero parameters. Specifically, we show the following. 
        \begin{itemize}
            \item High accuracy on $P^{(0)}_1(1)$ can only be achieved if each layer of $\mup$ has at least one non-zero entry. \textit{This requires $mL$ nonzero parameters.} 
            \item High accuracy on $H^{(0)}_K$ requires $K$  layers where $\magg$ depends on both the node and edge components of its input. This means that each layer of $\magg$ has at least one non-zero entry, and the first layer of $\magg$ has two non-zero entries. 
            \textit{This requires $mK + K$ non-zero parameters.} 
              \end{itemize}
       We use this fact to derive that the minimum value of $\Lreg$ is $\eta(mL + mK + K)$. The BF implementation reaches the minimum value of $\Lreg$ since it achieves perfect accuracy with $mL + mK + K$ non-zero parameters. 
    \item \textbf{Simplifying the Updates:}
   Using the above sparsity structure, we can simplify the MinAgg GNN updates to an equivalent update where the intermediate dimensions are always 1 and there are $K$ updates instead of the previous $m$ updates:
 $
    \bar h_v^{(k)} =  \mu^{(k)}\min_{u \in \mathcal N(v)}\left\{\ \bar h_v^{(k-1)} + \nu^{(k)} x_{(u,v)} \right\}
$ 
where $\mu^{(k)}, \nu^{(k)},\bar h_v^{(k)} \in \mathbb R$.
    \item \textbf{Parameter Constraints and Approximation:}
  If $\Lreg$ is within $\epsilon$ of its minimum, the parameters $\mu^{(k)}, \nu^{(k)}$ must be constrained to avoid poor training accuracy on certain graphs. These constraints ensure node features approximate BF's intermediate values, and compiling these errors completes the proof.
\end{enumerate}
   \end{proof}

\section{Experiments}
\label{sec:experiments}
\begin{figure*}[t]
    \centering
    
    \begin{subfigure}[b]{0.45\textwidth}
        \centering
        \includegraphics[width=0.75\textwidth]{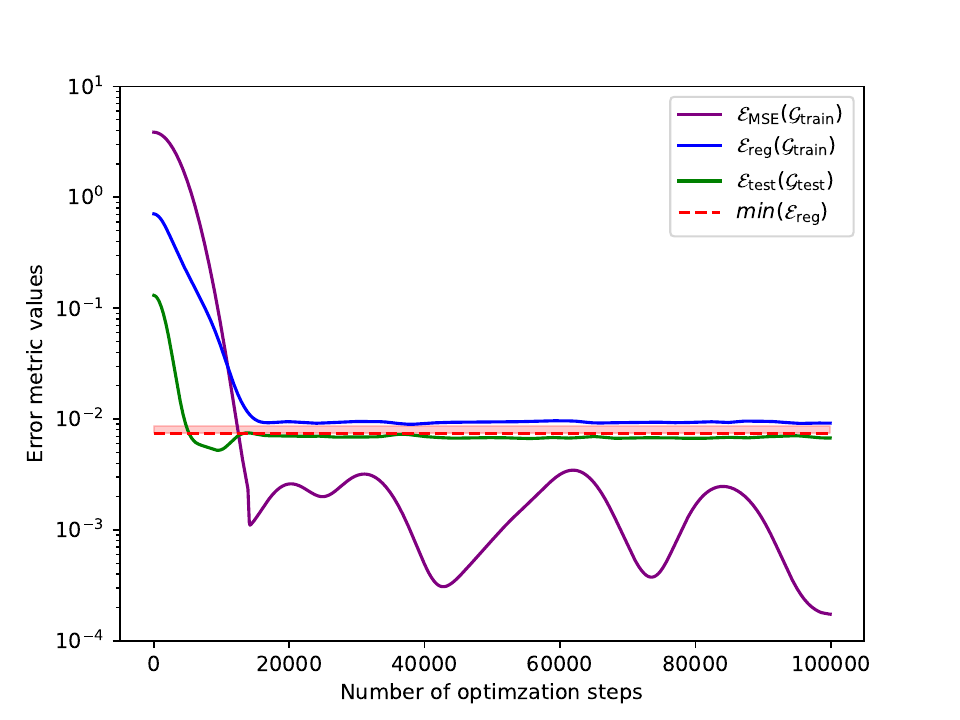}
        \caption{Error metrics for models trained with $\mathcal{L}_{\mathrm{MSE}, L_1}$.}
        \label{fig:subfig1}
    \end{subfigure}
    \hfill
    \begin{subfigure}[b]{0.45\textwidth}
        \centering
        \includegraphics[width=0.75\textwidth]{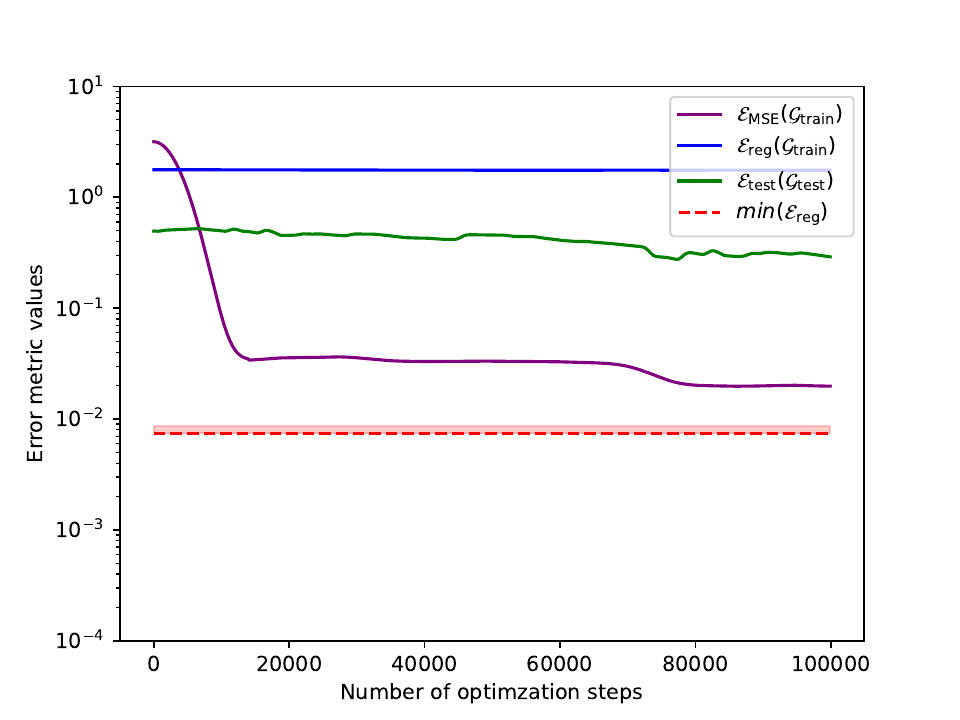}
        \caption{Error metrics for models trained with $\mathcal{L}_{\mathrm{MSE}}$}
        \label{fig:subfig2}
    \end{subfigure}
    
    \begin{subfigure}[b]{0.45\textwidth}
        \centering
        \includegraphics[width=0.8\textwidth]{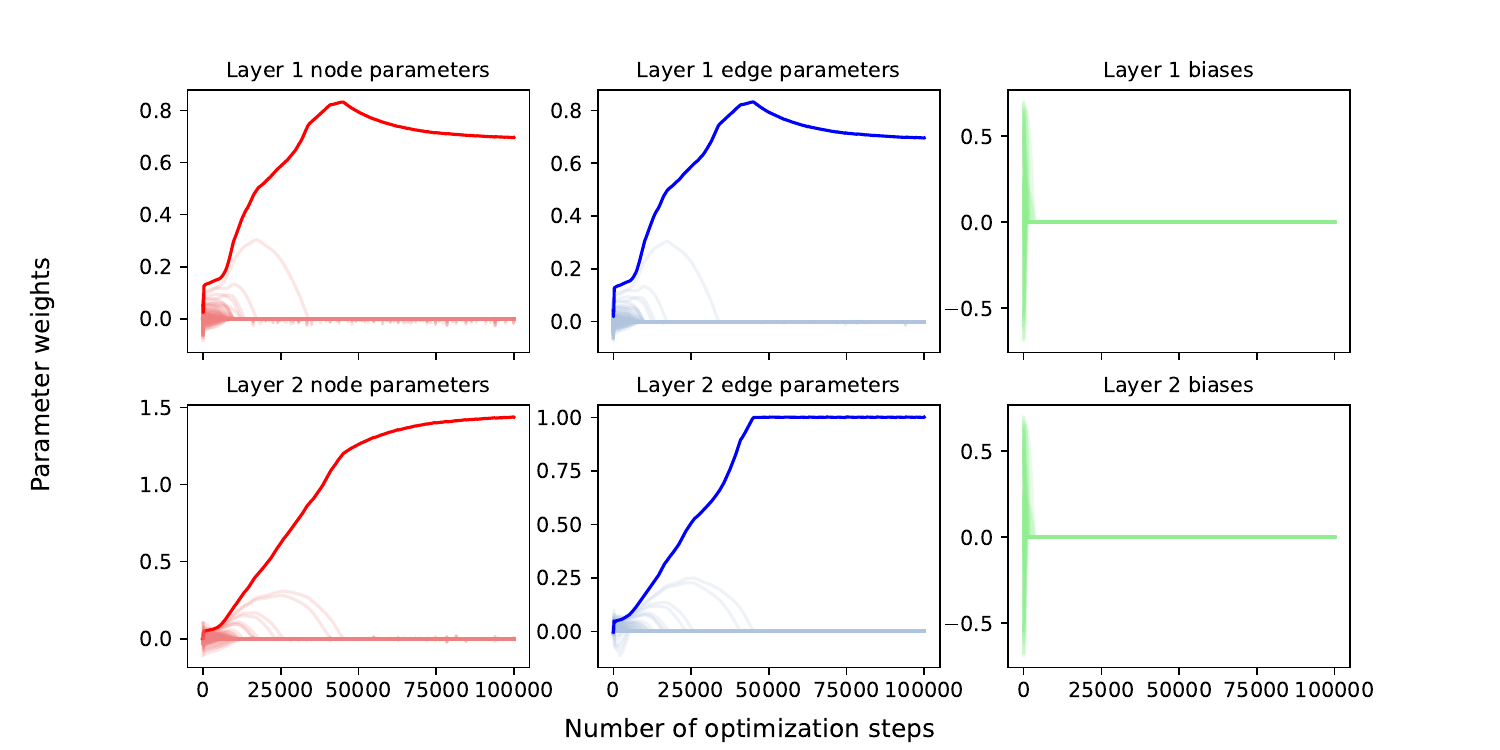}
        \caption{Model parameters summaries for model trained $\mathcal{L}_{\mathrm{MSE}, L_1}$.}
        \label{fig:subfig1}
    \end{subfigure}
    \hfill
    \begin{subfigure}[b]{0.45\textwidth}
        \centering
        \includegraphics[width=0.8\textwidth]{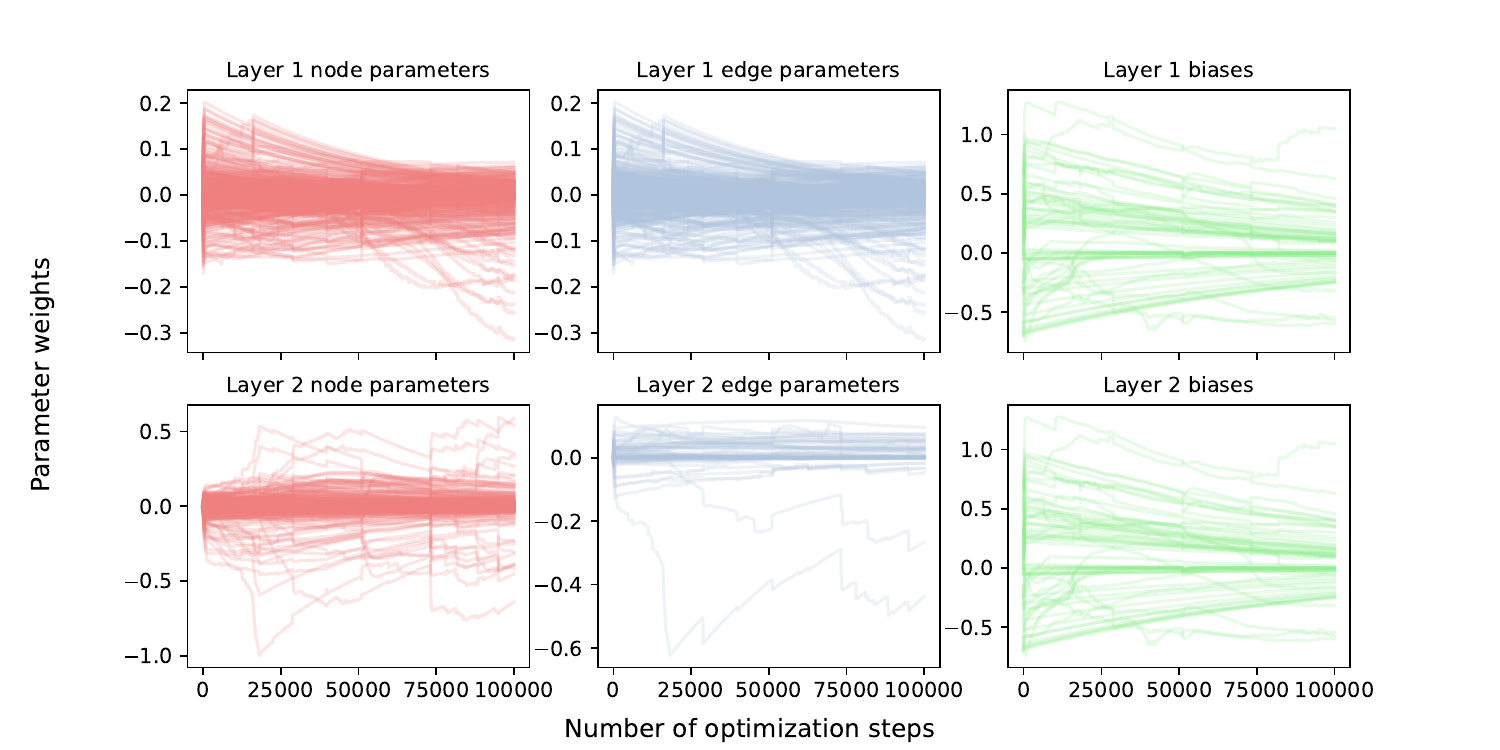}
        \caption{Model parameters summaries for model trained $\mathcal{L}_{\mathrm{MSE}}$.}
        \label{fig:subfig2}
    \end{subfigure}
    \caption{Performance metrics and parameter updates for a two-layer MinAgg GNN trained on a two steps of the BF algorithm. The dotted line in (a) and (b) is the global minimum of \Cref{eq:l0-loss}. In (a) and (b), we track the change in the train loss, test loss, and $\Lreg$ over each optimization step for the models trained with $\mathcal{L}_{\mathrm{MSE}, L_1}$ and $\mathcal{L}_{\mathrm{MSE}}$. The final test loss for the model trained with $\mathcal{L}_{\mathrm{MSE}, L_1}$ is 0.006 while the final test loss for the model trained with $\mathcal{L}_{\mathrm{MSE}}$ is 0.288. (b) and (c) show changes in model parameters over each optimization step with and without $L_1$ regularization, respectively. Each curve has been smoothed with a truncated Gaussian filter with $\sigma=20$.}
    \label{fig:2step}
\end{figure*}

Our main theoretical results (Theorems \ref{thm:main-small} and \ref{thm:main-deep}) state that a trained model with a sufficiently low $L_0$-regularized loss approximates the BF procedure. 
We now empirically show how to find such a low-loss trained model by applying gradient descent to a {\it $L_1$-regularized loss} $\mathcal{L}_{\mathrm{MSE}, L_1}$ given by
\begin{equation}\label{eq:reg loss}
\underbrace{\frac{1}{|\mathscr{G}_{\mathrm{train}}|^*} \sum_{G \in \mathscr{G}_{\mathrm{train}}}\sum_{v \in V^*(G)} (x_{v}(\Gamma^K(G)) - h_v^K(G))^2}_{\mathcal{L}_{\mathrm{MSE}}} ~+~  \|\theta\|_1.
\end{equation}
This training loss $\mathcal{L}_{\mathrm{MSE}, L_1}$ is a practical proxy for the $L_0$ regularized loss $\Lreg$.
To see the effect of sparsity regularization ($L_1$-term), we also train a comparison model using the {\it unregularized loss} $\mathcal{L}_{\mathrm{MSE}}$ (bracketed terms in \Cref{eq:reg loss}). 
We show that models trained with $\mathcal{L}_{\mathrm{MSE}, L_1}$ find sparse and generalizable solutions for BF; while models trained without sparsity regularization (with $\mathcal{L}_{\mathrm{MSE}}$) have worse generalization.
\paragraph{Additional setup.}
We verify our theoretical results empirically using a two-layer MinAgg GNN trained on two steps of BF. 
Specifically, we show that converging to a low value of $\Lreg$ indicates better performance -- particularly in improving generalization to larger test graphs. 
We additionally show that with $L_1$ regularization, the trained model parameters approximately implement a sparse BF step.
In our experiments, we configure the MinAgg GNN with two layers and 64 hidden units in both the aggregation and update functions. 
The first layer has an output dimension of eight, while the second layer outputs a single value.
In the supplement, we present additional results evaluating the performance of several other model configurations on one and two steps of BF.
To evaluate trained models, we use the following three error metrics:

\begin{enumerate}
    \item \textbf{Empirical training error} ($\mathcal{E}_{\mathrm{MSE}}$): 
    
    This error $\mathcal{E}_{\mathrm{MSE}}$ is the same as $\mathcal{L}_{\mathrm{MSE}}$ and tracks the model's accuracy on the training set. 
    $\mathscr{G}_{\mathrm{train}}$ consists of $\mathscr{G}_K$ where $K = 2$ as well as four three-node path graphs initialized at step zero of BF and four five-node path graphs initialized at step two of BF. 
    We include these extra graphs to 
    provide examples for the initial and final two steps of the BF algorithm. 
    Empirically, we observe that this expanded training set eases model convergence.

    \item \textbf{Test error} ($\mathcal{E}_{\mathrm{test}}$): We compute the average multiplicative error of the model predictions compared to the ground-truth BF output over a test set $\mathscr{G}_{\mathrm{test}}$: 
    \begin{equation*}
        \mathcal{E}_{\mathrm{test}}(\mathscr{G}_{\mathrm{test}}) = \frac{1}{|\mathscr{G}_{\mathrm{test}}|} \sum_{G \in \mathscr{G}_{\mathrm{test}}} \sum_{v \in V(G)} \Big|1 - \frac{x_v(\Gamma^K(G))}{h_v^{K} (G)}\Big|.
    \end{equation*}
    $\mathscr{G}_{\mathrm{test}}$ consists 200 total graphs. 
    In order to test the generalization ability of each model, we construct $\mathscr{G}_{\mathrm{test}}$ from 3-cycles, 4-cycles, complete graphs (with up to 200 nodes), and Erd\"os-R\'enyi graphs generated using $p = 0.5$.

    \item $L_0$\textbf{-regularized error} ($\mathcal{E}_{\mathrm{reg}}$): This metric, which is $\Lreg$ (see  \Cref{eq:l0-loss}) evaluated on $\mathscr{G}_{\mathrm{train}}$, shows how the model's performance satisfies the conditions of \Cref{thm:main-deep}. 
    
\end{enumerate}

Furthermore, we also track a summary of the model parameters per epoch. For a detailed discussion of the model parameter summary see the supplement. In brief, at each layer, we track biases, the parameters which scale the node features, and the parameters which scale the edge features. For the sparse implementation of two-steps of BF, the node and edge parameter updates both have the same single non-zero positive value $a$ in the first layer. 
In the second layer, the node and edge parameter updates both have a single non-zero positive value but the edge parameter update converges to 1 while the node parameter update converges to $1/a$.

\begin{table*}[t]
    \centering
    \begin{tabular}{ccccc}
         & \multicolumn{2}{c}{Single} & \multicolumn{2}{c}{Iterated} \\
         \cline{2-3} \cline{4-5}
         \# of nodes & No $L_1$-reg.  & With $L_1$-reg. & No $L_1$-reg. & With $L_1$-reg.\\
         \hline
         100 &  0.0202 &  \textcolor{red}{0.0036} &  0.0617  &  \textcolor{red}{0.0022}\\
         500 & 0.0242 &  \textcolor{red}{0.0036} & 0.0881 & \textcolor{red}{0.0029}  \\
         1K & 0.0183 &  \textcolor{red}{0.0035} &  0.0951 & \textcolor{red}{0.0041}
    \end{tabular}
    \caption{Measuring $\mathcal{E}_{\mathrm{test}}$ as the number of nodes per graph increases. We test models trained with $\mathcal{L}_{\mathrm{MSE}, L_1}$ and models trained with $\mathcal{L}_{\mathrm{MSE}}$. For each model, we examine $\mathcal{E}_{\mathrm{test}}$ (first two columns): for two steps of BF (a single forward pass of each model) and (last two columns): for six steps of BF (where each model is iterated three times). Each test set consists of Erd\"os–R\'enyi graphs generated with the corresponding sizes listed with $p$ such that the expected degree $np = 5$. 
    For both models, there is little variation in $\mathcal{E}_{\mathrm{test}}$ as the graph size increases. However, for the iterated version of each model, $\mathcal{E}_{\mathrm{test}}$ for the model trained with $\mathcal{L}_{\mathrm{MSE}, L_1}$ remains accurate, while the unregularized model shows a significantly larger test error when iterated 3 times (i.e, comparing third column with first column).
    }
    \label{tab:error_vs_size}
\end{table*}

\paragraph{Results.}
\Cref{fig:2step} shows the results of training on two steps of BF.
Here (a) and (b) show $\mathcal{L}_{\mathrm{MSE}, L_1}$ (i.e, the model trained with regularized loss) achieves a low value of $\Lreg$ and a correspondingly a low test error, $\mathcal{L}_{\mathrm{test}}$ (in the supplement, we show that this small value of $\Lreg$ satisfies the conditions of \Cref{thm:main-deep}).
In contrast, the model trained with $\mathcal{L}_{\mathrm{MSE}}$ (i.e., unregularlized loss) has significantly higher $\Lreg$ and $\mathcal{L}_{\mathrm{test}}$.
Thus, \Cref{fig:2step} (a) and (b) experimentally validates \Cref{thm:main-deep} by  demonstrating that low values of $\Lreg$ yield better generalization on test graphs with different sizes and topologies from the train graphs. 
In \Cref{fig:2step} (c) and (d), we elucidate the effect of $L_1$ regularization with regards to achieving low values of $\Lreg$ and show that that the model trained with $\mathcal{L}_{\mathrm{MSE}, L_1}$ indeed approximates a sparse implementation of BF. 

In \Cref{tab:error_vs_size}, we further assess the generalization ability of the $L_1$-regularized model on sparse Erd\"os-Reny\'i graphs of increasing sizes as compared to the unregularized model. 
Interestingly, when we use the trained 2-step MinAgg GNN as a primitive module and iterate it 3 times (to estimate $2 \times 3 = 6$ BF steps), the test error of our $L_1$-regularized model {\it does not accumulate} while the error by the un-regularized model increases by roughly a factor of $3$. Again, $L_1$ regularization improves generalization. 
By iteratively applying the trained 2-step BF multiple times, we obtain an neural model to approximate general shortest paths with guarantees.

\section{Discussion}

We show that algorithmic alignment can fundamentally enhance out-of-distribution (OOD) generalization. By training GNNs with a sparsity-regularized loss on a small set of shortest-path instances, we obtain models that correctly implement the BF algorithm. This result provides a theoretical guarantee that the learned network can generalize OOD to graphs of sizes and structures beyond those encountered during training. This is one of the first results where a neural model, when trained to sufficiently low loss, can {\bf guarantee OOD size generalization} for a non-linear algorithm.

One notable implication of our findings is the ability to perform longer shortest-path computations  by exploiting the iterative nature of the BF algorithm.  Since our BF-aligned GNN correctly implements 
$K$ steps, we can extend this capability by recurrently iterating the network. (See the last column of \Cref{tab:error_vs_size}.) This approach allows the network to solve shortest-path problems that require more than $K$ iterations. 
Such scalability enables generalization to shortest-path computations that require arbitrary computational costs, making the model practical for scenarios where there is no bound on the shortest path lengths that may be encountered.

Furthermore, the ability to learn a single algorithmic step is valuable in broader contexts of neural algorithmic reasoning. This modular design means that the MinAgg GNN can serve as a subroutine within more complex neural architectures that aim to solve higher-level tasks. For instance, in neural combinatorial optimization or graph-based decision-making tasks, shortest-path computations are often just one component of a larger process. By ensuring the network reliably implements each step of the BF algorithm, we create a reusable building block that can be integrated into more sophisticated models. This supports the goal of developing NNs that can reason algorithmically, enabling them to solve increasingly complex problems through the composition of learned algorithmic steps. 

Our approach has important implications for the study of length generalization in transformers. Transformers trained on short sequences have been shown to generalize to longer sequences by learning computational steps that can be iterated recurrently \cite{zhou2023algorithms, jelassi2023length, Liu2022TransformersLS}. Similarly, our BF-aligned GNN learns the steps of a dynamic programming algorithm, enabling size generalization. Our approach suggests that length/size generalization for neural networks, such as the transformer, can be potentially achieved by designing the architecture to align with algorithmic principles.

Our work opens an exciting new direction for research by raising the question of when low training loss can serve as a guarantee for out-of-distribution generalization in other tasks or architectures. While our results focus on the BF algorithm and message-passing GNNs, they suggest the potential for similar guarantees in other instances of alignment, such as different (dynamic programming based) graph algorithms, sequence-to-sequence tasks, or architectures like transformers and recurrent neural networks. Investigating the structural and algorithmic properties that enable such guarantees could provide a unified framework for designing NNs that generalize reliably across diverse tasks and input domains. 
\bibliographystyle{abbrv}
\bibliography{clean}

\pagebreak
 \onecolumn
\appendix
\section{Definitions and notation}

We begin with definitions and notations we utilize in our proof. To ensure the supplementary material is east to navigate and reader-friendly, we reiterate some definitions from the main text. 
We take $[n] = \{1,2,\dots,n\}$ and use $x\oplus y$ to denote the concatenation of the vectors $x$ and $y$. The neighborhood of a node $v$ is denoted $\mathcal N(v)$ and we use the convention $v \in \mathcal N(v)$.  When referring to the $i$th component of $x$ we write $x_i$ or $[x]_i$. 

 Given a source node $s \in V$, let  $\mathrm{d}^{(t)}(s, v)$ denote the length of the $t$-step shortest path from $s$ to $v$. If no such path exists, $\mathrm{d}^{(t)}(s, v) = \beta $ and $\beta$ is some large number. 
We define a single $t$-step Bellman-Ford instance to be a attributed graph $G^{(t)} = (V, E, X_{\mathrm v}, X_{\mathrm e})$ where $X_{\mathrm v} = \{x_{v} = \mathrm{d}^{(t)}(s, v) : v\in V\}$ for some $s \in V$. For every 0-step Bellman-Ford instance $G^{(0)} = (V, E, X_{ \mathrm v}, X_{\mathrm e})$, $x_{s} = 0$ for the source node $s \in V$ and $x_{u} = \beta $ for all other nodes $u \in V$. Throughout this manuscript, $t$-step BF instances are always denoted by a superscript $(t)$. Recall that all edge weights considered in this manuscript are non-negative. 

Let $\Gamma$ be a map which implements a single step of the BF algorithm. If $G = (V, E, X_\mathrm{e}, X_\mathrm{v})$ is an attributed graph, then $\Gamma(G) = (V, E, X_\mathrm{e}, X'_\mathrm{v})$ such that for any $v \in V$, 
\begin{equation*}
    x_{v}' = \min\{x_{u} + x_{(u, v)} : u \in  \mathcal{N}(v) \}.
\end{equation*}
Let $\Gamma^K$ be $K$ iterations of $\Gamma$. 
Note that applying $\Gamma^K$ to a 0-step Bellman-Ford instance $G^{(0)}$ yields the $t$-step shortest path from $s$ to $v$, i.e., $\Gamma^{K}(G^{(0)}) = G^{(K)}$. Although we restrict our training set to BF instances, our extrapolation guarantees show that the MinAgg GNN approximates the operator $\Gamma^K$ on any graph in
\begin{equation*}
    \mathscr{G} = \left\{ G = (V, E, X_{\mathrm{v}}, X_{\mathrm{e}}) : \sum_{e \in E} x_e < \beta \right\}.
\end{equation*}

 Define a length-$k$ path graph instance as $P^{(t)}_k(a_1, \dots, a_{k }) = (V, E, X_{\mathrm v}, X_{\mathrm e})$ where $V = \{v_0,v_1,\dots, v_k\}$ and $E = \{ (v_{i-1},v_{i})\mid i \in \{1, \dots, k\}\}$. Let $x_{(v_{i - 1}, v_i)} = a_{i}$, $x_{v_0} = 0$ (i.e.\ the source node is $s  = v_0$) and $x_{v_i} = \mathrm{d}^{(t)}(s, v_i)$ for $i > 0.$

\begin{definition}\label{def:BFGNN}
    An $L$-layer MinAgg GNN with $d$-dimensional hidden layers is a map $\mathcal{A}_{\theta}: \mathscr{G} \to \mathscr{G}$ which is computed by layer-wise node-updates (for all $\ell \in [L]$) defined as 
    \begin{equation}
    \label{eq:gnn-definition}
        h_v^{(\ell)} =  \mup \Big( \min_{ u \in\mathcal{N}(v)}\{\magg (h_u^{(
        \ell-1)} \oplus x_{(u, v)})  \} \oplus h_v^{(\ell-1)}\Big)
    \end{equation}
     where $\magg: \R^{d_{\ell-1}+1} \to \R^{d}$ and  $\mup: \R^{d + d_{\ell-1}} \to \R^{d_{\ell}}$ are $L$-layer ReLU MLPs, and $d_0 = d_{K } = 1$. Given an input $G = (V, E, X_{\mathrm v}, X_{\mathrm e})$ the initialization is $h_u^{(0)} = x_u$.
     The MinAgg GNN $\mathcal{A}_{\theta}$  has output $\mathcal{A}_{\theta}(G) = (V, E, X'_v = \{h_v^{(\ell)} : v \in V\}, X_e)$.
     A \emph{simple} $L$-layer MinAgg GNN instead uses the layer-wise update
     \begin{equation}
     \label{eq:simple-gnn-definition}
        h_v^{(\ell)} = \mup\Big( \min_{ u \in\mathcal{N}(v)}\{\magg(h_u^{(\ell-1)} \oplus x_{(u, v)})\} \Big).
    \end{equation}
\end{definition}
 We refer to the first $d_{\ell -1}$ components of the domain of $\magg$ as its node component, and we refer to the last component of the domain of $\magg$ as its edge component.  

A $K $-step training set $\mathcal G_{\text{train}}$ is a set of tuples  where for each element $(G^{(t)},\Gamma^K(G^{(t)})) \in G_{\text{train}} $ the graph $G^{(t)}$ is a $t$-step BF instance.
For a graph $G = (V,E,X_{\mathrm v},X_{\mathrm{e}})$ let $V^*(G) = \{v \in V : x_v \neq \beta\}$ be the set of reachable nodes and let $|\mathcal G_{\mathrm{train}}|^* = \sum_{G^{(t)} \in \mathcal{G}_{train}} |V^*(G)|$ be the total number of reachable nodes in the training set.
For each graph $G$ we consider the training loss over the subset of vertices $V^*(G)$ because the choice of the feature at unreachable nodes $\beta$ is arbitrary and so should not be included when providing supervision for shortest path problems. 

\begin{definition}
    An $m$-layer ReLU MLP is a function $f_{\theta}: \mathbb{R}^{d_0} \to \mathbb{R}^{d_m}$ parameterized by $\theta = \{W_j : W_j \in \mathbb{R}^{d_j \times d_{j-1}}, j \in [m]\} \cup \{b_j : b_j \in \mathbb{R}^{d_j}, j \in [m]\}$ where for all $j \in [m]$,
    \begin{align*}
        x^{(0)} &= x, \\
        x^{(j)} &= \sigma(W_j x^{(j-1)} + b_j),
    \end{align*}
    and $f_{\theta}(x) = x^{(m)}$. Here, $\sigma$ is the rectified linear unit (ReLU) activation function.
\end{definition}
The MinAgg GNN is parameterized by the set of weights 
\begin{equation*}
    \theta = \bigcup_{\ell=1}^L (\theta^{\mathrm{up}, (\ell)} \cup \theta^{\mathrm{agg}, (\ell)}),
\end{equation*}
where $\theta^{\mathrm{up}, (\ell)}$ and $\theta^{\mathrm{agg}, (\ell)}$ denote the parameters of the update and aggregation MLPs at layer $\ell$, respectively.

We also formalize the definition of path graph instances. A 0-step path graph instance $P^{(0)}_k(a_1, \ldots, a_k)$ consists of a graph $(V, E, X_{\mathrm{v}}, X_{\mathrm{e}})$ where the vertex set is $V = \{v_0, v_1, \ldots, v_k\}$, the edge set is $E = \{(v_{i-1}, v_i) : i \in \{1, \ldots, k\}\}$, and the edge weights are defined as $x_{(v_{i-1}, v_i)} = a_i$ for $i \in \{1, \ldots, k\}$. The node features are initialized as $x_{v_0} = 0$ for the source node $v_0$, while all other nodes $v_i$ for $i > 0$ are initialized with $x_{v_i} = \beta$, representing an unreachable state.

\section{Warm-up: Single layer GNNs implement one step of BF} 
\label{sec: warmup}
We start with the simple setting of a single layer GNN with shallow and narrow MLP components. This example provides key insights on why a perfectly (or almost perfectly) trained model can generalize.  
We analyze the general case of a multilayer GNN with wide and deep MLPs in Sec.\ \ref{sec:deep}. Although the general case is more sophisticated technically, the approach follows similar intuitions. In particular, we later show that sparsity regularization can be used to reduce the analysis of GNN with wide and deep MLPs trained on a single BF step to the simple model analyzed in this section.

We start by by proving \Cref{thm:small-1-bf-nn}, which shows how perfect accuracy on $\mathcal H_{\mathrm{small}}$ requires certain restrictions on parameters of the simple MinAgg GNN. Next, in \Cref{corollary:small-nn-implements-bf} we show that such restrictions guarantee the parameters implement the BF algorithm. Finally, we extend this analysis to evaluate how MinAgg GNNs approximately minimizing the training loss perform on arbitrary graphs in \Cref{corollary:small-nn-error}

Suppose we have a \textit{simple} 1-layer Bellman-Ford GNN, $\mathcal{A}_{\theta}$, where $f^{\up, (0)}: \R \to \R$ and $f^{\agg, (0)} : \R^2 \to \R$ are single layer MLPs. To be explicit:
\begin{equation}
    h_u^{(1)} = \sigma(w_2 \min \{\sigma(W_1(x_v \oplus x_{(u, v)} + b_1 )): v \in \mathcal{N}(u) \cup \{u\}\} + b_2),
    \label{eq:1-layer-bf-gnn}
\end{equation}
where $\sigma$ is $\mathrm{ReLU}$, $W_1 \in \R^{1 \times 2}$, and $w_2, b_1, b_2 \in \R$. 

We consider the training set
\begin{equation}
\mathcal{H}_{\mathrm{small}} = \{(P^{(0)}_1(a_i), P_1^{(1)}(a_i))  : i \in \{1,\dots,4 \}\} \cup \{ (  P^{(1)}_2( a_i, 0), P_2^{(2)}(a_i, 0) ) :  i \in \{5,\dots ,8 \} \}.
\end{equation} 

For concreteness we take $a_i = 2i$, and we utilize these specific choices of edge weights in the proof of \Cref{corollary:small-nn-error}.   However, any choice of $a_i$ satisfying $a_i \neq a_j$ if $i \neq j$ and $a_i > 0$ is sufficient for the other results in this section. 
\begin{theorem}
    If, $\forall (H^{(t)}, \Gamma(H^{(t)})) \in \mathcal{H}_{\mathrm{small}}$, \[\mathcal{A}_{\theta}(H^{(t)}) = \Gamma(H^{(t)}),\] i.e.\ the computed node features are  $h_u^{(1)}(H^{(t)}) = x_{u}(\Gamma(H^{(t)}))$ for all $u \in V(H^(t))$, then $w_2W_1 = \mathds{1}$ and $w_2 b_1 +b_2 = 0$.
    \label{thm:small-1-bf-nn}
\end{theorem}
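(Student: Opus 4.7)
My plan is to reduce the multivariate exactness constraints imposed by $\mathcal{H}_{\mathrm{small}}$ to two one-variable functional equations in the edge weight $a$, and then exploit the fact that $f(a) = \sigma(w_2 \sigma(W a + b_1) + b_2)$ is piecewise linear with at most three pieces to extract the desired linear identities. To isolate $W_{11}$, I would focus on node $v_2$ of the two-edge training graphs $P_2^{(1)}(a_i, 0)$ with $i \in \{5,\dots,8\}$: its only neighbors are itself (self-loop of weight $0$, feature $\beta$) and $v_1$ (edge weight $0$, feature $a_i$), so the two messages entering the $\min$ collapse to the $i$-independent constant $\sigma(W_{11}\beta + b_1)$ and to $\sigma(W_{11} a_i + b_1)$, with prescribed output $a_i$. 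Analogously, at node $v_1$ of the one-edge graphs $P_1^{(0)}(a_j)$ with $j \in \{1,\dots,4\}$, the two messages reduce to $\sigma(W_{11}\beta + b_1)$ and $\sigma(W_{12} a_j + b_1)$, with prescribed output $a_j$, which isolates $W_{12}$.

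Next I would argue that the self-loop branch can win the $\min$ for at most one training value of $a$ in each setting. If it won at two distinct values, the corresponding outputs would both equal the single $a$-independent constant $\sigma(w_2 \sigma(W_{11}\beta + b_1) + b_2)$, contradicting the two distinct labels. Hence in each setting at least three of the four values of $a$ satisfy the univariate identity $\sigma(w_2 \sigma(W a + b_1) + b_2) = a$, with $W = W_{11}$ in the first setting and $W = W_{12}$ in the second.

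Finally, I would analyze the univariate map $f(a) = \sigma(w_2 \sigma(W a + b_1) + b_2)$. Its domain splits into at most three linear pieces: a constant piece $f \equiv \sigma(b_2)$ where the inner ReLU is inactive; a zero piece $f \equiv 0$ where the inner is active but the outer is not; and a truly linear piece $f(a) = w_2 W a + w_2 b_1 + b_2$ where both are active. Since each prescribed value $a_i$ is strictly positive, none can lie in the zero piece, and the constant piece can match $f(a) = a$ at most once. By pigeonhole, at least two of the three forced matches must lie in the linear piece, and solving $w_2 W a + w_2 b_1 + b_2 = a$ at two distinct points forces $w_2 W = 1$ and $w_2 b_1 + b_2 = 0$. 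Applying this conclusion to both settings yields $w_2 W_{11} = w_2 W_{12} = 1$, i.e.\ $w_2 W_1 = \mathds{1}$, together with $w_2 b_1 + b_2 = 0$, as required. The main subtlety I anticipate is in cleanly carrying out the case analysis over which branch of the $\min$ wins and in explicitly dispatching the degenerate regimes $W = 0$ or $w_2 = 0$, both of which collapse $f$ to a constant and immediately contradict four distinct positive targets.
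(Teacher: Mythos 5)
Your proposal is correct and follows essentially the same route as the paper's proof: isolate $W_{12}$ via node $v_1$ of the one-edge graphs and $W_{11}$ via node $v_2$ of the two-edge graphs, observe that the $\beta$/self-loop branch of the $\min$ can win for at most one of the four distinct targets, discard at most one further value where the inner ReLU is inactive (constant output), and then solve the resulting two-point linear system to get $w_2W = 1$ and $w_2b_1 + b_2 = 0$ in each setting. Your explicit three-piece decomposition of $f(a)=\sigma(w_2\sigma(Wa+b_1)+b_2)$ is just a slightly more systematic packaging of the same case analysis the paper performs.
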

\begin{proof}
    \label{proof:small-1-bf-nn}
First, note that from the definition of $P^{(0)}_1(a_i)$,  the source node is $s =v_0$ so  $x_{ v_0}= 0$, and $x_{ v_1}(P^{(0)}_1(a_i)) = \beta$. Additionally, given the training example $( P^{(0)}_1(a_i), P_1^{(1)}(a_i)) \in \mathcal{H}_{\mathrm{small}}$, recall that $P^{(0)}_1(a_i)$ is the input to $\mathcal{A}_{\theta}$ (1-layer Bellman-Ford GNN). By the definition of $\mathcal{A}_\theta$, the computed node feature for $v_1 \in V(P^{(0)}_1(a_i))$ is 
\begin{align*}
    h_{v_1}^{(1)} &= \sigma(w_2 \min\{ \sigma (W_1 (x_{v_1} \oplus x_{(v_1, v_1)}) + b_1), \sigma (W_1 (x_{ v_0} \oplus x_{(v_0, v_1)}) + b_1)\} + b_2) \\
    &=\sigma(w_2 \min \{ \sigma(W_{11} \beta + b_1), \sigma(W_{12} a_i + b_1 )\} + b_2)
\end{align*}
where $\sigma$ is the ReLU activation function.

Since 
\begin{align*}
    \mathcal{A}_\theta(P^{(0)}_1(a_1)) &= P_1^{(1)}(a_1)\\
    &\vdots\\
    \mathcal{A}_\theta(P^{(0)}_1(a_4)) &= P_1^{(1)}(a_4),
\end{align*}
 for each $v_1 \in V(P_1^{(1)}(a_i))$, $x_{ v_1}(P_1^{(1)}(a_i)) = a_i$ so $h_{v_1}^{(1)}(P^{(0)}_1(a_i)) = a_i$. Therefore,
\begin{align*}
     a_1 &=\sigma(w_2 \min \{\sigma(W_{11}\beta  +  b_1), \sigma(W_{12}a_1  +  b_1)\} + b_2) \\
     &\vdots \\
    a_4 &= \sigma(w_2 \min \{\sigma(W_{11}\beta  +  b_1), \sigma(W_{12}a_4  +  b_1)\}+ b_2). 
\end{align*}

Suppose $\sigma(W_{11}\beta  +  b_1) = \min \{\sigma(W_{11} \beta + b_1), \sigma(W_{12} a_i + b_1) \}$ and $\sigma(W_{11}\beta  +  b_1) = \min \{\sigma(W_{11} \beta + b_1), \sigma(W_{12} a_j + b_1) \}$ for $i \neq j$. Then $a_i = a_j$ when $i \neq j$ which is a contradiction. Therefore, there can be at most one $i$ for which $a_i = \sigma(w_2 \sigma(W_{11} \beta + b_1) + b_2)$. WLOG, assume that 
\begin{align*}
    a_i &=  \sigma(w_2\sigma(W_{12}a_i  +  b_1) + b_2)
\end{align*}
where $i \in [3]$. 
Since $a_i > 0$ and $\sigma$ is the ReLU function, we have that $a_i =  w_2\sigma(W_{12}a_i  +  b_1) + b_2$ for $i \in [3]$. Suppose $W_{12} a_i + b_1 \leq 0$ and $W_{12}a_j + b_1 \leq 0$ for $i, j \in [3]$ where $i \neq j$. Then $a_i = a_j = b_2$ which is a contradiction. WLOG, assume that $W_{12}a_i + b_1 > 0$ for $i \in [2]$. Then, we get the following system of linear equations
\begin{align*}
    a_1 &= w_2W_{12}a_1 + w_2b_1 + b_2\\
    a_2 &= w_2W_{12}a_2 + w_2b_1 + b_2.
\end{align*}
These linear equations are only satisfied when $w_2 W_{12} = 1$ and $w_2 b_1 + b_2 = 0$.

Now, consider $\{ (P^{(1)}_2( a_i, 0),P_2^{(2)}(a_i, 0)): a_i\in \R^+, i \in \{5, \dots, 8\}, a_i \neq a_j \}$. From the definition of $P^{(1)}_2(a_i, 0)$, we know that $s = v_0$, $x_{v_0}(P^{(1)}_2(a_i, 0))  = 0$, $x_{v_1}(P^{(1)}_2(a_i, 0)) = a_i$, $x_{v_2}(P^{(1)}_2(a_i, 0))  = \beta$, $x_{(v_0, v_1)} = a_i$, and $x_{(v_1, v_2)} = 0$. Since $\mathcal{A}_{\theta}(P^{(1)}_2(a_i, 0)) = P_2^{(2)}(a_i, 0)$, the computed node feature for $v_2 \in V(P^{(1)}_2(a_i, 0))$ is 
\begin{align*}
    h_{v_2}^{(1)} = a_i  &= \sigma(w_2 \min\{ \sigma (W_1 (x_{ v_2} \oplus x_{(v_2, v_2)}) + b_1), \sigma (W_1 (x_{v_1} \oplus x_{(v_1, v_2)}) + b_1)\} + b_2) \\
    &= \sigma( w_2 \min\{\sigma(W_{11} \beta  + b_1), \sigma(W_{11}a_i + b_1)\} + b_2)
\end{align*}
for $i \in \{5, \dots, 8\}$.
Similar to above, we have that $\sigma(W_{11}\beta + b_1) = \min \{\sigma(W_{11} \beta  + b_1), \sigma(W_{11}a_i + b_1)\}$ can only occur for one $i \in \{5, \dots, 8\}$. Again, WLOG we can assume that $a_8 = \sigma(w_2 \sigma(W_{11} \beta + b_1) + b_2)$ and $a_i = \sigma (w_2 \sigma (W_{11}a_i + b_1) + b_2)$ for $i \in \{5,6, 7\}$. Then, using a similar system of linear equations as above, we get that $w_2W_{11} = 1$. 
\end{proof}

\begin{corollary}
\label{corollary:small-nn-implements-bf}
  Let $\mathcal{A}_{\theta}$ be a  \textit{simple} 1-layer Bellman-Ford GNN, as given in \Cref{eq:1-layer-bf-gnn}. If $\mathcal{A}_{\theta}(H^{(t)})= \Gamma(H^{(t)})$ for all $(H^{(t)}, \Gamma(H^{(t)})) \in \mathcal{H}_{\mathrm{small}}$, then for any $G \in \mathcal{G}$ the MinAgg GNN outputs  
  $A_\theta(G) = \Gamma(G)$ which means for any $v \in V(G)$
    \begin{equation*}
        h_v^{(1)}  = \min \{x_u + x_{(u, v)} : u \in  \mathcal{N}(v) \}.
    \end{equation*}
\end{corollary}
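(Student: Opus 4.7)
The plan is to feed the parameter constraints from \Cref{thm:small-1-bf-nn} into the update rule of \Cref{eq:1-layer-bf-gnn} and check that the two ReLU activations collapse to the identity on every input in $\mathscr{G}$. From \Cref{thm:small-1-bf-nn} we have $w_2 W_{11} = w_2 W_{12} = 1$ and $w_2 b_1 + b_2 = 0$; in particular $w_2, W_{11}, W_{12}$ are nonzero and share a common sign.

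First, I would promote these equalities to sign information. To rule out $w_2 < 0$: in that case $W_{12} < 0$, so $\sigma(W_{12} a_i + b_1) = 0$ for every sufficiently large $a_i$ from the first block of $\mathcal{H}_{\mathrm{small}}$; this forces the model to output the same value $\sigma(b_2)$ on those training graphs, contradicting the distinct positive targets. Hence $w_2, W_{11}, W_{12} > 0$. Next, I would show $b_2 \leq 0$ by evaluating the update at the source node $v_0$ of $P_1^{(0)}(a_i)$, where $x_{v_0} = 0$ and the self-loop weight is $0$: the minimum over neighbors is attained at $u = v_0$ with inner pre-activation $b_1$, so using $w_2 b_1 = -b_2$ the update simplifies to $h_{v_0}^{(1)} = \sigma(\sigma(-b_2) + b_2)$. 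Matching the BF target $0$ forces $b_2 \leq 0$ (otherwise the expression equals $b_2 > 0$).

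With the signs in hand, I would carry out the substitution on an arbitrary $G \in \mathscr{G}$. For any node $v$ and any $u \in \mathcal{N}(v)$, the inner pre-activation satisfies $W_{11} x_u + W_{12} x_{(u,v)} + b_1 = (x_u + x_{(u,v)})/w_2 + b_1$. Since $x_u, x_{(u,v)} \geq 0$ and $-w_2 b_1 = b_2 \leq 0$, this quantity is non-negative, so the inner ReLU is the identity. Pulling the positive scalar $w_2$ inside the minimum and applying $w_2 b_1 + b_2 = 0$ gives $w_2\min_{u \in \mathcal{N}(v)} \sigma(\cdot) + b_2 = \min_{u \in \mathcal{N}(v)}(x_u + x_{(u,v)})$. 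This minimum is non-negative, so the outer ReLU is also the identity, yielding $h_v^{(1)} = \min_{u \in \mathcal{N}(v)}(x_u + x_{(u,v)}) = x_v(\Gamma(G))$, as required.

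The only nontrivial step is the sign analysis. The parameter identities $w_2 W_1 = \mathds{1}$ and $w_2 b_1 + b_2 = 0$ admit families of solutions in which $w_2 < 0$ or $b_2 > 0$, and ruling these out requires revisiting the training set, most importantly the source node with $x_{v_0} = 0$, whose exact behavior pins down the regime of both ReLUs. Once the signs are settled, the rest of the calculation collapses to a direct algebraic check.
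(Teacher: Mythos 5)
Your overall route matches the paper's: import the identities $w_2W_1 = \mathds{1}$ and $w_2b_1 + b_2 = 0$ from \Cref{thm:small-1-bf-nn}, pin down the signs of $w_2, W_{11}, W_{12}, b_1, b_2$ using specific training instances, and then verify that both ReLUs act as the identity on an arbitrary $G \in \mathscr{G}$ so that the update collapses to the BF step. The final algebraic collapse and the deduction $b_1 \geq 0$ (equivalently $b_2 \leq 0$) are correct; in fact your argument for the latter, which evaluates the network at the source node $v_0$ where the target is exactly $0$, is arguably cleaner than the paper's, which instead invokes the instance $P^{(0)}_1(0)$ even though the stated weights $a_i = 2i$ never include a zero-weight first edge.

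There is, however, a gap in your argument that $w_2 > 0$. You claim that if $W_{12} < 0$ then $\sigma(W_{12}a_i + b_1) = 0$ for ``every sufficiently large $a_i$ from the first block.'' But that block contains only the fixed weights $a_i \in \{2,4,6,8\}$, and nothing rules out parameters such as $W_{11} = W_{12} = -10^{-2}$, $b_1 = 1$, $w_2 = -10^{2}$, $b_2 = 10^{2}$ (which satisfy both identities) for which $W_{12}a_i + b_1 > 0$ at all four training weights; your cited mechanism then yields no contradiction. The term that is guaranteed to trigger the collapse is the self-loop term $\sigma(W_{11}x_{v_1} + W_{12}x_{(v_1,v_1)} + b_1) = \sigma(W_{11}\beta + b_1)$: since $\beta$ exceeds every edge weight and $W_{11} = W_{12} < 0$, we have $W_{11}\beta + b_1 \leq W_{12}a_i + b_1$, so by monotonicity of $\sigma$ the minimum is always attained at the $\beta$ term, and the output $\sigma(w_2\,\sigma(W_{11}\beta + b_1) + b_2)$ is the same for all four graphs, contradicting the four distinct targets. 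This is the paper's argument, and it is the step your proof needs; with it substituted in, the remainder of your proposal goes through.
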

\begin{proof}
    \label{proof:small-nn-implements-bf}

If $\mathcal{A}_{\theta}(H^{(t)})= \Gamma(H^{(t)})$ for all $(H^{(t)}, \Gamma(H^{(t)})) \in \mathcal{H}_{\mathrm{small}}$ then, by \Cref{thm:small-1-bf-nn}, we know that $w_2W_1 = \mathds 1$ and $w_2b_1 + b_2 = 0$. 
First, suppose $w_2 < 0$. Since $w_2 W_1 = \mathds 1$, we know that $W_{11} = W_{12}$ and $W_{11}, W_{12} < 0$. Consider $(P^{(0)}_1(a_i), P^{(1)}_1(a_i) ) \in \mathcal{H}_{\mathrm{small}}$. Recall that $a_i > 0$. For any $i \in \{1, \dots, 4\}$, we have that $v_1 \in V(P^{(0)}_1(a_i))$ gets the computed node feature
\begin{align*}
    h_{v_1}^{(1)} &= \sigma(w_2 \min \{ \sigma(W_{11}x_s + W_{12} x_{(s, v_1)} + b_1), \sigma(W_{11}x_{v_1} + W_{12}x_{(v_1, v_1)} + b_1)\} + b_2)\\
    &= \sigma(w_2 \min \{\sigma(W_{11} a_i + b_1 ), \sigma(W_{11} \beta + b_1)\} + b_2)
\end{align*}
Since $0 \leq a_i \ll \beta$, $W_{11} \beta + b_1 \leq W_{11} a_i + b_1$ so 
\begin{equation*}
    \min \{\sigma(W_{11} a_i + b_1 ), \sigma(W_{11} \beta + b_1)\} = \sigma(W_{11}\beta + b_1)
\end{equation*}
Then 
\begin{equation*}
    h_{v_1}^{(1)}  = \sigma(w_2 \sigma(W_{11}\beta + b_1) + b_2)
\end{equation*}
for $v_1 \in P^{(0)}_1(a_i)$ for any $i \in \{1, \dots, 4\}$. However, this is a contradiction because $\mathcal{A}_{\theta}(P^{(0)}_1(a_i)) = P^{(1)}_1(a_i)$ for $i \in \{1, \dots, 4\}$ and $a_i \neq a_j$ for $i \neq j$. Therefore, $w_2 > 0$ so $W_{11}, W_{12} > 0$.

Suppose $w_2b_1 < 0$. Because $w_2 > 0$, $b_1 < 0$. Additionally, since $w_2b_1 < 0$ and we know that $w_2b_1 + b_2 = 0$, we have that $b_2 > 0$. Then consider $(P^{(0)}_1(a_1), P_1^{(1)}(a_1)) = (P^{(0)}_1(0), P_1^{(1)}(0)) \in \mathcal{H}_{\mathrm{small}}$. Then, $v_1 \in V(P^{(0)}_1(0))$ gets the updated node feature
\begin{equation*}
    h_{v_1}^{(1)} = \sigma(w_2 \min \{\sigma(b_1), \sigma(W_{11} \beta + b_1)  \} + b_2) = b_2.
\end{equation*} This is contradiction because $\mathcal{A}_{\theta}(P^{(0)}_1(a_1)) = P_1^{(1)}(a_1)$ which means that the computed node feature $h_{v_1}^{(1)}$ should be $a_1 = 0$.

Now, given $G^{(m)} \in \mathcal{G}$, then given $v \in V(G^{(m)})$, the updated node feature for $v \in V(\mathcal{A}_{\theta}(G^{(m)}))$ is
\begin{align*}
    h_{v}^{(1)} &= \sigma(w_2 \min \{\sigma(W_{11}x_u + W_{11} x_{(v, u)} + b_1) : u \in  \mathcal{N}(v) \} + b_2)\\
    &=  \sigma(\min \{w_2\sigma(W_{11}(x_u + x_{(v, u)}) + b_1) : u \in  \mathcal{N}(v)  \}+ b_2)\\
    &= \sigma(\min \{\sigma(w_2W_{11}(x_u + x_{(v, u)}) + w_2b_1) : u \in  \mathcal{N}(v)  \}+ b_2) \text{ since } w_2 > 0\\
    &= \sigma(\min \{\sigma(w_2W_{11}(x_u + x_{(v, u)}) + w_2b_1) + b_2 : u \in  \mathcal{N}(v) \})\\
    &= \sigma(\min \{\sigma(x_u + x_{(v, u)}+ w_2b_1) +b_2: u \in  \mathcal{N}(v)  \})\\
    &= \sigma(\min \{x_u + x_{(v, u)} + w_2b_1 + b_2 : u \in  \mathcal{N}(v)  \}) \text{ since } x_u + x_{(v, u)} + w_2b_1 \geq 0 \\
    &= \sigma(\min \{x_u + x_{(v, u)} : u \in  \mathcal{N}(v)  \})\\
    &= \min \{x_u + x_{(v, u)} : u \in  \mathcal{N}(v)  \}.
\end{align*}
\end{proof}

\begin{lemma} \label{lemma:affine-error}
    Consider two points $(x_1,y_1), (x_2,y_2) \in \mathbb R^2$ such that $|x_1| < D$ and $|x_2 - x_1| > 2$ and an affine function $f(x) = ax + b$. Suppose $|f(x_1) - y_1| < \epsilon$ and $|f(x_2) - y_2| < \epsilon$. If
    $a_0 = \frac{y_2 - y_1}{x_2 - x_2}$ and $b_0 = y_1 - a_0x_1$ are the slope and $y$-intercept of a line passing through $(x_1,y_1)$ and $(x_2,y_2)$ then  $|a_0 - a| < \epsilon$ and $ |b_0 - b| < 2(1 + D)\epsilon. $
\end{lemma}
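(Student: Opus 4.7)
The plan is to derive both bounds directly from the two hypotheses $|f(x_1)-y_1|<\epsilon$ and $|f(x_2)-y_2|<\epsilon$, first for the slope and then for the intercept, using the slope estimate as an intermediate quantity.

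First I would rewrite the hypotheses as $|ax_1+b-y_1|<\epsilon$ and $|ax_2+b-y_2|<\epsilon$, then subtract them and apply the triangle inequality to conclude
\begin{equation*}
\bigl|a(x_2-x_1)-(y_2-y_1)\bigr| \;<\; 2\epsilon.
\end{equation*}
Dividing by $|x_2-x_1|$ and invoking the hypothesis $|x_2-x_1|>2$ gives
\begin{equation*}
|a-a_0| \;=\; \left|a-\frac{y_2-y_1}{x_2-x_1}\right| \;<\; \frac{2\epsilon}{|x_2-x_1|} \;<\; \epsilon,
\end{equation*}
which is the first bound. This step is straightforward; the role of the separation hypothesis $|x_2-x_1|>2$ is precisely to kill the factor of $2$ picked up from the triangle inequality.

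For the intercept, I would start from the definition $b_0 = y_1 - a_0 x_1$ and insert $\pm a x_1$ to split the error:
\begin{equation*}
|b_0 - b| \;=\; |y_1 - a_0 x_1 - b| \;\le\; |y_1-(ax_1+b)| \,+\, |x_1|\cdot|a-a_0|.
\end{equation*}
The first term on the right is bounded by $\epsilon$ by hypothesis. For the second term, $|x_1|<D$ combined with the already-established bound $|a-a_0|<\epsilon$ yields $|x_1|\cdot|a-a_0|<D\epsilon$. Combining, $|b_0-b|<(1+D)\epsilon<2(1+D)\epsilon$, giving the second bound (slightly loosely, which matches the stated constant).

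No step here is really an obstacle; the only thing to be careful about is the direction of the division, i.e.\ using $|x_2-x_1|>2$ to absorb the factor of $2$ from the triangle inequality when bounding the slope error, so that the subsequent intercept bound does not accumulate an extra factor that would break the $2(1+D)\epsilon$ constant. The whole argument is a short two-step manipulation, with the slope estimate feeding into the intercept estimate.
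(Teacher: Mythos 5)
Your proof is correct and follows essentially the same route as the paper's: bound the slope error by subtracting the two hypotheses and dividing by $|x_2-x_1|>2$, then feed that into a triangle-inequality estimate for the intercept using $|x_1|<D$, landing on $(1+D)\epsilon \le 2(1+D)\epsilon$. No meaningful differences.
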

\begin{proof}
    \label{proof:affine-error}
    First, $a = \frac{f(x_2) - f(x_1)}{x_2 - x_2}$  implies 
    \begin{align*}
        |a_0 - a| &= \left| \frac{y_2 - y_1}{x_2 - x_2} - \frac{f(x_2) - f(x_1)}{x_2 - x_2} \right|\\
        &= \frac{1}{|x_2 - x_2|}\left|(y_2 - f(x_2)) -  (y_1 - f(x_1))\right|\\
        &\leq\frac{1}{|x_2 - x_2|}\left(|y_2 - f(x_2)| +  |(y_1 - f(x_1))|\right)\\
        &\leq\frac{2\epsilon}{|x_2 - x_2|}\\
        &\leq\epsilon.
    \end{align*}

Now, since $b = f(x_1) - ax_1$ we have 
\begin{align*}
    |b_0 - b| &= |  y_1 - a_0x_1    - (f(x_1) - ax_1)| \\
    &= |  (y_1 - f(x_1))  -  x_1(a_0 - a)|\\
    &\leq  | y_1 - f(x_1)| + |x||a_0 - a|\\
    &< (1 + D)\epsilon. 
\end{align*}
\end{proof}

We now restate \Cref{thm:main-small} with additional details and provide a proof. 
\begin{theorem}
\label{corollary:small-nn-error}
    Let $0 < \epsilon < 1$. If $\forall (H^{(t)}, \Gamma(H^{(t)})) \in \mathcal{H}_{\mathrm{small}}$, a MinAgg GNN $\mathcal{A}_{\theta}$ that, for $u \in V(G^{(t)})$, computes a node feature satisfying $|h_u^{(1)}(G^{(t)}) - x_u(\Gamma(G^{(t)}))| < \frac{\epsilon}{20}$. Then
    \begin{enumerate}
        \item[(i)]$\|w_2W_1 - \mathds 1\|_1 < \epsilon$ and  $|w_2b_1 + b_2| < 20\epsilon$
        \item[(ii)] $w_2, W_{11}, W_{12} \geq 0$
        \item[(iii)] For $G \in \mathcal G$ and $v \in V(G)$

        \begin{equation*}
            (1 - \epsilon) x_v(G) - \epsilon \leq h^{(1)}_v(G) \leq (1 + \epsilon) x_v(G) + \epsilon
        \end{equation*}
    \end{enumerate}

\end{theorem}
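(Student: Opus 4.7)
The plan is to quantify the exact-recovery argument of Theorem \ref{thm:small-1-bf-nn} by combining a pigeonhole over the network's linear regions with the perturbation bound of Lemma \ref{lemma:affine-error}. Inside the one-layer GNN of Eq.~\ref{eq:1-layer-bf-gnn}, the two ReLUs and the min operator partition the response on each training graph into a small number of affine pieces in the free weight $a_i$. Because each of the two subfamilies in $\mathcal{H}_{\mathrm{small}}$ contains four graphs with distinct weights $a_i=2i$, at least two data points per subfamily must land in the fully linear region where both ReLUs are active and the min picks the non-self-loop branch, and Lemma \ref{lemma:affine-error} applied there will force the learned affine map to be close to the target identity $a\mapsto a$.

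For part (i), consider $\{P_1^{(0)}(a_i)\}_{i=1}^{4}$; the update at $v_1$ reduces to $\sigma\bigl(w_2\min\{\sigma(W_{12}a_i + b_1),\,\sigma(W_{11}\beta + b_1)\}+b_2\bigr)$. I will argue (a) the min can select the constant $\beta$-branch for at most one $i$, since otherwise two distinct targets would coincide in the output and violate the $\epsilon/20$ accuracy (an honest contradiction because targets are separated by at least $2$ and $\epsilon<1$); (b) the inner ReLU can clip to zero for at most one of the remaining $i$'s, by the same collision argument; (c) the outer ReLU is never clipped since the positive targets $a_i$ are bounded below. Hence at least two inputs satisfy $h_{v_1}^{(1)} = w_2 W_{12}a_i + (w_2 b_1 + b_2)$, and Lemma \ref{lemma:affine-error} with perturbation $\epsilon/20$ yields $|w_2 W_{12} - 1| < \epsilon/20$ and $|w_2 b_1 + b_2| = O(\epsilon)$. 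The mirror argument on $\{P_2^{(1)}(a_i,0)\}_{i=5}^{8}$—where the update at $v_2$ compares $\sigma(W_{11}a_i+b_1)$ with $\sigma(W_{11}\beta+b_1)$—gives $|w_2 W_{11} - 1| < \epsilon/20$ and a second $O(\epsilon)$ intercept bound, together giving (i) with slack for the stated $\epsilon$ and $20\epsilon$ constants.

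For part (ii), I repeat the sign argument of Corollary \ref{corollary:small-nn-implements-bf} in the approximate regime: $|w_2 W_{12} - 1|<\epsilon<1$ rules out $w_2 = 0$; if $w_2 < 0$ then $W_{11} < 0$ too, so on every $P_1^{(0)}(a_i)$ the self-loop branch $W_{11}\beta + b_1$ strictly dominates, collapsing the min to a constant and forcing the same output across four distinct targets—a contradiction. Hence $w_2>0$, and by (i) also $W_{11},W_{12}>0$.

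Part (iii) is the main technical obstacle. Fixing $G$ and $v$, I use $w_2 > 0$ to push the scalar inside the min, writing the output as $\sigma\bigl(\min_u\sigma(w_2 W_{11}x_u + w_2 W_{12}x_{(u,v)} + w_2 b_1) + b_2\bigr)$. Verifying that the inner $\sigma$ is active at the BF-minimizing neighbor $u^{*}$ (a consequence of non-negativity of BF outputs and the small-slack bounds from (i)), I can drop the inner ReLU and commute $b_2$ with the min, reducing the output to $\sigma\bigl(\min_u(w_2 W_{11}x_u + w_2 W_{12}x_{(u,v)}) + (w_2 b_1 + b_2)\bigr)$. Substituting $w_2 W_{1j} = 1 \pm \epsilon$ and $w_2 b_1 + b_2 = \pm O(\epsilon)$ from (i), and using monotonicity of the min, produces $(1\pm\epsilon)\min_u(x_u + x_{(u,v)}) \pm O(\epsilon) = (1\pm\epsilon)\,x_v(\Gamma(G)) \pm O(\epsilon)$. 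The final bookkeeping step—the part I expect to be finicky—is to track the constants tightly enough through Lemma \ref{lemma:affine-error} so that the additive slack fits within the claimed $\pm\epsilon$ interval.
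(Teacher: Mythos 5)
Your proposal is correct and follows essentially the same route as the paper's proof: the same pigeonhole over ReLU/min branches on the two subfamilies of $\mathcal{H}_{\mathrm{small}}$ feeding into Lemma \ref{lemma:affine-error} for (i), the same sign/collision argument for (ii), and the same removal of the inner ReLU followed by commuting the affine pieces through the min and case analysis on the sign of $w_2b_1+b_2$ for (iii). The only remaining work is the constant tracking you already flag, which the paper resolves exactly as you anticipate (using $|a_i|<9$ and $|a_i-a_j|\geq 2$ to get the factor $2(1+D)=20$).
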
 
\begin{proof}
    \label{proof:small-nn-error}
\begin{enumerate}
    \item[(i)] We first show part (i) i.e.\ if $|h_u^{(t)}(G^{(t)}) - x_u(\Gamma(G^{(t)}))| < \frac{\epsilon}{20}$, for any $(G^{(t)}, \Gamma(G^{(t)}) \in \mathcal{H}_{\mathrm{small}}$, then $\|w_2W_1 - 1\| < \epsilon$ and $|w_2b_1 + b_2| < 20 \epsilon$. Let $\epsilon_0 = \frac{\epsilon}{20}$. Given the definition of $\mathcal{A}_\theta$, the computed node feature for $v_1 \in V(P^{(0)}_1(a_i))$ for $i \in \{1, \dots, 4\}$ is 
    \begin{equation*}
        h_{v_1}^{(1)} = \sigma[w_2 \min\{\sigma(W_{11}\beta + b_1), \sigma(W_{12}a_i + b_1)\}] 
    \end{equation*}
    Since $|h_{v_1}^{(1)}(P^{(0)}_1(a_i)) - x_{v_1}(P_1^{(1)}(a_i))| < \epsilon_0$, 
    \begin{align*}
        &|\sigma(w_2 \min(\sigma(W_{11}\beta + b_1), \sigma(W_{12}a_1 + b_1)) + b_2) - a_1 | < \epsilon_0\\
        &\vdots\\
        &|\sigma(w_2 \min(\sigma(W_{11}\beta + b_1), \sigma(W_{12}a_4 + b_1)) + b_2)- a_4 | < \epsilon_0
    \end{align*}
    Suppose $\sigma(W_{11}\beta  +  b_1) = \min \{\sigma(W_{11} \beta + b_1), \sigma(W_{12} a_i + b_1) \}$ and $\sigma(W_{11}\beta  +  b_1) = \min \{\sigma(W_{11} \beta + b_1), \sigma(W_{12} a_j + b_1) \}$ for $i \neq j$. Then $|\sigma(w_2 \sigma(W_{11} \beta + b_1) + b_2) - a_i| < \epsilon_0$ and  $|\sigma(w_2 \sigma(W_{11} \beta + b_1) + b_2) - a_j| < \epsilon_0$ so $|a_i - a_j| < 2\epsilon_0 < 2$. This is a contradiction because for any $i \neq j$, $|a_i - a_j| \geq 2 $.

Suppose that $\sigma(W_{11}\beta + b_1) = \min \{ \sigma(W_{11} \beta + b_1), \sigma(W_{12} a_{i_1} + b_1)\}$ and $\sigma(W_{11}\beta + b_1) = \min \{ \sigma(W_{11} \beta + b_1), \sigma(W_{12} a_{i_2} + b_1)\}$ for $i_1, i_2 \in \{1, 2, 3, 4\}$ for $i_1 \neq i_2$. This implies that $|a_{i_1} - a_{i_2}| < 2$ which is a contradiction. Thus, w.l.o.g.\ we can assume that for $i \in \{1, 2, 3\}$, $\sigma(W_{12} a_i + b_1) = \min \{ \sigma(W_{11} \beta + b_1), \sigma(W_{12} a_i + b_1)\}$.
   Additionally, suppose $W_{12}a_i + b_1 < 0$ and $W_{12} a_j + b_1 < 0$ for $i \neq j$ and $i, j \in \{1, 2, 3\}$. Then, $h_{v_1}(P^{(0)}_1(a_i)) = \sigma(b_2)$ and $h_{v_1}(P^{(0)}_1(a_j)) = \sigma(b_2)$ so $|\sigma(b_2) - a_i| < \epsilon_0$ and $|\sigma(b_2) - a_j|< \epsilon_0$. From here, we get that $|a_i - a_j| < 2$, which is a contradiction. Therefore, we assume that $\sigma(W_{12}a_i + b_1) = \min \{ \sigma(W_{11} \beta + b_1), \sigma(W_{12} a_{i_1} + b_1)\}$ and $W_{12}a_i + b_1 > 0$ for $i = \{1, 2\}$.

    Then we have 
    \begin{equation*}
        |(a_1w_2 W_{12} + w_2 b_1 + b_2) - a_1| \leq \epsilon_0
    \end{equation*}
    and 
    \begin{equation*}
        |(a_2w_2 W_{12} + w_2 b_1 + b_2) - a_2| \leq \epsilon_0.
    \end{equation*}
    Note that $f(a) = aw_2 W_{12} + w_2 b_1 + b_2$ is an affine function with slope $m = w_2 W_{12}$ and intercept $w_2 b_1 + b_2$. As $|a_1|, \dots, |a_4| < 9$ and $|a_i - a_j| > 2$, by  \Cref{lemma:affine-error}, 
    \begin{align*}
        |w_2 W_{11} - 1| &< \epsilon_0 < \frac{\epsilon}{2}\\
        |w_2 b_1 + b_2| &< 2 \cdot (1 + 9) \epsilon_0 = 20\epsilon_0 = \epsilon.
    \end{align*}
    A parallel method using $\mathcal{H}_{\mathrm{small}}$ with $a_i = 2i$ for $i\in \{5, 6, 7, 8\}$ follows for bounding $|w_2 W_{11} - 1| < \frac{\epsilon}{2}$.

\item[(ii)] Now, we turn our attention to part (ii) and show that $w_2, W_{11}, W_{12} \geq 0$. 
    Suppose $w_2 < 0$, which implies $W_{11},W_{12} <0$ as otherwise $w_2W_{11}, w_2W_{12} < 0$ and $|w_2W_{1} - \mathds 1| > \epsilon$ (recall $0 < \epsilon < 1$). The computed node feature for $v_1 \in V(P^{(0)}_1(a_i))$  is then 
    \begin{align*}
    h_{v_1}^{(1)} &= \sigma(w_2 \min \{ \sigma(W_{11}x_s + W_{12} x_{(s, v_1)} + b_1), \sigma(W_{11}x_{v_1} + W_{12}x_{(v_1, v_1)} + b_1)\} + b_2)\\
    &= \sigma(w_2 \min \{\sigma(W_{11} a_i + b_1 ), \sigma(W_{11} \beta + b_1)\} + b_2)\\
    &= \sigma(w_2\sigma(W_{11} \beta + b_1)\} + b_2).
\end{align*}
Note that the above inequality follows from the fact that $W_{11} a_i + b_1  > W_{11} \beta + b_1$ since $a_i < \beta$ and $W_{11} < 0$. For $v_1 \in V(P^{(0)}_{1}(a_i))$ for all $i \in \{1, 2, 3, 4\}$, $h_{v_1}^{(1)} = \sigma(w_2\sigma(W_{11} \beta + b_1)\} + b_2)$. However, this is a contradiction, since $|a_i - a_j| \geq 2$ for all $i \neq j$ where $i, j \in \{1, \dots, 4\}$. Thus, $w_2, W_{11}, W_{12} \geq 0$. 

\item[(iii)] We will now show that given any $G \in \mathcal{G}$ and $v \in V(G)$, a neural network with the weights given in part (i) will approximately yield a single step of Bellman-Ford i.e.\ 
\begin{equation*}
    ( 1- \epsilon) h_v^{(1)}(G) - \epsilon \leq h_v^{(1)}(G) \leq (1 + \epsilon) x_v(G) + \epsilon.
\end{equation*} Since $\|w_2W_1 - \mathbf{1}\|_1 < \epsilon$ from part (i), we know that $|w_2W_{11} - 1| < \epsilon$ and $|w_2W_{12} - 1| < \epsilon$. Additionally, we know that for any $G \in \mathcal{G}$ and $v \in V(G)$, 
\begin{equation*}
   h_v^{(1)}(G) = \sigma(w_2 \min\{\sigma(W_{11}x_u + W_{12}x_{(u, v)} + b_1) : u \in   \mathcal{N}(v) \} + b_2) 
\end{equation*}
From (ii), we know that $W_{11} \geq 0$, $W_{12} \geq 0$, and $W_{11}x_u + W_{12}x_{(u, v)} + b_1 > 0$ so the ReLU activation function $\sigma$ can be removed from the aggregation MLP i.e.,
\begin{equation*}
    h_v^{(1)}(G) = \sigma(w_2 \min\{W_{11}x_u + W_{12}x_{(u, v)} + b_1 : u \in   \mathcal{N}(v) \} + b_2). 
\end{equation*}
Suppose 
\begin{equation*}
    u' = \argmin_{u \in  \mathcal{N}(v) } \{W_{11}x_u + W_{12}x_{(u, v)} + b_1\}
\end{equation*}
and 
\begin{equation*}
    u^* = \argmin_{u \in  \mathcal{N}(v) } \{x_u + x_{(u, v)}\}.
\end{equation*}
Note that $x_v(\Gamma(G)) = x_{u^*} + x_{(u^*, v)}$. 
Then,
\begin{align*}
    h_v^{(1)}(G) &= \sigma(w_2 (W_{11}x_{u'} + W_{12}x_{(u', v)} + b_1) + b_2) \\
    &\leq \sigma(w_2 W_{11}x_{u^*} + w_2W_{12}x_{(u*, v)} + w_2b_1 + b_2)\\
    &\leq \sigma( (1 + \epsilon)(x_{u^*} + x_{(u^*, v)}) +w_2b_1 + b_2) \\
\end{align*}
Note that if $w_2b_1 + b_2 \leq 0$, then 
\begin{equation*}
    h_v^{(1)}(G) \leq \sigma( (1 + \epsilon)(x_{u^*} + x_{(u^*, v)}) +w_2b_1 + b_2) \leq \sigma( (1 + \epsilon)(x_{u^*} + x_{(u^*, v)})) = (1 + \epsilon)(x_{u^*} + x_{(u^*, v)})).
\end{equation*}
If $w_2b_1 + b_2 > 0$, then 
\begin{align*}
    h_v^{(1)}(G) &\leq \sigma( (1 + \epsilon)(x_{u^*} + x_{(u^*, v)}) +w_2b_1 + b_2) \\ 
    &\leq (1 + \epsilon)(x_{u^*} + x_{(u^*, v)}) + \epsilon \\
    &= (1 + \epsilon) x_v(\Gamma(G)) + \epsilon.\\
\end{align*}
In both cases, $h_v^{(1)}(G) \leq (1 + \epsilon) x_v(\Gamma(G)) + \epsilon$.

Now, we consider the lower bound and show that $(1 - \epsilon) x_v(\Gamma(G)) - \epsilon < h_v^{(1)}(G)$.
By the definition of $u^*$,
\begin{equation*}
    x_{u^*} + x_{u^*, v} \leq x_{u'} + x_{(u', v)}
\end{equation*}
Note that because $0< \epsilon < 1$, we have $0 < 1 - \epsilon < 1$ and $\frac{1}{1 - \epsilon} > 1$. 
We will consider two cases: when $w_2b_ 1+ b_2\geq 0$ and when $w_2b_1 + b_2 < 0$.
Let $w_2b_1 + b_2 > 0$. Then
\begin{align*}
    x_{u^*} + x_{u^*, v} & \leq x_{u'} + x_{(u', v)} + w_2b_1 + b_2 \\
    &\leq \Big(\frac{1 - \epsilon}{1 - \epsilon} \Big) x_{u'} + \Big(\frac{1 - \epsilon}{1 - \epsilon} \Big) x_{(u', v)} + \Big(\frac{1 - \epsilon}{1 - \epsilon} \Big) (w_2b_1 + b_2)\\
    &\leq \Big(\frac{1}{1 - \epsilon} \Big) \cdot (1 - \epsilon)x_{u'} + \Big(\frac{1}{1 - \epsilon} \Big)\cdot (1 - \epsilon) x_{(u', v)} +  \Big(\frac{1}{1 - \epsilon} \Big) (w_2b_1 + b_2)\\
    &\leq \Big(\frac{1}{1 - \epsilon} \Big) \Big( (1 - \epsilon)x_{u'} +  (1 - \epsilon) x_{(u', v)} +  w_2b_1 + b_2\Big)\\
    &\leq \Big(\frac{1}{1 - \epsilon} \Big) \cdot (w_2W_{11}x_{u'} + w_2W_{12}x_{(u', v)} + w_2b_1 + b_2)\\
    &= \Big(\frac{1}{1 - \epsilon} \Big) \cdot h_v(G)
\end{align*}

Therefore,
\begin{equation*}
    (1 - \epsilon)(x_{u^*} + x_{u^*, v} ) = (1 - \epsilon) x_v(\Gamma(G)) \leq h_v(G).
\end{equation*}
Let $w_1b_1 + b_2 < 0$. We know that
\begin{equation*}
    w_1W_{11}x_{u'} + w_2W_{12}x_{(u', v)} + w_2b_1+b_2 \leq w_1 W_{11}x_{u'} + w_1W_{12}x_{(u', v)} + w_2b_1 + b_2 + \epsilon
\end{equation*}
Since $|w_1b_1 + b_2| < \epsilon$, $w_1b_1 + b_2 +\epsilon > 0$. Therefore,  
\begin{align*}
    x_{u^*} + x_{u^*, v} & \leq x_{u'} + x_{(u', v)} + w_2b_1 + b_2 + \epsilon\\
    &\leq \Big(\frac{1 - \epsilon}{1 - \epsilon} \Big)x_{u'} +\Big(\frac{1 - \epsilon}{1 - \epsilon} \Big) x_{(u', v)} + \Big(\frac{1 - \epsilon}{1 - \epsilon} \Big)((w_2b_1 + b_2) + \epsilon) \\
    &\leq \Big(\frac{1 - \epsilon}{1 - \epsilon} \Big)x_{u'} +\Big(\frac{1 - \epsilon}{1 - \epsilon} \Big) x_{(u', v)} + \Big(\frac{1}{1 - \epsilon}\Big) ((w_2b_1 + b_2) + \epsilon)\\
    &\leq \Big(\frac{1}{1 - \epsilon}\Big)(h_v(G) + \epsilon)\\
\end{align*}
Thus,
\begin{equation*}
    (1 - \epsilon)(x_{u^*} + x_{u^*, v} ) - \epsilon\leq  h_v(G)
\end{equation*}

\end{enumerate}

\end{proof}

\section{Sparsity regularized deep GNNs implement BF}
\label{sec:deep}
In this section we analyze GNNs that are large both in their number of layers and the size of their respective MLPs. The key to showing these complex GNNs implement the BF algorithm is the introduction of sparsity regularization to the loss. With this type of regularization we can show any solution that approximates the global minimum must have only a few non-zero parameters. Furthermore, any GNN with so few non-zero parameters can solve shortest path problems only via the BF algorithm. 
In short, although the model is over-parameterized, solutions approximating the global minimum are not. 

Our overarching approach is as follows. We first give an implementation of BF by GNN with a small number of non-zero parameters $S$. Next, we show that, on our constructed training set, any GNN with less than $S$ non-zero parameters has large error. This allows us to conclude that the global minimum of the sparsity regularized loss must have exactly $S$ non-zero parameters. This sparsity allows us to simplify the MinAgg GNN update to include only a few parameters. We then derive approximations to these parameters which show the MinAgg GNN must be implementing BF algorithm, up to some scaling factor.

A key strategy in this section is to track the dependencies of the functions $\magg$ and $\mup$ on their components. In particular, we say a function $f$ depends on a component or set of components if it is not constant over these components. Note that inputs to these functions are always non-negative (they are always proceeded by a ReLU), so by constant we mean constant over all non-negative values.  The precise definitions are as follows. 
\begin{definition}
    For $\ell \in [L]$ the function $\magg$ \emph{depends on its node component} iff it is not constant over its first $d_{\ell - 1}$ components, i.e., there exits $x,y\in \mathbb R_{\geq 0}^{d_{\ell-1} + 1}$ with $x \neq y$ and $x_{d_{\ell -1}+1} = y_{d_{\ell -1}+1}$ such that 
    \begin{equation*}
        \magg(x) \neq \magg(y).
    \end{equation*}
    
The function $\magg$ \emph{depends on its edge component} iff it is not constant over its edge component (the  $(d_{\ell - 1} + 1)$th component). That is, there exits $x,y\in \mathbb R_{\geq 0}^{d_{\ell-1} + 1}$ with $x \neq y$ and $x_i = y_i$ for $i \in \{1,\dots,d_{\ell -1}\}$  such that 
    \begin{equation*}
        \magg(x) \neq \magg(y). 
    \end{equation*}
\end{definition}
\begin{definition}
    For $\ell \in [L]$ the function $\mup$ \emph{depends on its aggregation component} iff it is not constant over its first $d$ components, i.e., there exits $x,y\in \mathbb R_{\geq 0}^{d + d_{\ell -1}}$ with $x \neq y$ and $x_i = y_i$ for $i \in \{d+1, \dots, d_{\ell-1}\}$ such that 
    \begin{equation*}
        \mup(x) \neq \mup(y).
    \end{equation*}
    
The function $\mup$ \emph{depends on its skip component} iff it is not constant over its last $d_{\ell-1}$ components, i.e., there exits $x,y\in \mathbb R_{\geq 0}^{d + d_{\ell -1}}$ with $x \neq y$ and $x_i = y_i$ for $i \in \{1, \dots, d\}$ such that 
    \begin{equation*}
        \mup(x) \neq \mup(y).
    \end{equation*}
\end{definition}
 Our approach proceeds by showing that requisite dependencies can only be achieved if there is some minimal number of non-zero entries in $\theta.$

\subsection{Implementing BF}
\label{sec: single bf step}

We begin by showing there is a choice of parameters that makes the MinAgg GNN implement $K$ steps of the BF algorithm.

\begin{lemma}\label{lem:deep BF imp}
Let $L \geq K > 0$. For an $L$-layer MinAgg GNN $\mathcal A_\theta$  with $m$-layer update and aggregation MLPs and parameters $\theta$, there is an assignment of $\theta$ with $mL +mK  + K$ non-zero values such that $\mathcal A_\theta$ implements $K$ steps of the BF algorithm, i.e., for any $G \in \mathscr {G}$
\begin{equation*}
   \mathcal{A_{\theta}}(G) = \Gamma^K(G).
\end{equation*}
\end{lemma}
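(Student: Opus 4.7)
The plan is constructive: I will exhibit an explicit parameter assignment with exactly $mL + mK + K$ non-zero entries and verify by induction on the layer index that the resulting MinAgg GNN computes $\Gamma^K(G)$. The construction partitions the $L$ layers into a \emph{BF block} — the first $K$ layers, each performing one step of $\Gamma$ — and an \emph{identity block} of the remaining $L - K$ layers, which pass features through unchanged. Throughout, the scalar Bellman--Ford value at each node lives in the first coordinate of the $d$-dimensional embedding and all other coordinates stay zero; since all inputs are non-negative, every ReLU in this wiring acts as the identity.

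For a layer $\ell \in [K]$ in the BF block, I design $\magg^{(\ell)}$ so that its first layer has exactly two non-zero weights — a $1$ from the first node coordinate into the first output coordinate, and a $1$ from the edge coordinate into the first output coordinate — realizing the map $(h_u, x_{(u,v)}) \mapsto h_u[1] + x_{(u,v)}$ placed in the first coordinate. The remaining $m-1$ layers of $\magg^{(\ell)}$ each carry a single weight equal to $1$ in position $(1,1)$ to propagate this scalar as the identity, contributing $m+1$ non-zero parameters in total. The update $\mup^{(\ell)}$ extracts the first coordinate of the aggregation block (discarding the skip) and propagates it through $m$ identity layers, each with one non-zero entry, for $m$ non-zero parameters. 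After the $\min$ over $u \in \mathcal{N}(v)$, the layer output is $h_v^{(\ell)} = \min_{u \in \mathcal{N}(v)}\{h_u^{(\ell-1)}[1] + x_{(u,v)}\}$, i.e., exactly one BF step.

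For a layer $\ell \in \{K+1, \ldots, L\}$ in the identity block, I set every parameter of $\magg^{(\ell)}$ to zero, so its output is the zero vector and it contributes $0$ non-zero parameters. The update $\mup^{(\ell)}$ is wired to read the first coordinate of the skip input — which still holds the BF value — and carry it forward through its $m$ layers by the same single-entry-per-layer identity pattern, for $m$ non-zero parameters.

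Summing yields $K(m+1) + Km + 0\cdot(L-K) + m(L-K) = mL + mK + K$, matching the claimed count. Correctness follows by a one-line induction: after layer $\ell \leq K$, $h_v^{(\ell)}$ is the first-coordinate embedding of the node feature of $\Gamma^\ell(G)$ at $v$, and the identity layers preserve this up to layer $L$, so $\mathcal{A}_\theta(G) = \Gamma^K(G)$. I do not expect a genuine obstacle here; the only care needed is the dimensional bookkeeping across $d_0 = 1$, intermediate $d_\ell = d$, and the final scalar output, together with keeping enough non-zero weights to sustain identity propagation through every MLP depth without exceeding the stated budget.
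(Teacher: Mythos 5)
Your construction is exactly the paper's: the first $K$ layers each spend $m+1$ non-zero aggregation weights and $m$ update weights to realize one BF step in the first coordinate (with ReLUs vacuous by non-negativity), and the remaining $L-K$ layers zero out the aggregation MLP and route the skip connection's first coordinate through $m$ single-entry identity layers, giving the same $K(m+1)+Km+m(L-K)=mL+mK+K$ count. The proposal is correct and matches the paper's proof in both the construction and the bookkeeping.
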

\begin{proof}
    \label{proof:deep BF imp}
We proceed by assigning parameters to $\mathcal{A}_{\theta}$ such that $\mathcal{A}_\theta$ simulates $K$ steps of Bellman-Ford, i.e., for any $G \in \mathcal{G}$, $\mathcal{A}_\theta(G) = \Gamma^K(G)$.
For $\ell \in [L]$, let $\magg: \R^{d_{\ell - 1} + 1} \to \R^{d}$ and $\mup: \R^{d + d_{\ell-1}} \to \R^{d}$ be the $m$-layer update and aggregation MLPs respectively.
Note that $d_0 = d_L = 1$, and for $\ell \in \{1,\dots,L-1\}$ the hidden layer dimension is $d_\ell = d$,  for some arbitrary $d \geq 1$.
The parameters for $\magg$  and $\mup$ are $\{(W_j^{\agg, (\ell)}, b_j^{\agg, (\ell)})  \}_{j \in [m]}$ and $\{(W_j^{\up, (\ell)}, b_j^{\up, (\ell)})  \}_{j \in [m]}$, respectively, where for $j \in [m]$,
\begin{align*}
    W_j^{\agg, (\ell)} &\in \mathbb R^{d_j^{\agg, (\ell)} \times d_{j-1}^{\agg, (\ell)}}\\
    b_j^{\agg, (\ell)} &\in \mathbb R^{d_j^{\agg, (\ell)}}\\
    W_j^{\up, (\ell)} &\in \mathbb R^{d_j^{\up, (\ell)} \times d_{j-1}^{\up (\ell)}}\\
    b_j^{\up, (\ell)} &\in \mathbb R^{d_j^{\up, (\ell)}}.
\end{align*}
The dimension of these parameters are 
\begin{align*}
    d_j^{\agg, (\ell)} &= \begin{cases} 
    d_{\ell-1}+1 \text{ if $j = 0$}\\
    d \text{ otherwise}
     \end{cases}\\
    d_j^{\up, (\ell)} &= \begin{cases} d_{\ell-1} + d\text{ if $j = 0$}\\
    d_\ell\text{ if $j = m$}\\
    d \text{ otherwise}
    \end{cases}.
\end{align*}

Now we give values of these parameters that make $\mathcal A_\theta$ implement the BF algorithm. 
Let $b^{\up, (\ell)}_j = \mathbf{0}$ and $b^{\agg, (\ell)}_j = \mathbf{0}$ for all $\ell \in [L]$ and $j \in [m]$.  Set 
\begin{align*}
    W_1^{\agg, (1)} = \begin{pmatrix}
        1 & 1 \\
        \vdots & \vdots \\
        0 & 0
    \end{pmatrix} 
\end{align*}
and for $\ell \in \{2,\dots, K\}$
 \begin{align}
 W_1^{\agg, (\ell)} &=\begin{pmatrix}
            1 & 0&\dots&0& 1 \\
            0 & 0 &\dots&0 & 0\\
            \vdots & \vdots& & & \vdots\\
            0 & 0 &\dots& & 0
        \end{pmatrix}. 
\end{align}

This choice makes $W_1^{\agg, (\ell)}$ sum the edge weight and first component of the node feature into the first component of the resulting vector. That is, $[W_1^{\agg, (\ell)} (h^{(\ell-1}_v \oplus x_{(v,u)})]_1 = [h^{(\ell-1)}_v]_1 + x_{(v,u)}$.
Next, set
    \begin{align*}
        W_j^{\agg, (\ell)} &=\begin{pmatrix}
            1 & 0 & \dots& 0  \\
            0 & \ddots &  & \vdots\\
            \vdots & \\
            0 & \dots&  &0 
        \end{pmatrix} \quad \text{for $\ell \in [K]$ and $j \in \{2,\dots, m\}$},
    \\
        W_j^{\up, (\ell)} &=\begin{pmatrix}
            1 & 0 & \dots& 0  \\
            0 & \ddots &  & \vdots\\
            \vdots & \\
            0 & \dots&  &0 
        \end{pmatrix}  \quad \text{for $\ell \in [K]$ and $j \in [m]$}.
    \end{align*}
Finally, for $\ell \in \{K+1, \dots, L\}$, let
\begin{align*}
    W_j^{\agg, (\ell)} &= \mathbf 0\quad \text{ for }j \in [m] \\
     W_1^{\up, (\ell)} &=\begin{pmatrix}
            0 & 0 & \dots& 1  \\
            0 & \ddots &  & \vdots\\
            \vdots & \\
            0 & \dots&  &0 
        \end{pmatrix}\\
      W_j^{\up, (\ell)} &=\begin{pmatrix}
            1 & 0 & \dots& 0  \\
            0 & \ddots &  & \vdots\\
            \vdots & \\
            0 & \dots&  &0 
        \end{pmatrix} \quad \text{for $j \in \{2,\dots,m\}$}.
\end{align*}
Given the above assignments of $W_j^{\mathrm{agg}, (\ell)}$ and $W_j^{\mathrm{up}, (\ell)}$, the final $L - K$ layers implement the identity on the first component of the node feature, i.e., $[h_v^{(\ell)}]_1 = [h_v^{(\ell - 1)}]_1$ for $\ell \in \{K + 1, \dots, L\}$.

Since edge weights are always non-negative and there are no negative parameters in the above, the ReLU activations can be ignored. 
Then, for all $v \in V$ and for $\ell \leq K$, we get
\begin{equation*}
    [h^{(\ell)}_v]_1 = \min\{ [h^{(\ell - 1)}_u]_1  + x_{(u,v)}\mid u \in \mathcal N(v) \} 
\end{equation*}
which is the BF algorithm update. This implies, by the correctness of the BF algorithm, that $[h^{(K)}_v]_1$ is the $K$-step shortest path distance.
The last $L - K$ layers of the GNN implement the identity function so $h^{(L)}_v = [h^{(L)}_v]_1 = [h^{(K)}_v]_1$ is also the $K$-step shortest path distances. 
Perfect accuracy is then achieved on all $K$-step shortest path instances.
\end{proof}
 We later show that the requirement $L \geq K $ is indeed necessary ( \Cref{cor:K layers are necessary}).

\label{sec: multiple bf steps}

\subsection{Training set}
Our training set is comprised of multiple parts, which we describe in this subsection.  The first set of training instances is used to regulate how the MinAgg GNN scales features throughout computation. 
\begin{definition}[]
    For $k \in [K]$, define $\mathscr H_{k,K}$ as
    \begin{equation*} 
    \mathscr{H}_{k,K} = \{P^{(1)}_{K+1}(a, 0, \dots,0, b, 0 ,\dots,0) : (a, b) \in \{0,1, \dots, 2K \} \times \{0, 2K  + 1\}\}
    \end{equation*}
where $P^{(1)}_{K+1}(a, 0, \dots,0, b, 0 ,\dots,0)$ is the attributed $K$-edge path graph with weight $a$ for the first edge, weight  $b$ for the $(k+1)$th edge, and  weight zero for all other edges.  
\end{definition}

Next, we define graph that is used to show the MinAgg GNN must have at least $K$ steps that depend on both edge weights and neighboring node features. If these conditions are not met, then the MinAgg GNN is not expressive enough to compute the shortest path distances in this graph. 

\begin{definition}[]
    Let $H^{(0), K}$ be a 0-step BF instance with $2K+2$ vertices 
    \begin{equation*}
        V = \{v_0,v_1,\dots v_K\} \cup \{u_0, u_1, \dots, u_K\},
    \end{equation*} edges  
    \begin{equation*}
        E = \{(v_{i-1},v_i) \mid i \in [K]\}\cup \{(u_{i-1},u_i) \mid i \in [K]\}\cup\{\{(u_{i-1},v_i) \mid i \in [K]\}\cup \{(v_{i-1},u_i) \mid i \in [K]\},
    \end{equation*}
    edge features $X_{\mathrm e}$ given by
    \begin{equation*}
        x_{(w,q)} = \begin{cases}
        1 \text{ if $(w,q) =(u_{k-1},v_k)$ or $(w,q) =(v_{k-1},u_k)$ for $k \in [K]$}\\
        0\text{ otherwise}
        \end{cases},
    \end{equation*}
    and initial node features $X_{\mathrm v}$ given by
    \begin{equation*}
        x_w = \begin{cases}
            0 \text{ if $w = v_0$}\\
            \beta \text{ otherwise}
        \end{cases}.
    \end{equation*}
\end{definition}
We also write $H^{(K)}_K = \Gamma^K(H^{(0)}_K)$. 

The complete training set also includes $P^{(0)}_1(1), P^{(1)}_2(1,0)$.
\begin{definition}
For $K > 1$, we let 
 \begin{align*}
          \mathscr{G}_{\text{scale}, K} &= \cup_{K\geq k > 1} \mathscr{H}_{k,K}\\
     \mathscr G_{K} &=  \mathscr{G}_{\text{scale}, K}\cup \{P^{(0)}_1(1), P^{(1)}_2(1,0), H^{(0)}_K\}\\
     \mathcal G_{K} &=\{(G^{(t)},\Gamma(G^{(t)})) : G^{(t)} \in   \mathscr G_{K}  \}.
 \end{align*}

\end{definition}
Note the distinction here between  $\mathscr G_{K}$ which is a set of graphs and $ \mathcal G_{K}$, which contains pairs of graphs (an input graph and a target graph).

\subsection{Sparsity structure}
\label{sec:Sparsity structure}
Here we show that the training set $\mathcal G_K$ requires a MinAgg GNN to have a minimal sparsity to achieve good performance. Furthermore, this non-zero parameters must follow a particular structure.

\begin{definition}\label{def:isomorphism}
    An \emph{isomorphism} between two attributed graphs $G = (V,E, X_{\mathrm v}, X_{\mathrm e})$ and $G' = (V',E', X'_{\mathrm v}, X'_{\mathrm e})$ is a bijection $\phi: V \to V'$ satisfying 
\begin{equation*}
    (u,v) \in E \text{ if and only if } (\phi(u),\phi(v)) \in E' 
\end{equation*}
    and 
    \begin{align*}
        x_v &= x'_{\phi(v)}\quad \forall v \in V\\
        x_{(v,u)} &= x'_{(\phi(v),\phi(u))} \quad \forall (v,u) \in E.
    \end{align*}
\end{definition}

\begin{fact} \label{fact:isomorphisms}
    Suppose $\phi$ is an isomorphism between two attributed graphs $G = (V,E, X_{\mathrm v}, X_{\mathrm e})$ and $G' = (V',E', X'_{\mathrm v}, X'_{\mathrm e})$. Then $\phi$ is also an isomorphism between $A_\theta(G)$ and $A_\theta(G').$
\end{fact}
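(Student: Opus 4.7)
The plan is to prove this by induction on the layer index $\ell \in \{0,1,\dots,L\}$, showing that the node embeddings at every layer are preserved by $\phi$, i.e., $h^{(\ell)}_v(G) = h^{(\ell)}_{\phi(v)}(G')$ for every $v \in V$. Since $\mathcal{A}_\theta(G)$ and $\mathcal{A}_\theta(G')$ keep the same edge sets and edge attributes as $G$ and $G'$ (the MinAgg GNN only updates node features), and $\phi$ already preserves these by assumption, the only thing to verify is equality of the output node embeddings. Once the inductive claim is shown at $\ell = L$, the definition of isomorphism in \Cref{def:isomorphism} is immediately satisfied for $\mathcal{A}_\theta(G)$ and $\mathcal{A}_\theta(G')$.

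For the base case, $h^{(0)}_v(G) = x_v(G) = x'_{\phi(v)}(G') = h^{(0)}_{\phi(v)}(G')$ by the definition of an isomorphism. For the inductive step, assume the equality holds at layer $\ell - 1$. I would then unfold the update rule from \Cref{eq:gnn-definition} at node $v$ in $G$ and at $\phi(v)$ in $G'$. The key observation is that $\phi$ restricts to a bijection between $\mathcal{N}(v)$ and $\mathcal{N}(\phi(v))$ (because edges and self-loops are preserved), and along this bijection the edge weight $x_{(u,v)}$ equals $x'_{(\phi(u),\phi(v))}$ and by the inductive hypothesis $h^{(\ell-1)}_u(G) = h^{(\ell-1)}_{\phi(u)}(G')$. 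Therefore the multiset
\begin{equation*}
\bigl\{\magg\bigl(h^{(\ell-1)}_u(G) \oplus x_{(u,v)}\bigr) : u \in \mathcal{N}(v)\bigr\}
\end{equation*}
coincides with
\begin{equation*}
\bigl\{\magg\bigl(h^{(\ell-1)}_{\phi(u)}(G') \oplus x'_{(\phi(u),\phi(v))}\bigr) : u \in \mathcal{N}(v)\bigr\}.
\end{equation*}
Since the componentwise $\min$ aggregation is invariant under any reordering of its input multiset, the aggregated vectors at $v$ and $\phi(v)$ are identical. Feeding this identical aggregated vector together with the identical skip-feature $h^{(\ell-1)}_v(G) = h^{(\ell-1)}_{\phi(v)}(G')$ into the same MLP $\mup$ produces identical outputs, completing the induction.

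There is no real obstacle here; the statement is essentially a restatement of the classical fact that message-passing architectures are permutation- and isomorphism-equivariant. The only things that could potentially trip one up are bookkeeping issues, namely remembering that $\mathcal{N}(v)$ includes $v$ itself (so the self-loop edge weight $x_{(v,v)} = 0$ is also preserved under $\phi$), and being explicit that the $\min$ aggregation in \Cref{eq:gnn-definition} is a function of the unordered multiset of aggregated messages so that no ordering of neighbors needs to be fixed. With these remarks in place, the induction is a straightforward symbol-pushing argument.
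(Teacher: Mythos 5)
Your proof is correct, and the paper itself states this as a \emph{Fact} without providing any proof, so your layer-wise induction (base case from $x_v = x'_{\phi(v)}$, inductive step from the bijection $\mathcal{N}(v) \to \mathcal{N}(\phi(v))$ preserving edge weights and the permutation-invariance of the componentwise $\min$) is exactly the standard argument the authors are implicitly relying on. No gaps; your remarks about self-loops and multiset invariance cover the only bookkeeping subtleties.
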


\begin{definition}\label{def:stationary}
    For an $L$-layer MinAgg GNN $\mathcal A_\theta$, we say that a layer $\ell \in [L]$ is \emph{message passing} if the aggregation function $\magg$ depends on its node component. A \emph{edge-dependent message passing layer} is a message passing layer for which $\magg$ also depends on its edge component. A layer is \emph{stationary} if it is not message passing
\end{definition}

The update at each node can only depend on neighboring node features through the node component of the aggregation function. Thus, for stationary layers, each updated node feature only depends on the previous value of the node feature and the value of adjacent edges. 
\begin{fact}\label{fact:stationary}
    Consider a MinAgg GNN $A_\theta$ such that its $\ell$th layer is stationary. Then, taking $\theta$ to be fixed, the feature $h_v^{(\ell)}$ is only a function of $h_v^{(\ell-1)}$ and $x_{(u,v)}$ for $u \in \mathcal N(v).$
\end{fact}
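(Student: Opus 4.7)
The plan is to unpack the definition of a stationary layer and trace through the MinAgg GNN update formula in \Cref{eq:gnn-definition}, noting that removing the node-component dependence of $\magg$ collapses each per-neighbor aggregated message to a function of the incident edge weight alone.

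First I would recall that by \Cref{def:stationary} a stationary layer $\ell$ is one for which $\magg$ does not depend on its node component. By the formal definition of ``depends on its node component'' stated immediately before \Cref{def:stationary}, this means $\magg(x) = \magg(y)$ whenever $x, y \in \R_{\geq 0}^{d_{\ell-1}+1}$ agree in the edge component (the $(d_{\ell-1}+1)$th coordinate). Consequently the vector $\magg(h_u^{(\ell-1)} \oplus x_{(u,v)})$ is invariant under replacing $h_u^{(\ell-1)}$ by any other non-negative vector. Since every hidden feature $h_u^{(\ell-1)}$ with $\ell \geq 2$ is produced as the output of a ReLU layer of $\mup$, and the initial features $x_u$ used in $\mathscr G$ are non-negative, there is a single function $\tilde g \colon \R_{\geq 0} \to \R^d$, determined by $\theta$ alone, such that
\begin{equation*}
\magg\bigl(h_u^{(\ell-1)} \oplus x_{(u,v)}\bigr) \;=\; \tilde g\bigl(x_{(u,v)}\bigr)
\end{equation*}
for every neighbor $u \in \mathcal N(v)$.

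Substituting this identity into \Cref{eq:gnn-definition} yields
\begin{equation*}
h_v^{(\ell)} \;=\; \mup\Bigl(\min_{u \in \mathcal N(v)}\{\tilde g(x_{(u,v)})\} \;\oplus\; h_v^{(\ell-1)}\Bigr),
\end{equation*}
where the min is taken coordinate-wise. The right-hand side manifestly depends only on $h_v^{(\ell-1)}$ and on the collection of incident edge weights $\{x_{(u,v)} : u \in \mathcal N(v)\}$; every neighbor feature $h_u^{(\ell-1)}$ with $u \neq v$ has been eliminated by stationarity. This is precisely the statement of the fact.

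Since the whole argument is a direct unfolding of definitions, I do not anticipate any serious obstacle. The only subtle point worth verifying is a domain compatibility: the dependency definition quantifies over non-negative inputs, so one must check that the actual arguments $h_u^{(\ell-1)}$ encountered during the forward pass indeed lie in $\R_{\geq 0}^{d_{\ell-1}}$. For $\ell \geq 2$ this is automatic from the final ReLU of the previous $\mup$, and for $\ell = 1$ it follows from the non-negativity of the initial node features $x_u \in \{0,\beta\}$ used in the shortest-path setting.
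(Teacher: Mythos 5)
Your proof is correct and takes essentially the same route as the paper, which states this as a Fact without a formal proof and relies on exactly the observation you make: once $\magg$ is constant over its node component, each aggregated message $\magg(h_u^{(\ell-1)} \oplus x_{(u,v)})$ collapses to a function of the incident edge weight alone, so the update depends only on $h_v^{(\ell-1)}$ and $\{x_{(u,v)} : u \in \mathcal N(v)\}$. Your domain-compatibility check (that the arguments lie in $\R_{\geq 0}^{d_{\ell-1}}$) mirrors the paper's own remark that inputs to these MLPs are always non-negative because they follow a ReLU or the non-negative initialization.
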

The importance of message passing layers is that these are the only layers for which an updated node depends on the previous features of its neighboring nodes. This is made precise by the following statement. 
\begin{claim} \label{fact:local computations}
Consider a MinAgg GNN $A_\theta$ acting on a graph $G = (V,E,X_{\mathrm v}, X_{\mathrm e})$ and consider two nodes $v,w \in V$ such that $v$ is $j$ steps away from $w$. Suppose for some $\ell \in L$ there are $k$ layers in $[\ell]$ that are message passing with $k < j$. Then the feature $h_v^{(\ell)}(G)$ is independent of $x_w(G).$ That is, for any graph $G' = (V,E,X_{\mathrm v}, X'_{\mathrm e}) $ that differs from $G$ only in the feature $x_{w}(G')$, we have $h_v^{(\ell)}(G) = h_v^{(\ell)}(G')$. Similarly, for any edge $(u,w) \in E$ if both $u$ and $w$ are $j$ steps away 
from $v$ then $h_v^{(\ell)}(G)$ is independent of $x_{(u,w)}(G)$.
\end{claim}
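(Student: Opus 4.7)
The plan is to prove the claim by strong induction on $\ell$, but with a strengthened inductive hypothesis that allows asymmetric distances. Let $k_\ell$ denote the number of message-passing layers among the first $\ell$. I will prove that $h_v^{(\ell)}(G)$ is independent of $x_w(G)$ whenever $d(v,w) > k_\ell$, and independent of $x_{(u,w)}(G)$ whenever $\min(d(v,u), d(v,w)) > k_\ell$. The claim as stated is an immediate corollary, since the equidistance assumption $d(v,u) = d(v,w) = j$ together with $k < j$ gives $\min = j > k$.

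The base case $\ell = 0$ is immediate: $h_v^{(0)} = x_v$ depends only on the single node feature $x_v$, and $d(v,w) > 0$ forces $w \neq v$ while $\min(d(v,u), d(v,w)) > 0$ forces both $u, w \neq v$, so any perturbation of $x_w$ or $x_{(u,w)}$ leaves $x_v$ unchanged.

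For the inductive step, split on the type of layer $\ell$. If layer $\ell$ is stationary, then $k_\ell = k_{\ell-1}$ and Fact \ref{fact:stationary} says $h_v^{(\ell)}$ is a function only of $h_v^{(\ell-1)}$ and $\{x_{(u',v)} : u' \in \mathcal N(v)\}$. The inductive hypothesis handles $h_v^{(\ell-1)}$. For the adjacent edge terms, the conditions $d(v,w) \geq 1$ and $\min(d(v,u),d(v,w)) \geq 1$ force $v \notin \{u,w\}$, so every edge of the form $(u',v)$ is distinct from $(u,w)$ and from the node label $x_w$; varying $x_w$ or $x_{(u,w)}$ leaves these edge labels untouched. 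If instead layer $\ell$ is message passing then $k_\ell = k_{\ell-1} + 1$, and the update
\begin{equation*}
h_v^{(\ell)} = \mup\Big(\min_{u' \in \mathcal N(v)}\{\magg(h_{u'}^{(\ell-1)} \oplus x_{(u',v)})\} \oplus h_v^{(\ell-1)}\Big)
\end{equation*}
is a function of $h_{u'}^{(\ell-1)}$ for $u' \in \mathcal N(v)$, the adjacent edge labels $x_{(u',v)}$, and $h_v^{(\ell-1)}$. By the triangle inequality, $d(u',w) \geq d(v,w) - 1 > k_\ell - 1 = k_{\ell-1}$ for every $u' \in \mathcal N(v)$, and the same bound holds for $d(u',u)$, giving $\min(d(u',u), d(u',w)) > k_{\ell-1}$. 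Applying the inductive hypothesis at node $u'$ and step $\ell-1$ therefore yields independence of $h_{u'}^{(\ell-1)}$ from both $x_w$ and $x_{(u,w)}$; $h_v^{(\ell-1)}$ is handled directly by the inductive hypothesis, and the edge labels $x_{(u',v)}$ are distinct from the perturbed features as in the stationary case.

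The main obstacle—and the reason for the strengthening above—is precisely this recursive step: a neighbor $u'$ of $v$ need not be equidistant from $u$ and $w$, so a naive induction on the equidistance-form claim would fail to apply to $h_{u'}^{(\ell-1)}$. Passing to the asymmetric $\min$ formulation is what makes the triangle-inequality reduction go through.
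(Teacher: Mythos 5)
Your proof is correct, and it reorganizes the argument in a way that differs from the paper's in two respects worth noting. The paper inducts on the distance $j$: for a node $v$ at distance $j$ from $w$, it jumps to the last message-passing layer $\ell'$ in $[\ell]$, applies the inductive hypothesis (at distance $j-1$, with at most $k-1$ message-passing layers) to the neighbors of $v$, and then disposes of the trailing stationary layers via Fact \ref{fact:stationary}. You instead induct on the layer index $\ell$, tracking the running count $k_\ell$ of message-passing layers and splitting the inductive step by layer type; this is the more standard ``one hop per message-passing layer'' locality argument and handles the stationary layers without any special-casing. The two inductions encode the same underlying fact and both are sound. The more substantive difference is your strengthening of the edge-feature statement from the equidistance form ($u$ and $w$ both exactly $j$ steps from $v$) to the asymmetric form $\min(d(v,u),d(v,w)) > k_\ell$. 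You correctly identify that the equidistance hypothesis does not survive the recursion to a neighbor $u'$ of $v$, which need not be equidistant from $u$ and $w$; the paper dismisses the edge case with ``a parallel argument'' and never confronts this, so your formulation closes a small gap in the written proof. (The same monotonicity issue is implicitly present in the paper's node case, where neighbors are only ``at least $j-1$ steps'' from $w$; your $d(v,w) > k_\ell$ phrasing sidesteps this as well.)
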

\begin{proof}
    We proceed by induction so assume the statement holds for $j-1$. Note the base case of $j=1$ is immediate from Fact \ref{fact:stationary} as this implies no message passing has occurred. Suppose $v$ is $j$ steps away from $w$. Let $\ell'$ be the largest $\ell$ in $\{1,\dots, \ell\}$ that is message passing.  By definition of the MinAgg GNN, 
      \begin{equation}
        h_v^{(\ell')} = f^{\up,(\ell')}\Big( \min\{f^{\agg,(\ell')}(h_u^{(\ell'-1)} \oplus x_{(u, v)}) : u \in\mathcal{N}(v) \} \Big).
    \end{equation}
    Every node in $\mathcal{N}(v)$ is at least $j-1$ steps from $w$ and there are at most $k-1$ message passing steps in $[\ell'-1]$. Invoking the inductive hypothesis for $j-1$ yields that $h_u^{(\ell'-1)}$ for $u \in  \mathcal{N}(v)$ is independent of $x_u(G).$
    Thus, since every variable in the expression for $ h_v^{(\ell')}$ is independent of $x_u(G)$, the feature $h_v^{(\ell')}$ is independent as well. Finally,  if $h_v^{(\ell')}$ is independent of $x_u(G)$ then by Fact \ref{fact:stationary}, $h_v^{(\ell)}$ is also independent of $x_u(G)$ since all the layers between $\ell'$ and $\ell$ are stationary.  (If $\ell$ is message passing then $\ell' = \ell.$)
    
    A parallel argument can be used to show independence of $h_v^{(\ell)}$ from $x_{(u,w)}$ in the case that $v$ is $j$ steps away from both $u$ and $w$
\end{proof}

\begin{lemma}\label{lem:K message passing layers}
     An $L$-layer MinAgg GNN $\mathcal A_\theta$  satisfies
     \begin{equation*}
         h_{v_K}^{(L)}(H_K^{(0)}) \neq    h_{u_K}^{(L)}(H_K^{(0)})
     \end{equation*} only if it has at least $K$ edge-dependent message passing layers.  
\end{lemma}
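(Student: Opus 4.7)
The plan is to prove the contrapositive by induction on the layer index $\ell$. Let $k(\ell)$ denote the number of edge-dependent message passing layers among layers $1,\dots,\ell$. I will establish the following stronger claim: for all $\ell \in \{0,\dots,L\}$ and all $k \in \{1,\dots,K\}$ with $k > k(\ell)$, $h_{v_k}^{(\ell)}(H_K^{(0)}) = h_{u_k}^{(\ell)}(H_K^{(0)})$. Instantiating at $\ell = L$ and $k = K$ yields the lemma by contrapositive. The key structural observation about $H_K^{(0)}$ is that for every $k \geq 1$ the nodes $v_k$ and $u_k$ are locally symmetric: they share the initial feature $\beta$, the same multiset of incident edge weights, and their neighborhoods both contain $v_{k\pm 1}$ and $u_{k\pm 1}$, with same-path neighbors attached via weight-$0$ edges and other-path neighbors via weight-$1$ edges. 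The only asymmetry lies at $v_0$ (feature $0$) versus $u_0$ (feature $\beta$), and it must be propagated upward along the paths to distinguish $v_K$ from $u_K$.

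The base case $\ell = 0$ is immediate since $h_{v_k}^{(0)} = \beta = h_{u_k}^{(0)}$ for $k \geq 1$. For the inductive step, I split on the type of layer $\ell$. If $\ell$ is stationary, then $k(\ell) = k(\ell-1)$ and by Fact \ref{fact:stationary} the aggregate reduces to a function of the incident edge-weight multiset, which agrees at $v_k$ and $u_k$; combined with the IH equality $h_{v_k}^{(\ell-1)} = h_{u_k}^{(\ell-1)}$ passed into the skip component of $\mup$, we get $h_{v_k}^{(\ell)} = h_{u_k}^{(\ell)}$. If $\ell$ is message passing but not edge-dependent, then $\magg$ depends on the node component alone, so the multiset of messages at $v_k$ is $\{g(h_{v_k}^{(\ell-1)}), g(h_{v_{k-1}}^{(\ell-1)}), g(h_{u_{k-1}}^{(\ell-1)}), g(h_{v_{k+1}}^{(\ell-1)}), g(h_{u_{k+1}}^{(\ell-1)})\}$, which coincides as a multiset with the corresponding one at $u_k$ (both contain $h_{v_{k\pm 1}}^{(\ell-1)}$ and $h_{u_{k\pm 1}}^{(\ell-1)}$, and the self-term agrees by IH), so $h_{v_k}^{(\ell)} = h_{u_k}^{(\ell)}$ follows without any IH at $k\pm 1$. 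Finally, if $\ell$ is edge-dependent message passing, then $k(\ell) = k(\ell-1) + 1$; the aggregated multisets now consist of (feature, edge-weight) pairs, and the asymmetric pairings $(h_{v_{k-1}}^{(\ell-1)}, 0), (h_{u_{k-1}}^{(\ell-1)}, 1)$ at $v_k$ versus $(h_{u_{k-1}}^{(\ell-1)}, 0), (h_{v_{k-1}}^{(\ell-1)}, 1)$ at $u_k$ (and analogously for $k+1$) agree as multisets only when $h_{v_{k-1}}^{(\ell-1)} = h_{u_{k-1}}^{(\ell-1)}$ and $h_{v_{k+1}}^{(\ell-1)} = h_{u_{k+1}}^{(\ell-1)}$. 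Our hypothesis $k > k(\ell) = k(\ell-1) + 1$ gives $k-1 > k(\ell-1)$ (and trivially $k, k+1 > k(\ell-1)$), so the IH supplies precisely the needed equalities.

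Applying the claim at $\ell = L$, $k = K$: if $k(L) < K$ then $h_{v_K}^{(L)} = h_{u_K}^{(L)}$, so contrapositively $h_{v_K}^{(L)} \neq h_{u_K}^{(L)}$ forces $k(L) \geq K$. The main obstacle is the edge-dependent case: it is the only layer type that advances the counter $k(\ell)$ and simultaneously the only one whose asymmetric pairing of node feature with edge weight forces us to invoke the IH at the neighboring indices $k\pm 1$; the precise bookkeeping $k > k(\ell-1)+1 \Rightarrow k-1 > k(\ell-1)$ is what lets the induction go through. The stationary and non-edge-dependent message-passing cases preserve the $v/u$ symmetry at each index automatically, without requiring any propagation of equality along the paths.
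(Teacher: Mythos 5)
Your proof is correct and follows essentially the same strategy as the paper's: both arguments rest on the observation that the $v_k/u_k$ symmetry of $H_K^{(0)}$ is preserved by stationary and non-edge-dependent message-passing layers, and that each edge-dependent message-passing layer can break the symmetry at only one additional index along the paths. The only difference is organizational — you run a single induction on the layer index $\ell$ with the invariant parametrized by the counter $k(\ell)$, whereas the paper separates a symmetry-preservation claim for non-edge-dependent layers from an induction on the number of edge-dependent layers — but the case analysis and the bookkeeping $k > k(\ell-1)+1 \Rightarrow k-1 > k(\ell-1)$ are the same.
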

\begin{proof}

We proceed by proving claim $(*)$, regarding the output of the MinAgg GNN on $H_K^{(0)}$. This claim is shown through two cases. Note that since in this proof only the graph $H_K^{(0)}$ is considered, we suppress notation referring to the graph for simplicity.

\begin{itemize}[leftmargin=5em]
    \item[Claim $(*)$] Let $k' \in [K]$ and $\ell \in [L]$. If for all $k \geq k'$,
\begin{equation*}
    h_{v_{k}}^{(\ell-1)} =  h_{u_{k}}^{(\ell-1)},
\end{equation*}  and $\magg$ is not edge-dependent message passing, then for all $k \geq k'$,
\begin{equation*}
    h_{v_{k}}^{(\ell)} =  h_{u_{k}}^{(\ell)}.
\end{equation*}
\end{itemize}

If $\magg$ is not edge-dependent message passing, then it either does not depend on its node component or does not depend on its edge component. 
\begin{enumerate}[leftmargin=5em]

\item[\indent Case I:] \textbf{$\magg$ does not depend on its node component.}
For $k \geq k'$, 
\begin{align*}
    h_{v_k}^{(\ell)} &= \mup\Big( \min\{\magg(h_w^{(\ell-1)} \oplus x_{(w, v_k)}) : w \in \{v_{k-1},u_{k-1}, v_k, v_{k+1}, u_{k+1} \}\} \oplus h_{v_k}^{(\ell-1)} \Big)\\
    &= \mup\Big( \min\{\magg(\cdot \oplus 0) ,\magg(\cdot \oplus 1)\} \oplus h_{v_k}^{(\ell-1)} \Big)
\end{align*}
where a center dot $\cdot$ is used to indicate the lack of dependence on the node feature. (Any value can replace $\cdot$ and the expression does not change since $\magg$ is constant over its node component.) The key here is that since $\magg$ does not depend on its node component, only the set of edge weights incident on $v_k$, which is $\{0,1\}$, effects the value of 
\begin{equation*}
    \min\{\magg(h_w^{(\ell-1)} \oplus x_{(w, v_k)}) : w \in \{v_{k-1},u_{k-1}, v_k, v_{k+1}, u_{k+1} \}\}.
\end{equation*} Similarly, for $u_k$ we have 
\begin{align*}
    h_{u_k}^{(\ell)} &= \mup\Big( \min\{\magg(h_w^{(\ell-1)} \oplus x_{(w, u_k)}) : w \in \{v_{k-1},u_{k-1}, u_k, v_{k+1}, u_{k+1} \} \}\oplus h_{u_k}^{(\ell-1)} \Big)\\
    &= \mup\Big( \min\{\magg(\cdot \oplus 0) ,\magg(\cdot \oplus 1)\} \oplus h_{u_k}^{(\ell-1)} \Big)
\end{align*}
and since $h_{v_k}^{(\ell-1)} = h_{v_k}^{(\ell-1)}$, we get  $h_{u_k}^{(\ell)} =  h_{v_k}^{(\ell)}$.

\item[Case II:] \textbf{$\magg$ does not depend on its edge component.}

For $k \geq k'$, 
\begin{align*}
    h_{v_k}^{(\ell)} &= \mup\Big( \min\{\magg(h_w^{(\ell-1)} \oplus x_{(w, v_k)}) : w \in \{v_{k-1},u_{k-1}, v_k, v_{k+1}, u_{k+1} \} \}\oplus h_{v_k}^{(\ell-1)} \Big)\\
    &=  \mup\Big( \min\{\magg(h_w^{(\ell-1)} \oplus \cdot ): w \in \{v_{k-1},u_{k-1}, v_k, v_{k+1}, u_{k+1} \}\} \oplus h_{v_k}^{(\ell-1)} \Big)
\end{align*}
and 
\begin{align*}
    h_{u_k}^{(\ell)} &= \mup\Big( \min\{\magg(h_w^{(\ell-1)} \oplus x_{(w, u_k)}) : w \in \{v_{k-1},u_{k-1}, u_k, v_{k+1}, u_{k+1} \} \}\oplus h_{u_k}^{(\ell-1)} \Big)\\
    &=  \mup\Big( \min\{\magg(h_w^{(\ell-1)} \oplus \cdot ): w \in \{v_{k-1},u_{k-1}, u_k, v_{k+1}, u_{k+1} \} \}\oplus h_{u_k}^{(\ell-1)} \Big). 
\end{align*}
However, since $h_{v_k}^{(\ell-1)} = h_{u_k}^{(\ell-1)}$, the minimums in the expressions for $h_{v_k}^{(\ell)}$ and $h_{u_k}^{(\ell)}$ are taken over the same set and so $h_{u_k}^{(\ell)} =  h_{v_k}^{(\ell)}$. 
\end{enumerate}

Now that we have proved claim $(*)$ we proceed by induction to show that for any $\ell' \in [L]$ if there are $k'$ edge-dependent message passing layers in $[\ell']$  then $h_{u_{k}}^{(\ell')} =  h_{v_{k}}^{(\ell')}$ for all $k > k'$. This holds for the base case of $k'=0$ since at initialization  $h^{(0)}_{v_k} = h^{(0)}_{u_k}$ for all $k \in [K]$, and $(*)$ yields that $h^{(\ell')}_{v_k} = h^{(\ell')}_{u_k}$ since no layer in $[\ell']$ is edge-dependent message passing. Now suppose the inductive hypothesis holds for $k' -1$ edge-dependent message passing layers. That is, suppose that for any $\ell'\in [L]$ if there are $k'-1$ edge-dependent message passing layers in $[\ell']$ then $h_{u_{k}}^{(\ell')} =  h_{v_{k}}^{(\ell')}$ for all $k > k'-1$. Now suppose for some $\ell'\in [L]$ there are $k'$ edge-dependent message passing layers in $[\ell']$, and let $\ell^*$ be the last such layer. Then  
$h^{(\ell^*-1)}_{v_k} = h^{(\ell^*-1)}_{u_k}$ for all $k > k'-1$  by the inductive hypothesis.  F or all $k > k'$,
\begin{align*}
     h_{v_k}^{(\ell^*)} &= f^{\up, (\ell^*)}\Big( \min\{f^{\agg, (\ell^*)}(h_w^{(\ell^*-1)} \oplus x_{(w, v_k)}) : w \in \{v_{k-1},u_{k-1}, v_k, v_{k+1}, u_{k+1} \} \}\oplus h_{v_k}^{(\ell^*-1)} \Big)
\end{align*}
and 
\begin{align*}
     h_{u_k}^{(\ell^*)} &= f^{\up, (\ell^*)}\Big( \min\{f^{\agg, (\ell^*)}(h_w^{(\ell^*-1)} \oplus x_{(w, u_k)}) : w \in \{v_{k-1},u_{k-1}, u_k, v_{k+1}, u_{k+1} \} \}\oplus h_{u_k}^{(\ell^*-1)} \Big).
\end{align*}
Recall that $x_{(w,w)}$ is always zero for any node $w$. The minimums in these expressions are then taken over the sets 
\begin{equation*}
    S_v = \{h_{v_{k-1}}^{(\ell^*-1)}\oplus x_{(v_{k-1},v_k)}, h_{u_{k-1}}^{(\ell^*-1)}\oplus x_{(u_{k-1},v_k)}, h_{v_{k}}^{(\ell^*-1)}\oplus 0, h_{v_{k+1}}^{(\ell^*-1)}\oplus x_{(v_{k+1},v_k)}, h_{u_{k+1}}^{(\ell^*-1)}\oplus x_{(u_{k+1},v_k)} \}
\end{equation*}
and 
\begin{equation*}
    S_u = \{h_{v_{k-1}}^{(\ell^*-1)}\oplus x_{(v_{k-1},u_k)}, h_{u_{k-1}}^{(\ell^*-1)}\oplus x_{(u_{k-1},u_k)}, h_{u_{k}}^{(\ell^*-1)}\oplus 0, h_{v_{k+1}}^{(\ell^*-1)}\oplus x_{(v_{k+1},u_k)}, h_{u_{k+1}}^{(\ell^*-1)}\oplus x_{(u_{k+1},u_k)} \}
\end{equation*}
respectively. 
 Note, by the inductive hypothesis, $h_{v_{k-1}}^{(\ell^*-1)} = h_{u_{k-1}}^{(\ell^*-1)}$, $h_{v_{k}}^{(\ell^*-1)} = h_{u_{k}}^{(\ell^*-1)}$,  and $h_{v_{k+1}}^{(\ell^*-1)} = h_{u_{k+1}}^{(\ell^*-1)}$, since $k - 1 > k'-1$. 
 Furthermore, for all $i \in [K]$, we have $x_{(v_{i-1},u_i)} = x_{(u_{i-1},v_i)}= 1$ and $x_{(v_{i-1},v_i)} = x_{(u_{i-1},u_i)}= 0$. Thus, \begin{equation*}
    S_v = S_u = \{h_{u_{k-1}}^{(\ell^*-1)}\oplus 0, h_{u_{k-1}}^{(\ell^*-1)}\oplus 1, h_{u_{k}}^{(\ell^*-1)}\oplus 0, h_{u_{k+1}}^{(\ell^*-1)}\oplus 0, h_{u_{k+1}}^{(\ell^*-1)}\oplus 1 \}.
\end{equation*}
 so
$h_{u_{k}}^{(\ell^*)} =  h_{v_{k}}^{(\ell^*)}$ for $k > k'$. Finally, the remaining layers $\ell \in \{\ell^* +1,\dots, \ell'\}$ are not edge-dependent message passing, so claim $(*)$ gives that $h_{u_{k}}^{(\ell)} =  h_{v_{k}}^{(\ell)}$ is maintained for $k > k'$, completing the induction. 

Taking the inductive hypothesis that we just proved with $\ell' = L$ and $k' < K$ gives that if there are are less than $K$
edge-dependent message passing layers then $h_{u_K}^{(L)} =  h_{v_K}^{(L)}$. 
\end{proof}
Note $x_{v_K}(H^{(K)}_K) \neq x_{u_K}(H^{(K)}_K)$, so any MinAgg GNN with less than $K$ message passing layers can not achieve perfect accuracy on $H_K^{(0)}.$
In particular, since a MinAgg GNN with less than $K$ layers must also have less than $K$ message passing layers, this lemma demonstrates that  $K$ layers are indeed necessary for a MinAgg GNN to have perfect accuracy on $K$-step shortest path problems. 
\begin{corollary} \label{cor:K layers are necessary}
    Any MinAgg GNN with less than $K$ layers has non-zero error on the training pair $( H_K^{(0)}, H^{(K)}_K)$.
\end{corollary}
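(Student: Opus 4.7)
The plan is to derive this as an immediate consequence of Lemma~\ref{lem:K message passing layers}, together with an explicit computation of the ground-truth target values on $H_K^{(K)} = \Gamma^K(H_K^{(0)})$. The key idea is that the lemma bounds below the number of edge-dependent message passing layers needed to produce different features at $v_K$ and $u_K$, while the target features on these two vertices differ, so too few layers forces a mismatch.

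The first step is to evaluate $x_{v_K}(H_K^{(K)})$ and $x_{u_K}(H_K^{(K)})$. In $H_K^{(0)}$ the source is $v_0$, the path $v_0, v_1, \ldots, v_K$ consists entirely of weight-$0$ edges, so $x_{v_K}(H_K^{(K)}) = 0$. In contrast, any walk from $v_0$ to $u_K$ must cross from the $v$-side to the $u$-side at least once, and every crossing edge has weight $1$, so $x_{u_K}(H_K^{(K)}) \geq 1$. Therefore $x_{v_K}(H_K^{(K)}) \neq x_{u_K}(H_K^{(K)})$.

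The second step is to count edge-dependent message passing layers. Since each of the $L$ layers is either edge-dependent message passing or not, an $L$-layer MinAgg GNN has at most $L$ such layers. In particular, if $L < K$, then it has strictly fewer than $K$ edge-dependent message passing layers, so the contrapositive of Lemma~\ref{lem:K message passing layers} yields
\begin{equation*}
    h_{v_K}^{(L)}(H_K^{(0)}) = h_{u_K}^{(L)}(H_K^{(0)}).
\end{equation*}
Because the two predictions are forced to coincide while the two targets differ, the GNN cannot match both simultaneously, and the $\ell_1$ (or any non-degenerate) error on the pair $(H_K^{(0)}, H_K^{(K)})$ is strictly positive.

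No real obstacle is anticipated: the lemma from the previous subsection performs all the nontrivial work, and the remaining content of the corollary is the elementary observation that the two target values at $v_K$ and $u_K$ disagree, which follows directly from the definition of $H_K^{(0)}$.
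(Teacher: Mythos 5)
Your proposal is correct and follows essentially the same route as the paper: invoke Lemma~\ref{lem:K message passing layers} (via its contrapositive, noting that fewer than $K$ layers implies fewer than $K$ edge-dependent message passing layers) to force $h_{v_K}^{(L)}(H_K^{(0)}) = h_{u_K}^{(L)}(H_K^{(0)})$, while the targets satisfy $x_{v_K}(H_K^{(K)}) = 0 \neq 1 = x_{u_K}(H_K^{(K)})$. Your explicit computation of the two target values is a welcome bit of extra detail the paper only asserts.
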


\begin{lemma}\label{lem: nonconstant mup}
  An $L$-layer GNN $\mathcal A_\theta$ has output satisfying
   \begin{equation}
       h_{v_0}^{(L)}(P^{(0)}_1(1)) \neq  h_{v_1}^{(L)}(P^{(0)}_1(1))
   \end{equation}
    only if for all $\ell \in [L]$ the function  $\mup$
    is not constant.
\end{lemma}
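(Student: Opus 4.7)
The plan is to prove the contrapositive: if for some $\ell \in [L]$ the map $\mup$ is constant, then $h_{v_0}^{(L)}(P^{(0)}_1(1)) = h_{v_1}^{(L)}(P^{(0)}_1(1))$. The case $\ell = L$ is immediate, because a constant $f^{\mathrm{up},(L)}$ forces every final node feature to take the same value. For $\ell < L$, by the same observation we still obtain $h_{v_0}^{(\ell)} = h_{v_1}^{(\ell)}$ (both nodes inherit the single constant value of $\mup$), and the task becomes propagating this equality forward through the remaining $L - \ell$ layers.

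I would carry out the propagation via a symmetry argument using Fact \ref{fact:isomorphisms}. Consider the swap $\phi: V \to V$ with $\phi(v_0) = v_1$ and $\phi(v_1) = v_0$ on $P^{(0)}_1(1)$. The underlying edge-attributed graph is invariant under $\phi$: the unique non-self-loop edge $(v_0, v_1)$ has weight $1$ and is mapped to itself, and the self-loops at $v_0$ and $v_1$ both have weight $0$. Form the attributed graph $\tilde G^{(\ell)} = (V, E, H^{(\ell)}, X_{\mathrm e})$ whose node features are $H^{(\ell)} = \{h_{v_0}^{(\ell)}, h_{v_1}^{(\ell)}\}$. Since these two features are equal by the previous paragraph, $\phi$ satisfies the node-feature condition of Def.\ \ref{def:isomorphism} and is therefore an isomorphism of $\tilde G^{(\ell)}$ onto itself.

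Now apply Fact \ref{fact:isomorphisms} inductively to the layers $\ell{+}1, \ell{+}2, \dots, L$: at each step the GNN update depends on the current graph only through its node features and edge features, so an isomorphism of the attributed input is preserved in the attributed output. Hence after layer $L$ we still have $h_{v_0}^{(L)} = h_{\phi(v_0)}^{(L)} = h_{v_1}^{(L)}$, contradicting the hypothesis of the lemma. I do not anticipate a serious obstacle beyond bookkeeping; the only small subtlety is confirming that the self-loops (weight $0$) fit cleanly into the isomorphism framework of Def.\ \ref{def:isomorphism}, which is transparent since $\phi$ maps each self-loop to itself with matching weight.
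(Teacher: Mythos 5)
Your proposal is correct and follows essentially the same route as the paper: both arguments observe that a constant $\mup$ forces $h_{v_0}^{(\ell)} = h_{v_1}^{(\ell)}$, then invoke the swap automorphism $\phi$ of the attributed graph at layer $\ell$ together with Fact \ref{fact:isomorphisms} to conclude $h_{v_0}^{(L)} = h_{v_1}^{(L)}$. Your extra care with the $\ell = L$ case and the self-loops is harmless bookkeeping that the paper leaves implicit.
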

\begin{proof}
Suppose $\mup$ is constant for some $\ell \in [L]$.
Then the node features after the $\ell$th layer are  identical:  $h_{v_0}^{(\ell)} = h_{v_1}^{(\ell)}  $. Consider the attributed graph at this layer $G^{(\ell)} = (V,E,X_{\mathrm v}^{(\ell)} , X_{\mathrm e})$ with node features $x_v = h^{(\ell)}_v$. This graph has 
\begin{align*} 
    \phi: v_0 &\mapsto  v_1\\
    v_1 &\mapsto v_0
\end{align*}
as an automorphism. 
 By Fact \ref{fact:isomorphisms}, $A_\theta(G)$ must also have $\phi$ as an automorphism. Thus, since isomorphisms respect node features (Def.\ \ref{def:isomorphism}),  $h_{v_0}^{(L)}  = h_{\phi(v_0)}^{(L)} =  h_{v_1}^{(L)} $.

\end{proof}

We are now ready to show that any MinAgg GNN achieving small loss on a training set that includes $P^{(0)}_1(1)$ and $H^{(0)}_K$ must have a specific sparsity structure. 
\begin{lemma} \label{lem:dependencies}
    For $K  \geq 1$, let $\mathcal G_{\mathrm{train}}$ be a set containing pairs of training instances $(G^{(t)},\Gamma^K(G^{(t)})$ where $\mathcal G_{\mathrm{train}}$ contains $M$ total reachable nodes and  
    \begin{equation*}
        \{(H^{(0)}_K,H_K^{(K)}) , ( P^{(0)}_1(1),P^{(K)}_1(1))\} \subset \mathcal G_{\mathrm{train}}.
    \end{equation*} For $L \geq K > 0$, consider an $L$-layer MinAgg GNN $\mathcal A_\theta$  with $m$-layer MLPs and parameters $\theta$. Given regularization coefficient $0 < \eta < \frac{1}{2M(mL + mK +K)}$ and error $0 \leq \epsilon < \eta$,  then the loss
\begin{equation}
    \Lreg =   \Lmae(G_{\mathrm{train}}, \mathcal{A}_\theta) + \eta \|\Theta\|_0
\end{equation}
has a minimum value of $\eta(mL + mK +K)$ and if the loss achieved by $A_\theta$ is
within $\epsilon$ of this minimum then $A_\theta$ has exactly $mL + mK +K$ non-zero parameters, $K$ message passing layers, and for all layers $\mup$ is non-constant. In particular, 
for all $j \in [m]$ and $\ell \in [L]$
\begin{equation*}
    \|W_j^{\up,(\ell)}\|_0 = 1,
\end{equation*}
and $b_j^{\up,(\ell)} = b_j^{\agg,(\ell)} = 0$. For each of the $K$ message passing layers $\ell_k \in [L]$,
\begin{align*}
     \|W_1^{\agg,(\ell)}\|_0 &= 2\\
     \|W_j^{\agg,(\ell)}\|_0 &= 1 \quad \text{for $m \geq j > 1$}
\end{align*}
where the two non-zero entries in $W_1^{\agg,(\ell)}$ share the same row. 
\end{lemma}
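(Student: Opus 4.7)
The plan is to sandwich $\min \Lreg$ between matching bounds and then use the small gap $\epsilon$ to pin down the sparsity pattern. For the upper bound, Lemma~\ref{lem:deep BF imp} already exhibits a parameter assignment with exactly $mL + mK + K$ nonzero entries that achieves $\Lmae = 0$ on the training set, so $\min \Lreg \leq \eta(mL + mK + K)$.

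For the lower bound, I would argue that if any required structural property fails then the MAE contribution from a single training pair is at least $1/M$. On $P_1^{(0)}(1)$ the target labels at $v_0$ and $v_1$ differ by $1$, so if Lemma~\ref{lem: nonconstant mup} forces $h_{v_0}^{(L)} = h_{v_1}^{(L)}$ (because some $\mup$ is constant), the triangle inequality gives a contribution of at least $1/M$. Similarly, on $H_K^{(0)}$ the targets at $v_K$ and $u_K$ differ by $1$, so if fewer than $K$ edge-dependent message-passing layers are present, Lemma~\ref{lem:K message passing layers} forces $h_{v_K}^{(L)} = h_{u_K}^{(L)}$ and again costs $1/M$. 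The hypothesis $\eta < 1/(2M(mL + mK + K))$ is calibrated precisely so that $\eta(mL + mK + K) + \epsilon < 1/M$, so either violation contradicts the loss assumption.

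Next, I would translate these structural requirements into parameter counts. For an $m$-layer ReLU MLP to be non-constant on non-negative inputs, every weight matrix $W_j$ must contain at least one nonzero entry (otherwise that layer emits a constant and kills the rest), contributing at least $mL$ across the $L$ update MLPs. For an edge-dependent message-passing $\magg$, the first-layer matrix $W_1^{\agg,(\ell)}$ must be nonzero in the edge column and in some node column, and each subsequent layer must have at least one nonzero entry to propagate the signals to the output; moreover, with only $m-1$ additional nonzeros available, the two first-layer nonzeros must share a row, since independent propagation paths would require at least two nonzeros in some later layer. This yields $m+1$ nonzeros per such layer and $K(m+1) = mK + K$ in total, so $\|\theta\|_0 \geq mL + mK + K$, matching the upper bound and pinning $\min \Lreg = \eta(mL+mK+K)$.

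Finally, since $\epsilon < \eta$, any additional nonzero parameter would push $\Lreg$ above $\eta(mL+mK+K) + \eta > \eta(mL+mK+K) + \epsilon$, so $\|\theta\|_0 = mL+mK+K$ exactly. The tight budget then leaves no room for nonzero biases, forces exactly one nonzero per $W_j^{\up,(\ell)}$, and forces the two-plus-ones pattern with shared first-layer row on each of the $K$ edge-dependent message-passing $\magg$'s. The main obstacle will be making the shared-row claim rigorous: I need to formalize that each input-component dependence of a ReLU MLP corresponds to a nonzero \emph{path} through the successive weight matrices, and that merging two such paths at any stage beyond the first layer costs at least one extra nonzero per layer. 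A short combinatorial case analysis on which rows of each $W_j$ carry nonzero entries should close this.
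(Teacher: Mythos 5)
Your proposal follows essentially the same route as the paper's proof: the upper bound from the explicit Bellman--Ford implementation of Lemma~\ref{lem:deep BF imp}, the lower bound via the $1/M$ error incurred on $P_1^{(0)}(1)$ and $H_K^{(0)}$ when the structural properties of Lemmas~\ref{lem: nonconstant mup} and~\ref{lem:K message passing layers} fail, the identical per-layer parameter counting ($mL$ from the update MLPs, $m+1$ per edge-dependent message-passing layer), and tightness forcing zero biases and the shared-row pattern. The shared-row step you flag as the main obstacle is handled in the paper by the same observation you sketch — $W_2^{\agg,(\ell)}$ has only one nonzero entry, so two first-layer nonzeros in distinct rows would lose either edge or node dependence.
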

\begin{proof}

First, the BF implementation given by Lemma $\ref{lem:deep BF imp}$ shows there is a choice of parameters that achieves perfect accuracy with $mL + mK +K$ non-zero values. This implementation achieves a loss of $\Lreg = \eta(mL + mK +K)$.  Thus, $A_\theta$ can not have more than $mL + mK +K$ non-zero values as this would mean a loss at least $\eta > \epsilon$ greater than the loss achieved by the BF implementation. We now derive the sparsity structure that any MinAgg GNN achieving a loss no less than $\eta(mL + mK +K) + \epsilon$ must achieve. We show at the end of the proof that $\eta(mL + mK +K)$ is indeed the global minimum of the loss.

Note that $|x_{v_0}(P^{(K)}_1(1)) - x_{v_1}(P^{(K)}_1(1))| = 1$ and $|x_{v_K}(H_K^{(K)}) - x_{u_K}(H_K^{(K)})| = 1$. If $h_{v_0}^{(L)}(P^{(0)}_1(1)) = h_{v_1}^{(L)}(P^{(0)}_1(1))$ then 
\begin{align*}
   \Lreg &\geq  \Lmae(\mathcal G_{\mathrm{train}}, A_\theta) \\
   &\geq \frac{|x_{v_0}(P^{(K)}_1(1)) - h_{v_0}^{(L)}(P^{(0)}_1(1))| + |x_{v_1}(P^{(K)}_1(1)) - h_{v_1}^{(L)}(P^{(0)}_1(1))|}{M} \\
   &\geq \frac{1}{M} \\
   & \geq \eta(mL + mK +K) + \epsilon,
\end{align*}
where the last inequality follows from $\epsilon < \frac{1}{2M(mL + mK +K)}$ and $\eta(mL + mK +K) < \frac{1}{2M}.$
 We can then conclude that any MinAgg GNN with loss less than or equal to $\eta(mL + mK +K) + \epsilon$ must satisfy $h_{v_k}^{(L)}(H^{(0)}_K) \neq h_{u_k}^{(L)}(H^{(0)}_K)$ and $h_{v_0}^{(L)}(P^{(0)}_1(1)) \neq h_{v_1}^{(L)}(P^{(0)}_1(1))$ so by lemmas \ref{lem:K message passing layers} and \ref{lem: nonconstant mup}, any such MinAgg GNN has at least $K$ edge-dependent message passing layers and for all $\ell \in [L]$ the update function $\mup$ is nonconstant. 

Next, we analyze the sparsity required to achieve $K$ edge-dependent message passing layers and nonconstant $\mup$. 

\begin{itemize}
    \item If $\mup$ is non constant, then for all $j \in [m]$ and $\ell \in [L]$ 
\begin{equation} \label{eq: sparsity structure up}
    \|W_j^{\up,(\ell)}\|_0 \geq 1
\end{equation}
so there must be at least $mL$ nonzero entries coming from $\mup$.

\item If $\ell$ is a edge-dependent message passing layer then $\magg$ is nonconstant and so for all $j \in [m]$ and $\ell \in [L]$ 
\begin{equation}
    \|W_j^{\agg,(\ell)}\|_0 \geq 1.
\end{equation}
Also, since $\magg$ is edge-dependent message passing it depends on both its node and its edge component so $W_1^{\agg,(\ell)}$ must have two columns that have nonzero entries, giving
\begin{equation}\label{eq: sparsity structure agg}
    \|W_1^{\agg,(\ell)}\|_0 \geq 2.
\end{equation}
Thus, the total number of non-zero entries coming from $\magg$ is at least $mK+K$ since there are at least $K$ edge-dependent message passing layers. 
\end{itemize}
Combining the contributions from $\mup$ and $\magg$ we have that $A_\theta$ has at least $mL+mK + K$ non-zero parameters. However, at the start of the proof we showed that  $mL+mK + K$ is also an upper bound on the number of non-zero parameters, so there must be exactly  $mL+mK + K$ non-zero parameters. 
Additionally, we can see that there can not be more than $K$ message passing layers as this would require additional non-zero parameters. Furthermore, if only $mL+mK + K$ parameters are non-zero then inequalities (\ref{eq: sparsity structure up}) - (\ref{eq: sparsity structure agg}) must be tight and, also, $b_j^{\up,(\ell)} = b_j^{\agg,(\ell)} = 0$. 
Finally, we remark that the non-zero entries in $W_1^{\agg,(\ell)}$ must share a row since $W_2^{\agg,(\ell)}$ has only one non-zero entry. Thus, if the non-zero entries in $W_1^{\agg,(\ell)}$ do not share a row, either edge dependence or node dependence is lost.

Furthermore, any MinAgg GNN that achieves a loss less than or equal to $\eta(mL + mK +K) + \epsilon$ must have exactly $mL + mK +K$ non-zero parameters. This implies that $\eta(mL + mK +K)$ is the minimum of the loss. 

\end{proof}

\begin{figure}
    \centering
    \includegraphics{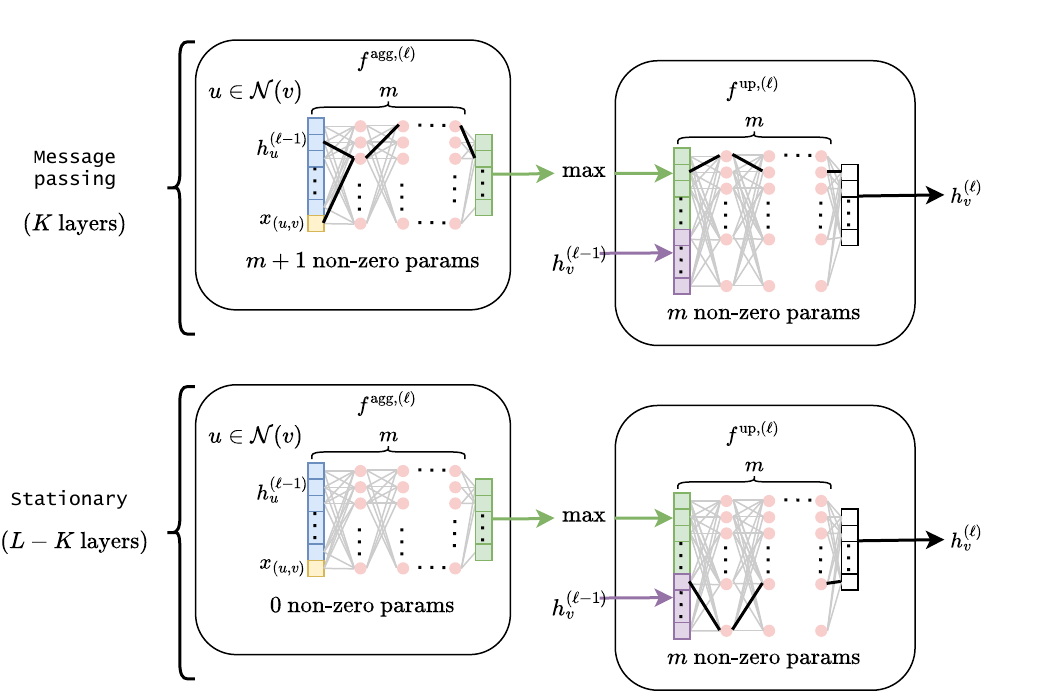}
    \caption{A diagram showing an example of a MinAgg GNN with the sparsity structure given by  \Cref{lem:dependencies}. Bold black connections in the neural network indicate non zero parameters, while grey lines indicate zero parameters.} 
\end{figure}
We can now limit our analysis to MinAgg GNNs with exactly $K$ message passing layers. For these MinAgg GNNs let the $k$th message passing layer be $\ell_k \in [L]$ where $k \in [K].$

\subsection{Bounding GNN expressivity}
\label{sec:Bounding GNN expressivity}

The following section aims to bound the expressiveness of $\mathcal A_\theta$ under certain conditions that are more challenging to analyze. The benefit is that with sufficiently many training examples we can restrict our analysis to more straightforward cases, as for must training instances, these challenging to analyze conditions do not arise. 

\begin{lemma} \label{lem:constant features deep}
     Consider $P^{(1)}_{K+1}(x, a_2,\dots, a_{K+1})$ where
 $a_2,\dots,a_{K+1}$ are taken to be fixed and view the computed node feature  $h_{v_K}^{(L)}:\mathbb R \to \mathbb R$ solely as a function of the first edge weight $x$, which is variable. Consider a MinAgg GNN $\mathcal A_\theta$ with exactly $K$ message passing layers.  
    If for the $k$th message passing layer $\ell_k \in [L]$ there is a region $D_{k}\subset \mathbb R$ such that the feature $h_{v_{k+1}}^{({\ell_k})}(x)$ is constant on $D_k$, i.e., takes the same value for all $x \in D_{k}$, then $h_{v_{K+1}}^{(L)}(x)$ is also constant over $D_k$.  
\end{lemma}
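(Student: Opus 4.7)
The plan is to exploit the locality of message passing (Fact \ref{fact:local computations}) together with the fact that in a path graph the only route from $v_0$ to $v_{K+1}$ passes through $v_{k+1}$. Since $\mathcal A_\theta$ has exactly $K$ message passing layers and the first $k$ of them occur by layer $\ell_k$, there are precisely $K-k$ message passing layers remaining between $\ell_k$ and $L$. This lets me argue that only the features at nodes $v_j$ with $j \geq k+1$ (and only the edges among them, all of which have fixed weights) can possibly influence $h_{v_{K+1}}^{(L)}$.

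First I would make the above precise. Apply Fact \ref{fact:local computations} with $k' = K-k$ message passing layers (those in $\{\ell_k+1, \ldots, L\}$): a layer-$\ell_k$ feature $h_{v_j}^{(\ell_k)}$ can only influence $h_{v_{K+1}}^{(L)}$ if $\mathrm{dist}(v_{K+1}, v_j) = K+1-j \leq K-k$, i.e.\ if $j \geq k+1$. Likewise an edge weight $x_{(u,w)}$ can only matter if at least one endpoint is within $K-k$ of $v_{K+1}$, and the only such edges lie among $\{v_k, v_{k+1}, \ldots, v_{K+1}\}$, whose weights $a_{k+1}, \ldots, a_{K+1}$ are fixed by hypothesis and independent of $x$.

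Next, I would observe that for $j > k+1$ the feature $h_{v_j}^{(\ell_k)}$ does not depend on $x$ at all. The only edge whose weight is $x$ is $(v_0, v_1)$, whose endpoints lie at distances $j$ and $j-1$ from $v_j$; since $\min(j, j-1) = j-1 > k$ and there are only $k$ message passing layers in $[\ell_k]$, a second application of Fact \ref{fact:local computations} yields $x$-independence of $h_{v_j}^{(\ell_k)}$ on all of $\mathbb{R}$, and in particular on $D_k$. By the lemma's hypothesis, $h_{v_{k+1}}^{(\ell_k)}(x)$ takes a single value on $D_k$. Thus every layer-$\ell_k$ input that can influence $h_{v_{K+1}}^{(L)}$ is constant over $D_k$, and since the relevant edge weights are also constants (independent of $x$), determinism of the GNN gives that $h_{v_{K+1}}^{(L)}(x)$ takes the same value for every $x \in D_k$.

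The main subtlety will be the bookkeeping of distances and message passing counts, especially at the boundary: the edge $(v_k, v_{k+1})$ straddles the ``inner'' region $\{v_0, \ldots, v_k\}$ whose features may still depend on $x$, and the ``outer'' region $\{v_{k+1}, \ldots, v_{K+1}\}$; one must verify that although this edge can influence $h_{v_{K+1}}^{(L)}$, its weight $a_{k+1}$ is fixed and so introduces no $x$-dependence. Once these indices are carefully checked, the proof reduces to two invocations of Fact \ref{fact:local computations}.
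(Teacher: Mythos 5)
Your proof is correct and rests on the same underlying principle as the paper's proof---locality of message passing on the path graph---but it is packaged differently. The paper argues by an explicit induction along the path: it shows $h_{v_{k'+1}}^{(\ell_{k'})}$ is constant on $D_k$ for $k' = k, k+1, \dots, K$ by unfolding the update equation at each message passing layer, using Fact \ref{fact:stationary} for the intervening stationary layers and invoking Claim \ref{fact:local computations} only in the direction you use it second (to show the downstream features such as $h_{v_{k'+2}}^{(\ell_{k'}-1)}$ are $x$-independent outright). You replace that induction with a single ``dual'' application of the locality claim from the output side: since only $K-k$ message passing layers remain after $\ell_k$, the output at $v_{K+1}$ can depend only on the layer-$\ell_k$ features of $v_{k+1},\dots,v_{K+1}$ and on the fixed weights $a_{k+1},\dots,a_{K+1}$ (plus zero self-loops). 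This is cleaner, but be aware that Claim \ref{fact:local computations} as stated concerns dependence on the \emph{initial} attributes of $G$ and perturbation of a \emph{single} node feature or edge weight; your first invocation needs the analogous statement for the truncated network on layers $\ell_k+1,\dots,L$ initialized with the vector-valued features $h^{(\ell_k)}$, together with a hybrid argument to change the features of $v_0,\dots,v_k$ and the weight $x_{(v_0,v_1)}$ simultaneously. Both follow from the same layer-by-layer induction that proves the claim, so this is a presentational obligation rather than a gap. Your index bookkeeping is right, and the boundary edge $(v_k,v_{k+1})$---whose weight $a_{k+1}$ is fixed precisely because $k\geq 1$---is exactly the point that needed checking.
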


\begin{proof}
We proceed by induction, so consider $k' \in [K]$ with  $ k' \geq k$ and assume that $h_{v_{k'}}^{(\ell_{k'-1})}(x)$ is constant over $D_k$. We aim to show $h_{v_{k'+1}}^{(\ell_{k'})}(x)$ is constant over $D_k$. This is true for the base case of $ k' = k $ by the assumption in the theorem statement that $h_{v_{k+1}}^{({\ell_k})}(x)$ is constant on $D_k$. Now we prove the general case, so take $k' > k$. Since, 
\begin{equation}
      h_{v_{k'+1}}^{(\ell_{k'})} =  f^{\up, (\ell_{k'})} \Big( \min\{f^{\agg, (\ell_{k'})}  (h_u^{(
        \ell_{k'}-1)} \oplus x_{(u, v)}) : u \in \{v_{k'},v_{k'+1},v_{k'+2}\}  \} \oplus h_{v_{k'+1}}^{(\ell_{k'}-1)}\Big),
\end{equation}$h_{v_{k'+1}}^{(\ell_{k'})}(x)$ is a function solely of $h_{v_{k'+1}}^{(\ell_{k'}-1)}(x)$, $h_{v_{k'+2}}^{(\ell_{k'}-1)}(x)$, and, $h_{v_{k'}}^{(\ell_{k'}-1)}(x)$. (All edge weights other than that of $(v_0,v_1)$ are taken to be constant, and $k'>k\geq1$ so there is no dependence on $x_{(v_0,v_1)}$ in the expression.) The feature $h_{v_{k'}}^{(\ell_{k'-1})}(x)$ is constant over $D_k$ by the inductive hypothesis. Furthermore, $h_{v_{k'}}^{(\ell_{k'}-1)}(x)$ is constant over $D_k$ by Fact \ref{fact:stationary} since all layers in $\{ \ell_{k'-1}+1, \dots, \ell_{k'}-1\}$ must be stationary.  Note that while the stationary layers may change the node feature at $v_{k'}$ so that $h_{v_{k'}}^{(\ell_{k'-1})}(x) \neq h_{v_{k'}}^{(\ell_{k'}-1)}(x)$, these two features $h_{v_{k'}}^{(\ell_{k'-1})}(x)$ and $h_{v_{k'}}^{(\ell_{k'}-1)}$ are still both constant functions of $x$ on $D_k$. 
In $P^{(1)}_{K+1}(x, a_2,\dots, a_{K+1})$ only the first edge weight $x_{(v_0,v_1)}$ and the second node feature $x_{v_1}$ depend on $x$. Thus, by Claim \ref{fact:local computations}, if $j \in [K]$ with $j > k'$ then $h_{v_{j+1}}^{(\ell_{k'})}(x)$ is constant across all $x\geq 0$ as $v_{j+1}$ is more than $k'$ steps from $v_{1}$. 
It then follows that the features $h_{v_{k'+1}}^{(\ell_{k'-1})}(x)$ and $h_{v_{k'+2}}^{(\ell_{k' -1})}(x)$ are both constant.  Also, $h_{v_{k'+1}}^{(
\ell_{k'}-1)}(x)$ and $h_{v_{k'+2}}^{(\ell_{k'}-1)}(x)$ are both constant since the layers in $\{ \ell_{k'-1}+1, \dots, \ell_{k'}-1\}$ must be stationary. We can then conclude $h_{v_{k'+1}}^{(\ell_{k'})}(x)$ is constant on $D_k$, since it depends only on variables which are constant on $D_k$, completing the inductive argument. Since we have proved  $h_{v_{k'+1}}^{(\ell_{k'})}(x)$ is constant on $D_k$ for all $k' \geq k$, we have that $h_{v_{K+1}}^{(\ell_K)}(x)$ is constant. Finally, since $h_{v_{K+1}}^{(m)}(x)$ only depends on $h_{v_K}^{(\ell_K)}(x)$, as all layers following $\ell_k$ are stationary, we have that $h_{v_K}^{(m)}(x)$ is constant on $D_k$
     
\end{proof}
\begin{definition}
    A GNN $\mathcal A_\theta$ has 1-dimensional aggregation if for all message passing steps $\ell_k \in [L]$ 
    \begin{equation*}
        \|W_0^{\up, (\ell_k)}\|_0 = 1.
    \end{equation*}
\end{definition}

One dimensional aggregation implies that the output of $\mup$  only depends on one component of the vector produced by $\min$. Furthermore, the value of this one component is determined by which vector in the set 
\begin{equation}
    \{ \maggk (h^{(\ell_k)}_{u}, x_{(u,v)}) \mid u \in \mathcal N(v)\}
\end{equation}
is minimal in that component.

\begin{definition}
  Consider $P^{(1)}_{K+1}(a_1, a_2,\dots, a_{K+1})$ and a MinAgg GNN $\mathcal A_\theta$ with exactly $K$ message passing layers and 1-dimensional aggregation.  
We say that $\mathcal A_\theta$ is path derived on $P^{(1)}_{K+1}(a_1, a_2,\dots, a_{K+1})$ 
 if for the $k$th message passing step $\ell_k$,
\begin{align*}
     h_{v_{k+1}}^{(\ell_k)} &:= \mupk\left(   \min\{ \maggk (h^{(\ell_k-1)}_{u}, x_{(u,v_{k+1})}) \mid u \in \mathcal \{v_{k+1}\}\cup N(v_{k+1})\} \oplus h_{v_{k+1}}^{(\ell_k-1)}\right)\\
     &= \mupk \left(  \maggk (h^{(\ell_k-1)}_{v_{k}}, x_{(v_{k},v_{k+1})})  \oplus h_{v_{k+1}}^{(\ell_k-1)}\right).
 \end{align*}
 
\end{definition}
\begin{corollary}
\label{lem:path derived deep}
     Consider $P^{(1)}_{K+1}(x, a_2,\dots, a_{K+1})$ where
 $a_2,\dots,a_{K+1}$ are taken to be fixed and view the computed node feature  $h_{v_{K+1}}^{(L)}:\mathbb R \to \mathbb R$ solely as a function of the first edge weight $x$, which is variable. Consider a MinAgg GNN $\mathcal A_\theta$ with exactly $K$ message passing layers and 1-dimensional aggregation. If $D\subseteq \mathbb R$ contains all $x$ such that $\mathcal A_\theta$ is not path derived on  $P^{(1)}_{K+1}(x, a_2,\dots, a_{K+1})$ and 
 \begin{equation}
     Y_{\mathrm{pd}} = \{h_{v_{K+1}}^{(L)}(x):x \in D\}
 \end{equation}
 then $|Y_{\mathrm{pd}}| \leq K$.
\end{corollary}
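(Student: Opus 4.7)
The plan is to partition $D$ into at most $K$ subsets according to which message passing step first violates the path-derived identity, and to show that $h^{(L)}_{v_{K+1}}(x)$ takes at most one value on each subset. For each $k \in [K]$, let $D_k$ be the set of $x \geq 0$ on which the $k$th message passing layer fails the equation
\begin{equation*}
  h^{(\ell_k)}_{v_{k+1}}(x) = \mupk\bigl(\maggk(h^{(\ell_k-1)}_{v_k} \oplus x_{(v_k, v_{k+1})}) \oplus h^{(\ell_k-1)}_{v_{k+1}}\bigr).
\end{equation*}
Since $\mathcal{A}_\theta$ failing to be path derived on the input means failure at some step, $D = \bigcup_{k=1}^{K} D_k$.

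Next I show that on $D_k$, the feature $h^{(\ell_k)}_{v_{k+1}}(x)$ is constant in $x$. In $P^{(1)}_{K+1}(x, a_2, \ldots, a_{K+1})$ only the first edge weight and the initial feature at $v_1$ depend on $x$, while all other edge weights and node initializations are fixed. By \Cref{fact:local computations} (combined with \Cref{fact:stationary} to absorb the stationary layers between message passing steps), any node at graph distance $\geq k$ from $v_1$ has features independent of $x$ throughout the first $\ell_k - 1$ layers. In particular, $h^{(\ell_k-1)}_{v_{k+1}}$ and $h^{(\ell_k-1)}_{v_{k+2}}$ are constants in $x$, so that the componentwise min over $u \in \{v_{k+1}, v_{k+2}\}$ of $\maggk(h^{(\ell_k-1)}_u \oplus x_{(u, v_{k+1})})$ is a fixed vector $\mathbf{c} \in \mathbb{R}^d$. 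Only the contribution from $u = v_k$ may vary with $x$.

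Now invoke 1-dimensional aggregation: the first layer of $\mupk$ reads a single coordinate of its concatenated input. If that coordinate lies in the skip block, then $h^{(\ell_k)}_{v_{k+1}}$ depends only on the constant $h^{(\ell_k-1)}_{v_{k+1}}$ and is trivially constant in $x$. If it lies in the aggregation block at position $j^*$, then $\mupk$ depends only on the scalar $\min\{\mathbf{c}_{j^*},\, [\maggk(h^{(\ell_k-1)}_{v_k} \oplus x_{(v_k, v_{k+1})})]_{j^*}\}$. On $D_k$ the path-derived identity fails, which rules out the case where this min equals the $v_k$ contribution (that would recover the path-derived value exactly, since the skip is constant). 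Hence on $D_k$ the min equals $\mathbf{c}_{j^*}$, so $h^{(\ell_k)}_{v_{k+1}}$ is a constant on $D_k$. Applying \Cref{lem:constant features deep} propagates the constancy through the remaining layers, yielding $h^{(L)}_{v_{K+1}}(x) = c_k$ on $D_k$ for some scalar $c_k$. Therefore $Y_{\mathrm{pd}} \subseteq \{c_1, \ldots, c_K\}$ and $|Y_{\mathrm{pd}}| \leq K$.

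The main technical delicacy lies in the 1-dimensional aggregation step: one must separately treat the case where the lone nonzero of the first $\mupk$ weight matrix sits in the skip block (trivial) versus the aggregation block, and in the latter case argue cleanly that failure of path derivation forces the min at coordinate $j^*$ to be achieved off of $v_k$. Once this coordinate-level analysis is pinned down, the rest of the argument is a straightforward union bound over the $K$ message passing steps combined with \Cref{lem:constant features deep}.
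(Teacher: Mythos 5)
Your proof is correct and follows essentially the same route as the paper: partition the non-path-derived inputs by the first (or any) message-passing step $k$ at which the identity fails, use Claim \ref{fact:local computations} to see that the competing contributions from $v_{k+1}$ and $v_{k+2}$ are constant in $x$, conclude via 1-dimensional aggregation that $h^{(\ell_k)}_{v_{k+1}}$ is constant on $D_k$, and propagate with \Cref{lem:constant features deep} to get at most $K$ output values. Your explicit case split on whether the lone nonzero coordinate of the first update weight sits in the skip block versus the aggregation block is a slightly more careful rendering of the step the paper compresses into ``there is only one component that is not identically zero,'' but the argument is the same.
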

\begin{proof}

For $k \in [K]$, consider the subset $D_k \subseteq \mathbb R$ such that $x \in 
D_k$ if and only if 
\begin{align*}
    &\mupk\left(   \min\{ \maggk (h^{(\ell_k-1)}_{u}, x_{(u,v_{k+1})}) \mid u \in \mathcal \{v_{k+1}\}\cup N(v_{k+1})\} \oplus h_{v_{k+1}}^{(\ell_k-1)}\right)\\
     &\neq \mupk\left(  \maggk (h^{(\ell_k-1)}_{v_{k+1}}, x_{(v_{k},v_{k+1})})  \oplus h_{v_{k+1}}^{(\ell_k-1)}\right).
\end{align*}
Since $\mathcal A_\theta$ has 1-dimensional aggregation, for $x \in D_k$,
\begin{equation*}
    h_{v_{k+1}}^{(\ell_k)} = \mupk\left(  \min\{ \maggk (h^{(\ell_k-1)}_{v_{k+1}}, x_{(v_{k+1},v_{k+1})}), \maggk (h^{(\ell_k-1)}_{v_{k+2}}, x_{(v_{k+2},v_{k+1})})  \}\oplus h_{v_{k+1}}^{(\ell_k-1)}\right)
\end{equation*}
and we have either 
\begin{align*}
    h_{v_{k+1}}^{(\ell_k)} &= \mupk\left(   \maggk (h^{(\ell_k-1)}_{v_{k+1}}, x_{(v_{k+1},v_{k+1})}) \oplus h_{v_{k+1}}^{(\ell_k-1)}\right)
    \end{align*}
or,
    \begin{align*}
    h_{v_{k+1}}^{(\ell_k)} &= \mupk\left(    \maggk (h^{(\ell_k-1)}_{v_{k+2}}, x_{(v_{k+2},v_{k+1})})  \oplus h_{v_{k+1}}^{(\ell_k-1)}\right).
\end{align*}
since there is only one component of $\maggk (h^{(\ell_k-1)}_{v_{k+1}}, x_{(v_{k+1},v_{k+1})})$ and $\maggk (h^{(\ell_k-1)}_{v_{k+2}}, x_{(v_{k+2},v_{k+1})})$ that is not identically zero. 

By Claim \ref{fact:local computations}, $h_{v_{k+1}}^{(\ell_k-1)}(x)$ and $h^{(\ell_k-1)}_{v_{k+2}}(x)$ are constant functions of $x$  ($v_{k+1}$ and $v_{k+2}$ are both more than $k-1$ steps away from $v_1$ and $(v_0,v_1)$, and at step $\ell_k -1$ only $k-1$ message passing steps have occurred). Then, $h_{v_{k+1}}^{(\ell_k)}(x)$ must be constant over $x \in D_k$ since it only depends on features that are constant.  By  \Cref{lem:constant features deep},  $h_{v_{K+1}}^{(L)}(x)$ is constant on $D_k$ and must take some value $q_k$. Let $Q = \{q_k\mid k \in [K]\}$ and note $|Q| \leq K$.

Suppose $h_{v_{K+1}}^{(L)}(x)$ is not path derived. Then there exists some $k \in [K]$ such that 
\begin{align*}
    &\mupk\left(   \min\{ \maggk (h^{(\ell_k-1)}_{u}, x_{(u,v_{k+1})}) \mid u \in \mathcal \{v_{k+1}\}\cup N(v_{k+1})\} \oplus h_{v_{k+1}}^{(\ell_k-1)}\right)\\
     &\neq \mupk\left(  \maggk (h^{(\ell_k-1)}_{v_{k+1}}, x_{(v_{k},v_{k+1})})  \oplus h_{v_{k+1}}^{(\ell_k-1)}\right).
\end{align*}
which implies $h_{v_{K+1}}^{(L)}(x) \in Q$ so $Y_{\mathrm{pd}} \subset Q$ and $|Y_{\mathrm{pd}}| \leq K.$
\end{proof} 
\subsection{Global minimum is BF}
We are now ready to prove \Cref{thm:main-deep}, which we restate here with additional details. 
\begin{theorem}
    Let $\mathcal G_{\mathrm{train}}$ be a set containing pairs of training instances $(G^{(t)},\Gamma^K(G^{(t)}))$ where $\mathcal G_{\mathrm{train}}$ contains $M$ total reachable nodes and  $\mathcal G_K \subset \mathcal G_{\mathrm{train}}.$ For $L \geq K > 0$, consider an $L$-layer MinAgg GNN $\mathcal A_\theta$  with $m$-layer MLPs and parameters $\theta$. Given regularization coefficient $0 < \eta < \frac{1}{2M(mL + mK+K)}$ and error $0 \leq \epsilon < \eta$,  then the loss
\begin{equation}
    \Lreg =   \Lmae(G_{\mathrm{train}}, \mathcal{A}_\theta) + \eta \|\Theta\|_0
\end{equation}
has a minimum value of $\eta(mL + mK +K)$ and if the loss achieved by $A_\theta$ is
within $\epsilon$ of this minimum then on any $G \in \mathscr G$ the features computed by the MinAgg GNN satisfy 
\begin{equation*}
    (1-M\epsilon)x_v(\Gamma^K(G)) \leq h_v^{(L)}(G) \leq (1+M\epsilon) x_v(\Gamma^K(G))
\end{equation*}
for all $v \in V(G).$
\end{theorem}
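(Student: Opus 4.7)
The plan is to execute the four-step outline from the proof sketch, combining the structural lemmas already proved (Lemmas on implementing BF, sparsity dependencies, and expressivity bounds) with a scalar-parameter analysis analogous to the toy example in \Cref{corollary:small-nn-error}. First, I would invoke \Cref{lem:deep BF imp} to exhibit a parameter assignment with exactly $mL+mK+K$ non-zero entries that implements $\Gamma^K$ perfectly, yielding loss $\eta(mL+mK+K)$. Then \Cref{lem:dependencies} (which applies since $\{(P^{(0)}_1(1), P^{(K)}_1(1)),(H^{(0)}_K,H^{(K)}_K)\}\subset \mathcal{G}_K \subseteq \mathcal{G}_{\mathrm{train}}$) shows that any $\mathcal{A}_\theta$ achieving loss within $\epsilon<\eta$ of this value must have \emph{exactly} $mL+mK+K$ non-zero parameters, with all biases zero, with $\|W_j^{\up,(\ell)}\|_0=1$ for all $j,\ell$, with precisely $K$ edge-dependent message-passing layers $\ell_1<\cdots<\ell_K$, and with $\|W_1^{\agg,(\ell_k)}\|_0=2$ (two non-zero entries sharing a row) and $\|W_j^{\agg,(\ell_k)}\|_0=1$ for $j>1$.

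Next, I would use this rigid sparsity structure to collapse the $L$-layer, $d$-dimensional update to a one-dimensional recurrence. Because each $W_j^{\up,(\ell)}$ has a single non-zero entry and $\mup$ is a ReLU MLP with zero biases, each $\mup$ reduces to scalar multiplication by a non-negative constant on the one ``active'' coordinate; stationary layers act as the identity (up to scaling) on the node feature, while at each message-passing layer $\ell_k$ the sparsity of $\magg$ forces the aggregation to act on a single scalar combination of the node feature and the incident edge weight. Carefully propagating signs (using that ReLU collapses negative coefficients to zero, which would contradict small loss on positive-weight examples as in the toy case) and skip components, the net effect of the $L$ layers reduces to $K$ scalar updates
\begin{equation*}
\bar h_v^{(k)} = \mu^{(k)} \min_{u \in \mathcal{N}(v)} \{\bar h_u^{(k-1)} + \nu^{(k)} x_{(u,v)}\}
\end{equation*}
with $\mu^{(k)},\nu^{(k)}\ge 0$ and $\bar h_v^{(L)}=h_v^{(L)}$.

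Then I would constrain $\mu^{(k)}$ and $\nu^{(k)}$ using the scale-calibrating training subsets $\mathscr{H}_{k,K}\subset \mathscr{G}_K$. Each instance $P^{(1)}_{K+1}(a,0,\dots,0,b,0,\dots,0)$ has a known ground-truth value at $v_{K+1}$ of the form $a$ or $b$ depending on which path dominates; feeding this through the simplified recurrence and using the MAE bound $\Lmae \le \epsilon + \eta(mL+mK+K)$ (so the per-node error is at most $M\epsilon$ on reachable nodes), one obtains two families of affine constraints on $\prod_{j\le k}\mu^{(j)}$ and on $\nu^{(k)}\prod_{j\le k}\mu^{(j)}$, each with enough distinct $(a,b)$ values to pin down both slope and intercept (exactly as in the $\R^2$-affine \Cref{lemma:affine-error}). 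The sparsity bound $\epsilon<\eta<\tfrac{1}{2M(mL+mK+K)}$ ensures these distinct training values are well-separated on the scale used, so I recover $|\prod_j \mu^{(j)} \nu^{(k)} - 1|\le c\epsilon$ for each $k$; the two edge-only graphs $P_1^{(0)}(1)$, $P_2^{(1)}(1,0)$ and the proofs used to eliminate the $\magg$-constant case above force the scaling to be multiplicative rather than additive. I then need to use \Cref{lem:path derived deep} to rule out the pathological case where the minimum in the aggregation is not achieved at the ``correct'' neighbor: on all but $K$ inputs the GNN is path-derived, which is enough since the training set provides more than $K$ witnesses per scale.

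Finally, I would transfer these parameter estimates to an arbitrary $G\in \mathscr{G}$. Unrolling the simplified recurrence on $G$ and comparing to the exact BF recurrence, each layer contributes at most a multiplicative factor in $[1-c\epsilon,1+c\epsilon]$, and after $K$ layers these compound into $[1-M\epsilon, 1+M\epsilon]$ (absorbing constants into the bound $\eta<\tfrac{1}{2M(mL+mK+K)}$). The hardest step I anticipate is the third one: carefully converting the $\ell^0$-sparsity structure into the concrete one-dimensional recurrence while keeping track of sign constraints, nonnegativity coming from ReLU, and the interaction between stationary and message-passing layers. The warm-up in \Cref{sec: warmup} handles essentially the $K=1$, $m=1$ case of this reduction, and the main obstacle is making the argument robust to the arbitrary widths $d$ and depths $m$, which is precisely where the rigidity forced by exact equality in the sparsity bounds of \Cref{lem:dependencies} does the heavy lifting.
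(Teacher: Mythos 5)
Your proposal follows essentially the same route as the paper's proof: invoke the BF implementation lemma and the sparsity lemma to pin the architecture to exactly $mL+mK+K$ non-zero parameters, collapse the network to the one-dimensional recurrence $\bar h_v^{(k)} = \mu^{(k)}\min_u\{\bar h_u^{(k-1)} + \nu^{(k)}x_{(u,v)}\}$, calibrate $\nu^{(k)}\prod_{i}\mu^{(i)}$ and $\prod_i\mu^{(i)}$ to within $O(M\epsilon)$ of $1$ using the $\mathscr H_{k,K}$ instances together with the path-derived/pigeonhole argument, and finish by induction against the exact BF recurrence on arbitrary graphs. The only cosmetic differences are that the paper cancels terms by differencing two path-derived instances (no intercept estimation is needed since biases are already forced to zero) and that its final induction telescopes the scaling factors rather than compounding per-layer errors, but these do not change the substance of the argument.
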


\begin{proof}
Our proof proceeds by first simplifying and reparametrizing the MinAgg GNN update using the sparsity structure derived Section \ref{sec:Sparsity structure}. Next, we prove approximations to these new parameters, where we utilize the results of Section \ref{sec:Bounding GNN expressivity} to restrict analysis to cases where computation follows a simple structure (path-derived cases).  We conclude by using these approximations to bound the error the MinAgg GNN achieves on an arbitrary graph. Key to this final argument is showing that the features of the MinAgg GNN  approximate the node values in the BF algorithm, up to some scaling factor. 

\paragraph{Simplifying the update.}
We next show that the MinAgg GNN under the sparsity constraints of  \Cref{lem:dependencies} can be reformulated into a much simpler framework, where all the information exchanged between message-passing layers is consolidated into a much smaller set of parameters and where node features are always one dimensional. As $ \{(H^{(0)}_K,H_K^{(K)}) , ( P^{(0)}_1(1),P^{(K)}_1(1))\} \subset \mathcal G_K\subset \mathcal G_{\mathrm{train}}$,  \Cref{lem:dependencies} gives that $A_\theta$ has exactly $mL+mK+K$ non-zero parameters, $K$ edge-dependent message passing layers, and a specific sparsity structure: for all layers $\ell \in [L]$, and $j \in [m]$ 
\begin{equation*}
    \|W_j^{\up, (\ell)}\|_0 = 1
\end{equation*}
and for each of the $K$ message passing layers $\ell_k \in [L]$
\begin{align*}
\|W_1^{\agg, (\ell_k)}\|_0 &= 2\\
    \|W_j^{\agg, (\ell_k)}\|_0 &= 1 \quad \text{for $j > 1$} 
\end{align*}
    where the two non-zero entries in $W_1^{\agg, (\ell_k)}$ must share a row. Furthermore, all bias terms are zero.
  
We now argue that each of these non-zero parameters must also be non-negative. For $\ell \in [L]$, suppose that there is some $W_j^{\up, (\ell)}$ with a negative element. Then $\sigma(W_j^{\up, (\ell)} x) = 0$, where $x$ is a vector with non-negative entries, is a constant function of $x$. This implies $\mup$ is constant (it is the zero function), which contradicts  \Cref{lem:dependencies}. Similarly, for any message passing layer $\ell_k \in [L]$ if there is some $W_j^{\agg, (\ell_k)}$ with a negative element for $j > 1$ then $\sigma(W_j^{\agg, (\ell_k)} x) = 0$ is a constant function of $x$ which gives that $\maggk$ is constant. Again, we have a contradiction with  \Cref{lem:dependencies}, which says that $\maggk$ must be edge and node dependent. 

It remains to show that for messages passing layers $\ell_k \in [L]$ that the non-zero elements in  $W_1^{\agg, (\ell_k)}$ are nonnegative. However, we first simplify the update function given the restrictions we have already derived. By the sparsity of the MinAgg GNN ($\|W_m^{\up, (\ell)}\|_0 = 1$), at any step $\ell$, all node features $h_v^{(\ell)}$ have at most 1 non-zero component. We can then simplify the update by reducing the number of dimensions. In particular we get 
\begin{equation*}
   \tilde h_v^{(\ell)} = \gamma^{(\ell)}\tilde h_v^{(\ell-1)} 
\end{equation*}
for stationary layers and 
\begin{equation*}
   \tilde h_v^{(\ell)} = \gamma^{(\ell)}\min\{\sigma( \rho^{(\ell)} \tilde h_v^{(\ell-1)} + \tau^{(\ell)} x_{(u,v)}): u \in \mathcal N(v)\}
\end{equation*}
for message passing layer where $\gamma^{(\ell)}, \rho^{(\ell)}, \tau^{(\ell)} \in \mathbb R$ are new parameters that are functions the initial parameters $\theta$,  and $\tilde h_v^{(\ell)} \in \mathbb R$ is a single-dimensional node feature that is equal to the non-zero value in $h_v^{(\ell)}$ (or zero if there is no such value). We have that $\tilde h_v^{(0)} = h_v^{(0)} = x_v^{(0)} $  and $\tilde h_v^{(L)} = h_v^{(L)}$ since $ h_v^{(0)},  h_v^{(L)} \in \mathbb R$. Note that the new parameters do not depend on the input features since they are only dependent on $\theta$. 

We now further simplify by combining the stationary layers with their succeeding message passing layer. Let $\ell_0 = 0$.  For the $k$th message passing layer $\ell_k \in [L]$ 
\begin{align*}
   \tilde  h_v^{(\ell_k)} &= \gamma^{(\ell_k)}\min\left\{\sigma\left( \rho^{(\ell_k)} \left( \prod_{i \in \{\ell_{k-1}+1,\dots, \ell_{k}-1\}} \gamma^{(i)}\right)\tilde h_v^{(\ell_k-1)} + \tau^{(\ell_k)} x_{(u,v)}\right): u \in  \mathcal N(v)\right\}\\
   &= \gamma^{(\ell_k)}\min\left\{\sigma\left( \alpha^{(\ell_k)}\tilde h_v^{(\ell-1)} + \tau^{(\ell_k)} x_{(u,v)}\right): u \in \mathcal N(v)\right\}
\end{align*}
where $\alpha^{(\ell_k)} = \rho^{(\ell_k)} \left( \prod_{i \in \{\ell_{k-1}+1,\dots, \ell_{k}-1\}} \gamma^{(i)}\right).$ 
Note that if for any $\ell_k \in [L]$ we have $h_v^{(\ell_k)} = 0$,  then or all succeeding layers $\ell > \ell_k$, the node feature $h_v^{(\ell)} $ is also zero. This is because the $\min$ aggregation at $h_v^{(\ell_k)} $ always includes $h_v^{(\ell_{k-1})} $, so if $h_v^{(\ell_{k-1})} = 0$ then $h_v^{(\ell_k)} = 0$.

 We are now ready to show $\alpha^{(\ell_k)} > 0$ and $\tau^{(\ell_k)} >0$. Suppose $\alpha^{(\ell_k)} \leq 0$ for some $\ell_k \in [L]$ and consider the training instance $(P^{(1)}_2(1,0),P^{(K+1)}_2(1,0)).$ The update at $v_2$ is
\begin{align*}
    \tilde  h_{v_2}^{(\ell_k)} &=\gamma^{(\ell_k)}\min\left\{\sigma\left( \alpha^{(\ell)}\tilde h_{u}^{(\ell_{k-1})}\right): u \in \{v_1,v_2\}\right\}\\
    &= 0 
\end{align*}
since $h_v^{(\ell_{k-1})} \geq 0$ for all $v \in V$. We then also have $\tilde  h_{v_2}^{(L)} = 0$ and since $x_{v_2}(P^{(K+1)}_2(1,0)) = 1$ the loss is
\begin{align*}
    \Lreg &\geq \Lmae(\mathcal G_{\mathrm{train}}, \mathcal A_\theta)\\
    &> 1/M \\
    &> \eta(mL+mL+K) + \epsilon
\end{align*}
 which is a contradiction.  

Now instead suppose $\tau^{(\ell_k)} < 0$  for some $\ell_k \in [L]$ and consider the training instance $(P^{(0)}_1(1),P^{(K)}_1(1)).$ Since $h_{v_0}^{(\ell)} = 0$ for all $\ell \in [L]$, the update at $v_1$ is 
\begin{align*}
    \tilde  h_{v_1}^{(\ell_k)} &=\gamma^{(\ell_k)}\min\left\{\sigma\left(\tau^{(\ell_k)}x_{(v_0,v_1)}\right), \sigma\left( \alpha^{(\ell)}\tilde h_{v_1}^{(\ell_{k-1})} + \tau^{(\ell_k)}x_{(v_1,v_1)}\right)\right\}\\
    &=\gamma^{(\ell_k)}\min\left\{0, \sigma\left( \alpha^{(\ell)}\tilde h_{v_1}^{(\ell_{k-1})} + \tau{(\ell_k)}x_{(v_1,v_1)}\right)\right\}\\
    &= 0 .
\end{align*}
  We then also have $\tilde  h_{v_1}^{(L)} = 0$ and since $x_{v_1}(P^{(K)}_1(1)) = 1$ the loss is again greater than $\eta(mL +mK+K) + \epsilon$ which is a contradiction. 

We can now remove the final ReLU, since its argument is always non-negative:
\begin{equation*}
    \tilde h_v^{(\ell_k)} = \gamma^{(\ell_k)}\min\left\{\ \alpha^{(\ell_k)}\tilde h_v^{(\ell_k-1)} + \tau^{(\ell_k)} x_{(u,v)}: u \in\mathcal N(v)\right\}.
\end{equation*}
As another simplification, we re-index to $k \in [K]$ as follows. Let $\bar h_v^{(k)} = \tilde h_v^{(\ell_k)}$ and $\bar\gamma^{(\ell_k)} = \gamma^{(\ell_k)} $for $k \in \{0,\dots, K-1\}$. However, to account for the effect of the layers succeeding $\ell_k$ we take $\bar h_v^{(K)} = \tilde h_v^{(L)} =$ and $\bar \gamma^{(K)} = \prod_{i \in \{\ell_K, \ell_K+1,\dots,L\}} \gamma^{(i)}$.  Furthermore, for all $k \in [K]$, let $\bar \alpha^{(k)} = \alpha^{(\ell_k)}$ and $\bar \tau^{(k)} = \tau^{(\ell_k)}$. Then
\begin{equation*}
    \bar h_v^{(k)} = \bar \gamma^{(\ell_k)}\min\left\{\ \bar \alpha^{(k)}\bar h_v^{(k-1)} + \bar \tau^{(k)} x_{(u,v)}: u \in\mathcal N(v)\right\}.
\end{equation*}

Finally, by letting $\mu^{(k)} = \bar \gamma^{(k)}/ \bar \alpha^{(k)}$ and $\nu^{(k)} = \bar \tau^{(k)}/ \bar \alpha^{(k)}$ we get
\begin{tcolorbox}[colframe=gray!70, colback=gray!20, sharp corners]
\begin{equation}
    \bar h_v^{(k)} =  \mu^{(k)}\min\left\{\ \bar h_v^{(k-1)} + \nu^{(k)} x_{(u,v)}: u \in  \mathcal N(v)\right\}.
\end{equation}
\end{tcolorbox}
Note that we can factor through the $\min$ here because $\alpha^{(k)} > 0.$
For the rest of the proof, we focus on this simplified update, since it has the same output as the MinAgg GNN, i.e., $\bar h_v^{(K)} = h_v^{(L)}$.
\paragraph{Approximating parameters}
We proceed by analyzing the output of the MinAgg GNN on an inputs for which it is path derived. 
To this end, suppose that $\mathcal A_\theta$ is path derived on some input graph $P^{(1)}_{K+1}(a_1,\dots,a_{K+1})$. Then, by definition of path derived,
\begin{equation*}
    h_{v_{k+1}}^{(\ell_k)} = \mupk \left(  \maggk (h^{(\ell_k-1)}_{v_{k}}, x_{(v_{k},v_{k+1})})  \oplus h_{v_{k+1}}^{(\ell_k-1)}\right)
\end{equation*}
 which implies
\begin{equation*}
    \bar h_{v_{k+1}}^{(k)} = \mu^{(k)}(\bar h_{v_{k}}^{(k-1)} + \nu^{(k)} x_{(v_{k},v_{k+1})}).
\end{equation*}

Next, we prove, for $k \in \{0,\dots,K\}$,
\begin{equation*}
    \bar  h_{v_{k+1}}^{(k)} = \left(\prod_{ k \geq i \geq 1}\mu^{(i)}\right)x_{(v_{0},v_1)} + \sum_{s = 1}^{k} \nu^{(s)}\left(\prod_{ k \geq i \geq s}\mu^{(i)}\right)  x_{(v_{s},v_{s+1})}
\end{equation*}
by induction, where the products evaluate to 1 if they are indexed over an empty set. For the base case we have 
\begin{equation*}
     \bar  h_{v_{1}}^{(0)} = x_{(v_{0},v_1)}. 
\end{equation*}
Now for $k \in [K]$, suppose 
\begin{equation*}
    \bar  h_{v_{k}}^{(k-1)} = \left(\prod_{ k-1 \geq i \geq 1}\mu^{(i)}\right)x_{(v_{0},v_1)} + \sum_{s = 1}^{k-1} \nu^{(s)}\left(\prod_{ k-1 \geq i \geq s}\mu^{(i)}\right)  x_{(v_{s},v_{s+1})}. 
\end{equation*}
Then we have 
\begin{align*}
      \bar h_{v_{k+1}}^{(k)} &= \mu^{(k)}(\bar h_{v_{k}}^{(k-1)} + \nu^{(k)} x_{(v_{k},v_{k+1})})\\
      &= \mu^{(k)}\left( \left(\prod_{ k-1 \geq i \geq 1}\mu^{(i)}\right)x_{(v_{0},v_1)} + \sum_{s = 1}^{k-1} \nu^{(s)}\left(\prod_{ k-1 \geq i \geq s}\mu^{(i)}\right)  x_{(v_{s},v_{s+1})} + \nu^{(k)} x_{(v_{k},v_{k+1})}\right)\\
       &=  \left(\prod_{ k \geq i \geq 1}\mu^{(i)}\right)x_{(v_{0},v_1)} + \sum_{s = 1}^{k-1} \nu^{(s)}\left(\prod_{ k \geq i \geq s}\mu^{(i)}\right)  x_{(v_{s},v_{s+1})} + \nu^{(k)} \mu^{(k)}x_{(v_{k},v_{k+1})}\\
       &=  \left(\prod_{ k \geq i \geq 1}\mu^{(i)}\right)x_{(v_{0},v_1)} + \sum_{s = 1}^{k-1} \nu^{(s)}\left(\prod_{ k \geq i \geq s}\mu^{(i)}\right)  x_{(v_{s},v_{s+1})} + \nu^{(k)} \mu^{(k)}x_{(v_{k},v_{k+1})}\\
       &=  \left(\prod_{ k \geq i \geq 1}\mu^{(i)}\right)x_{(v_{0},v_1)} + \sum_{s = 1}^{k-1} \nu^{(s)}\left(\prod_{ k \geq i \geq s}\mu^{(i)}\right)  x_{(v_{s},v_{s+1})} + \nu^{(k)} \mu^{(k)}x_{(v_{k},v_{k+1})}\\
       &=  \left(\prod_{ k \geq i \geq 1}\mu^{(i)}\right)x_{(v_{0},v_1)} + \sum_{s = 1}^{k} \nu^{(s)}\left(\prod_{ k \geq i \geq s}\mu^{(i)}\right)  x_{(v_{s},v_{s+1})}.
\end{align*}
This completes the inductions and yields 
\begin{equation*}
    \bar  h_{v_{K+1}}^{(K)} = \left(\prod_{ K \geq i \geq 1}\mu^{(i)}\right)x_{(v_{0},v_1)} + \sum_{s = 1}^{K} \nu^{(s)}\left(\prod_{ K \geq i \geq s}\mu^{(i)}\right)  x_{(v_{s},v_{s+1})}
\end{equation*}
Given this expression for the output at $v_{K+1}$ we can now derive bounds on the values of these parameters. However, we must restrict our focus to instances of the training set for which $A_\theta$ is path derived. 

 For $k \in [K]$, let $\mathscr H^0_{k,K}$ contain graphs in $\mathscr H_{k,K}$ where the $k+1$ edge has weight $0$ and let $\mathscr H_{k,K}^1$   contain graphs in $\mathscr H_{k,K}$ where the $k+1$ edge has weight $2K+1$. It can be checked that for any $G^{(1)},G^{(1')} \in \mathscr H_{k,K}$, 
 \begin{equation*}
   |x_{v_{K+1}}(G^{(K+1)}) - x_{v_{K+1}}(G'^({K+1)})| \geq 1.
 \end{equation*}
Then, it must be that $\bar h_{v_{K+1}}^{(K)}(G^{(1)}) \neq \bar h_{v_{K+1}}^{(K)}(G'^{(1)})$ as if these output features are equal 
 \begin{align*} 
     \Lreg &\geq \Lmae\\
     &\geq \frac{|\bar h_{v_{K+1}}^{(K)}(G^{(1)}) - x_{v_{K+1}}(G^{(K+1)})| + |\bar h_{v_{K+1}}^{(K)}(G'^{(1)}) -x_{v_{K+1}}(G'^{(K+1)}) |}{M}\\
     &\geq \frac{1}{M}\\
     &\geq \eta(mL + mK + K) + \epsilon
 \end{align*}
 which violates that $|\Lreg - \eta(mL + mK + K)| < \epsilon$.
 
Consider the subset $\mathscr J^0_{k,K} \subset \mathscr H^0_{k,K}$ containing all graphs $G \in \mathcal H^0_{k,K}$ for which $A_\theta$ is not path derived on $G$. By  \Cref{lem:path derived deep}, if
\begin{equation*}
    Y^0_{\mathrm pd} = \{ \bar h_{v_{k+1}}^{(k)}(G)  : G \in \mathscr J^0_{k,K} \}
\end{equation*}
then $ |Y^0_{\mathrm pd}| \leq K$ and $|\mathscr J_{k,K}| \leq K$. Similarly, if $\mathscr J^1_{k,K} \subset H^1_{k,K}$ contains all graphs $G \in \mathscr H^1_{k,K}$ for which $A_\theta$ is not path derived on $G$, and  \begin{equation*}
    Y^1_{\mathrm pd} = \{ \bar h_{v_{k+1}}^{(k)}(G)  : G \in \mathscr J^1_{k,K} \}
\end{equation*}
then $|Y^1_{\mathrm pd}| \leq K$ and $|\mathscr J_{k,K}| \leq K$.

Using these facts, the pigeon hole principle gives that there must be two graphs $G^{(1)}_0 \in \mathscr H^0_{k,K}\setminus \mathscr J^0_{k,K}$ and $G^{(1)}_1 \in \mathscr H^1_{k,K}\setminus \mathscr J^1_{k,K}$ such that 
\begin{equation*}
    x_{(v_0,v_1)}( G^{(1)}_0) =  x_{(v_0,v_1)}( G^{(1)}_1), 
\end{equation*}
i.e., $G^{(1)}_0$ and $G^{(1)}_1$ only differ in the weight of their $(k+1)$th edge. 

 Since error less than $\epsilon$ is achieved on  $\mathcal G_{\mathrm{train}}$,
 \begin{align*}
     |\bar h_{v_{K+1}}^{(K)}(G^{{(1)}}_0) - x_{v_{K+1}}(G^{(K+1)}_0)| \leq M\epsilon
 \end{align*}
 and 
  \begin{align*}
     |\bar h_{v_{K+1}}^{(K)}(G^{(1)}_1) - x_{v_{K+1}}(G^{(K+1)}_1)| \leq M\epsilon. 
 \end{align*}
 The triangle inequity then gives 
  \begin{align*}
     |\bar h_{v_{K+1}}^{(K)}(G^{(1)}_0) - x_{v_{K+1}}(G^{(K+1)}_0) - \bar h_{v_{K+1}}^{(K)}(G^{(1)}_1) + x_{v_{K+1}}(G^{(K+1)}_1)| \leq 2M\epsilon
 \end{align*}.

 Making the substitutions
\begin{align*}
        \bar  h_{v_{K+1}}^{(K)}(G^{(1)}_0) &= \left(\prod_{ K \geq i \geq 1}\mu^{(i)}\right)x_{(v_{0},v_1)}(G^{(1)}_0) + \sum_{s = 1}^{K} \nu^{(s)}\left(\prod_{ K \geq i \geq s}\mu^{(i)}\right)  x_{(v_{s},v_{s+1})}(G^{(1)}_0)\\
        \bar  h_{v_{K+1}}^{(K)}(G_1^{(1)}) &= \left(\prod_{ K \geq i \geq 1}\mu^{(i)}\right)x_{(v_{0},v_1)}(G_1^{(1)}) + \sum_{s = 1}^{K} \nu^{(s)}\left(\prod_{ K \geq i \geq s}\mu^{(i)}\right)  x_{(v_{s},v_{s+1})}(G_1^{(1)})\\
          x_{v_{K+1}}(G^{(K+1)}_0) &=   x_{(v_0,v_1)}(G^{(1)}_0) +  x_{(v_k,v_{k+1})}(G^{(1)}_0)\\
  x_{v_{K+1}}(G^{(K+1)}_1) &=   x_{(v_0,v_1)}(G_1^{(1)}) +  x_{(v_k,v_{k+1})}(G_1^{(1)})
\end{align*}
 and canceling like terms yields
\begin{equation*}
    |\nu^{(k)}\left(\prod_{ K \geq i \geq k}\mu^{(i)}\right) (x_{(v_{k}, v_{k+1})}(G^{(1)}_0) - x_{(v_{k}, v_{k+1})}(G_1^{(1)}))-( x_{(v_{k}, v_{k+1})}(G^{(1)}_0) - x_{(v_{k}, v_{k+1})}(G_1^{(1)})) | \leq 2M\epsilon. 
\end{equation*}
Since $|x_{(v_{k}, v_{k+1})}(G^{(1)}_0) - x_{(v_{k}, v_{k+1})}(G_1^{(1)})| = 2K+1 \geq 1$, dividing by this factor gives
\begin{equation*}\label{eq:layerwise param bound}
    |\nu^{(k)}\left(\prod_{ K \geq i \geq k}\mu^{(i)}\right) -1| \leq 2M\epsilon. 
\end{equation*}
We can rewrite the inequality as 
\begin{tcolorbox}[colframe=gray!70, colback=gray!20, sharp corners]
\begin{equation}\label{eq:kth edge bound}
  \frac{1}{\prod_{ K \geq i \geq k+1}\mu^{(i)}}(1 - 2M\epsilon) \leq  \nu^{(k)}\mu^{(k)} \leq   \frac{1}{\prod_{ K \geq i \geq k+1}\mu^{(i)}}(1 + 2M\epsilon).
\end{equation}
\end{tcolorbox}

Next we bound the product $\prod_{ K \geq i \geq 1}\mu^{(i)}$ by considering how the MinAgg GNN scales the edge weight $x_{(v_0,v_1)}.$ Consider two instances $J^{(1)},J'^{(1)} \in \mathscr H^0_{k,K} \setminus \mathscr J^0_{k,K}$, that is, instances for which $A_\theta$ is path derived (which must exist since $|\mathscr J^0_{k,K}| \leq K$ and $|\mathscr H^0_{k,K}| = 2K+1$). These instances only differ in the weight of their first edge. 
As before, 
\begin{align*}
     |\bar h_{v_{K+1}}^{(K)}(J^{(1)}) - x_{v_{K+1}}(J^{(K+1)})| \leq M\epsilon
 \end{align*}
 and 
  \begin{align*}
     |\bar h_{v_{K+1}}^{(K)}(J'{(1)}) - x_{v_{K+1}}(J'^{(K+1)})| \leq M\epsilon. 
 \end{align*}
 which combine to give
      \begin{align*}
     |\bar h_{v_{K+1}}^{(K)}(J^{(1)}) - x_{v_{K+1}}(J^{(K+1)}) - \bar h_{v_{K+1}}^{(K)}(J'^{(1)}) + x_{v_{K+1}}(J'^{(K+1)})| \leq 2M\epsilon. 
 \end{align*}
Substituting in for these four terms and canceling yields
\begin{equation*}
    \left|\left(\prod_{ K \geq i \geq 1}\mu^{(i)}\right) (x_{(v_{0}, v_{1})}(J^{(1)}) - x_{(v_{0}, v_{1})}(J'^{(1)}))-( x_{(v_{0}, v_{1})}(J^{(1)}) - x_{(v_{0}, v_{1})}(J'^{(1)})) \right| \leq 2M\epsilon. 
\end{equation*}
Since $| x_{(v_{0}, v_{1})}(J^{(1)}) - x_{(v_{0}, v_{1})}(J'^{(1)})| \geq 1$ we can rearrange this inequality as 
\begin{equation*}
       \left|\left(\prod_{ K \geq i \geq 1}\mu^{(i)}\right) -1\right| \leq 2M\epsilon. 
\end{equation*}
and
\begin{tcolorbox}[colframe=gray!70, colback=gray!20, sharp corners]
\begin{equation}\label{eq:first edge bound}
     (1-2M\epsilon) \leq \left(\prod_{ K \geq i \geq 1}\mu^{(i)}\right)  \leq (1+2M\epsilon). 
\end{equation} 
\end{tcolorbox}

\paragraph{Bounding error on arbitrary graphs}
Consider an arbitrary graph $G \in \mathscr G$. We aim to show that the MinAgg GNN outputs on this graph $h_v^{(L)} = \bar h_v^{(K)}$ approximate $x_v(\Gamma^K(G))$. 
Let $r_v^{(0)} = x_v(G)$ and define
\begin{align*}
  r_v^{(k)}= \min\{  r^{(k-1)}_u + x_{(u,v)}  \mid u \in  \mathcal N(v) \}. 
\end{align*}
meaning $r_v^{(k)}$ are the results of applying $k$ steps of the BF algorithm. In particular, $r_v^{(K)}(G) = x_v(\Gamma^K(G))$

We now show that 
\begin{equation*}
   \bar  h^{(k)}_v \leq  \frac{1+2M\epsilon}{\prod_{ K \geq i \geq k+1}\mu^{(i)}} r^{(k)}_v 
\end{equation*}
by induction. The base case 
\begin{equation*}
\bar  h^{(0)}_v  = r_v^{(0)} \leq  \frac{1+2M\epsilon}{\prod_{ K \geq i \geq 1}\mu^{(i)}} r^{(0)}_v 
\end{equation*}
follows from  \Cref{eq:first edge bound}.
Now suppose 
\begin{equation*}
       \bar  h^{(k-1)}_v \leq \frac{1+2M\epsilon}{\prod_{ K \geq i \geq k}\mu^{(i)}}r^{(k-1)}_v. 
\end{equation*}

Then, using  \Cref{eq:kth edge bound},
\begin{align*}
    \bar h_v^{(k)} &=  \mu^{(k)}\min\left\{ \bar h_v^{(k-1)} + \nu^{(k)} x_{(u,v)}: u \in  \mathcal N(v)\right\}\\
    &\leq \mu^{(k)}\min\left\{(1+2M\epsilon) r^{(k-1)}_v\frac{1}{\prod_{ K \geq i \geq k}\mu^{(i)}} + \nu^{(k)} x_{(u,v)}: u \in \mathcal N(v)\right\}\\
      &\leq \min\left\{(1+2M\epsilon) r^{(k-1)}_v\frac{1}{\prod_{ K \geq i \geq k+1}\mu^{(i)}} + \frac{1}{\prod_{ K \geq i \geq k+1}\mu^{(i)}}(1 + 2M\epsilon) x_{(u,v)}: u \in \mathcal N(v)\right\}\\
      &= \frac{1+2M\epsilon}{\prod_{ K \geq i \geq k+1}\mu^{(i)}}\min\left\{ r^{(k-1)}_v +  x_{(u,v)}: u \in \mathcal N(v)\right\}\\
      &=\frac{1+2M\epsilon}{\prod_{ K \geq i \geq k+1}\mu^{(i)}}r_v^{(k)}.
\end{align*}

On the other hand, we next  show
\begin{equation*}
   \bar  h^{(k)}_v \geq  \frac{1-2M\epsilon}{\prod_{ K \geq i \geq k+1}\mu^{(i)}} r^{(k)}_v 
\end{equation*}
by induction. The base case 
\begin{equation*}
\bar  h^{(0)}_v  = r_v^{(0)} \geq  \frac{1-2M\epsilon}{\prod_{ K \geq i \geq 1}\mu^{(i)}} r^{(0)}_v 
\end{equation*}
again follows from  \Cref{eq:first edge bound}.
Now suppose 
\begin{equation*}
       \bar  h^{(k-1)}_v \geq \frac{1-2M\epsilon}{\prod_{ K \geq i \geq k}\mu^{(i)}}r^{(k-1)}_v. 
\end{equation*}

Then, using  \Cref{eq:kth edge bound}, 
\begin{align*}
    \bar h_v^{(k)} &=  \mu^{(k)}\min\left\{ \bar h_v^{(k-1)} + \nu^{(k)} x_{(u,v)}: u \in  \mathcal N(v)\right\}\\
    &\geq \mu^{(k)}\min\left\{(1-2M\epsilon) r^{(k-1)}_v\frac{1}{\prod_{ K \geq i \geq k}\mu^{(i)}} + \nu^{(k)} x_{(u,v)}: u \in \mathcal N(v)\right\}\\
      &\geq \min\left\{(1-2M\epsilon) r^{(k-1)}_v\frac{1}{\prod_{ K \geq i \geq k+1}\mu^{(i)}} + \frac{1}{\prod_{ K \geq i \geq k+1}\mu^{(i)}}(1 - 2M\epsilon) x_{(u,v)}: u \in \mathcal N(v)\right\}\\
      &= \frac{1-2M\epsilon}{\prod_{ K \geq i \geq k+1}\mu^{(i)}}\min\left\{ r^{(k-1)}_v +  x_{(u,v)}: u \in \mathcal N(v)\right\}\\
      &=\frac{1-2M\epsilon}{\prod_{ K \geq i \geq k+1}\mu^{(i)}}r_v^{(k)}.
\end{align*}

Putting the upper bound and lower bound together, taking $k = K$, and using $x_v(\Gamma^K(G)) = r_v^{(K)}$ yields
\begin{tcolorbox}[colframe=gray!70, colback=gray!20, sharp corners]
\begin{equation*}
    (1-2M\epsilon)x_v(\Gamma^K(G)) \leq  \bar h_v^{(K)}\leq    (1+2M\epsilon)x_v(\Gamma^K(G)) 
\end{equation*}
\end{tcolorbox}
\end{proof}

\begin{figure*}[t]
    \centering
    
    \begin{subfigure}[b]{0.45\textwidth}
        \centering
        \includegraphics[width=\textwidth]{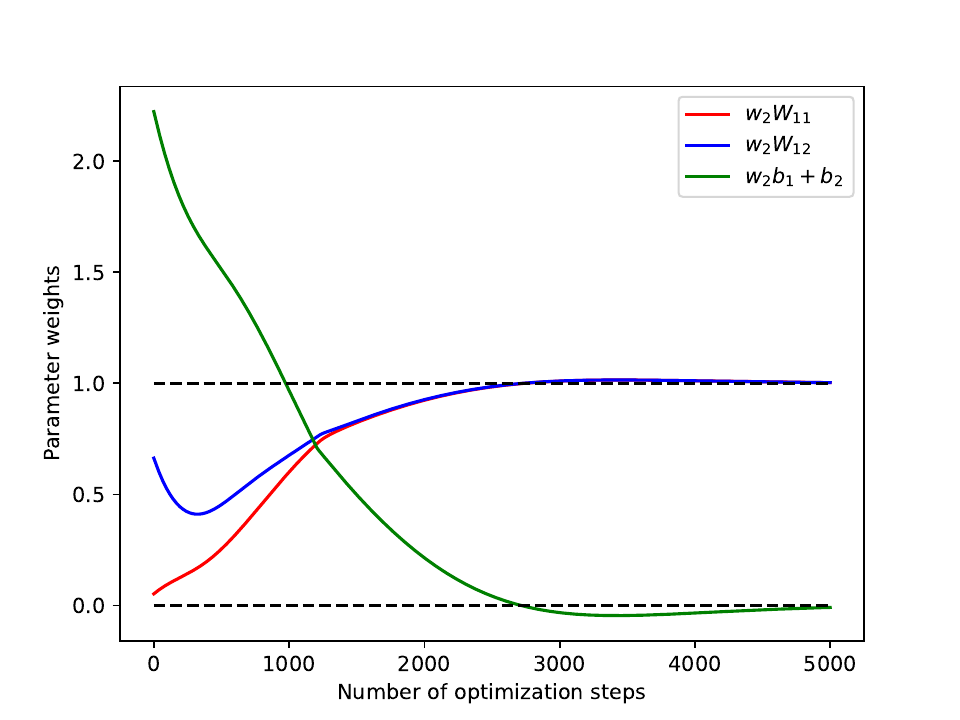}
        \caption{Model weights for small MinAgg GNN.  }
        \label{fig:subfig1}
    \end{subfigure}
    
    \begin{subfigure}[b]{0.45\textwidth}
        \centering
        \includegraphics[width=\textwidth]{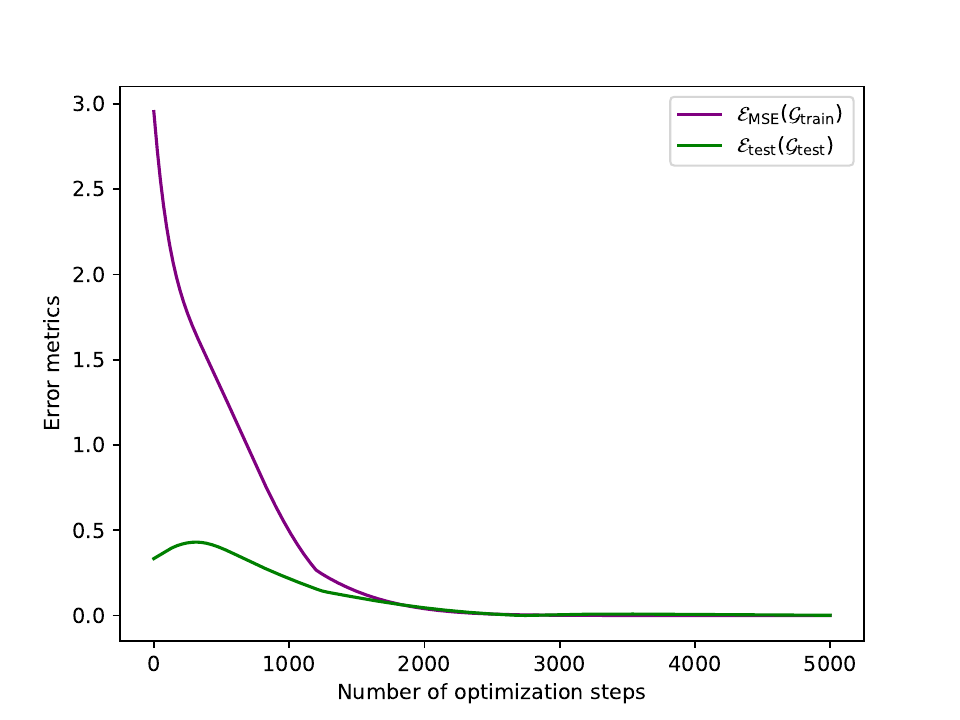}
        \caption{Error metrics for small MinAgg GNN. }
        \label{fig:subfig2}
    \end{subfigure}
    
    \caption{Performance and parameter weights for the small MinAgg GNN instance (trained with MSE). Recall from to \Cref{thm:main-small}, we expect that $w_2b_1 + b_2 = 0$ and $w_2W_{11} = w_2W_{12} = 1.0$. We indicate these values by the black dotted lines in (a) and show that the parameters of our small MinAgg GNN converge to the expected parameter values from \Cref{thm:main-small}. Additionally, we verify that convergence to this parameter configuration corresponds to low test error in (b) as we have that $\mathcal{E}_{\mathrm{test}}$ converges to 0.0018. }
    \label{fig:small-bf-gnn-results}
\end{figure*}

\begin{figure}[h!]
    \centering
    
    \begin{subfigure}[b]{0.45\textwidth}
        \centering
        \includegraphics[width=0.75\textwidth]{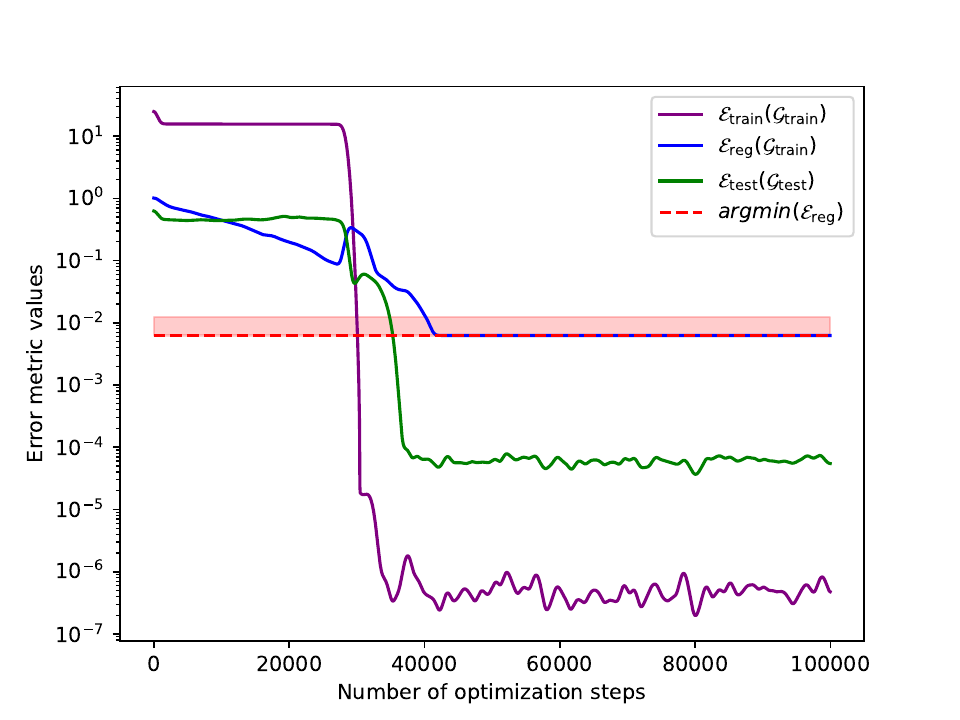}
        \caption{Error metrics for models trained with $\mathcal{L}_{\mathrm{MSE}, L_1}$. }
        \label{fig:subfig1}
    \end{subfigure}
       \hfill
    \begin{subfigure}[b]{0.45\textwidth}
        \centering
        \includegraphics[width=0.75\textwidth]{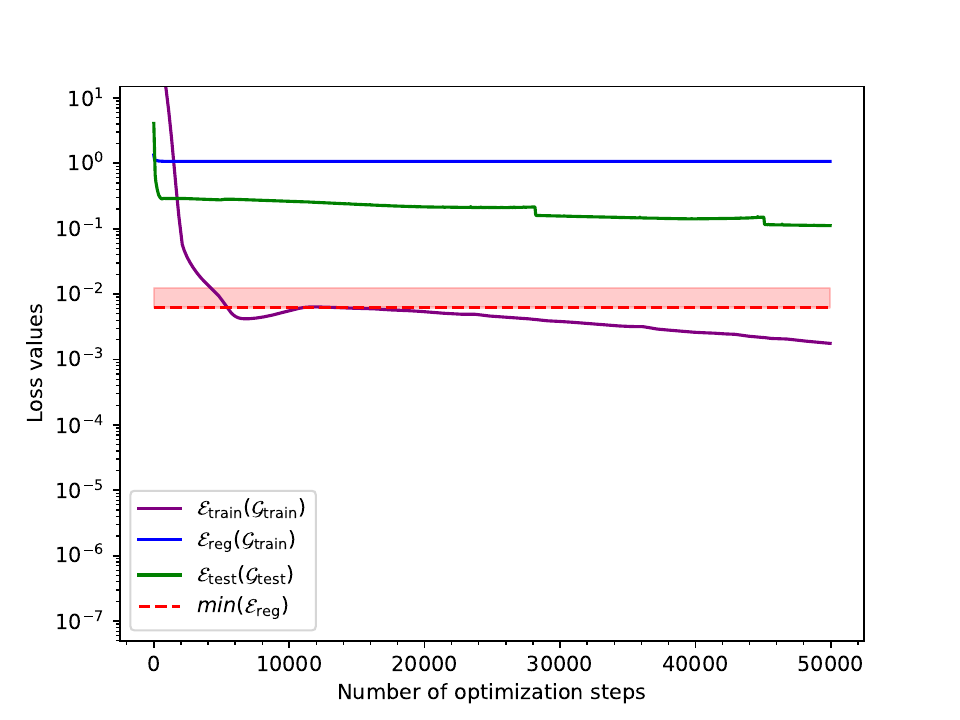}
        \caption{Error metrics for models trained with $\mathcal{L}_{\mathrm{MSE}}$.}
        \label{fig:subfig2}
    \end{subfigure}
 
    \begin{subfigure}[b]{0.45\textwidth}
        \centering
        \includegraphics[width=0.75\textwidth]{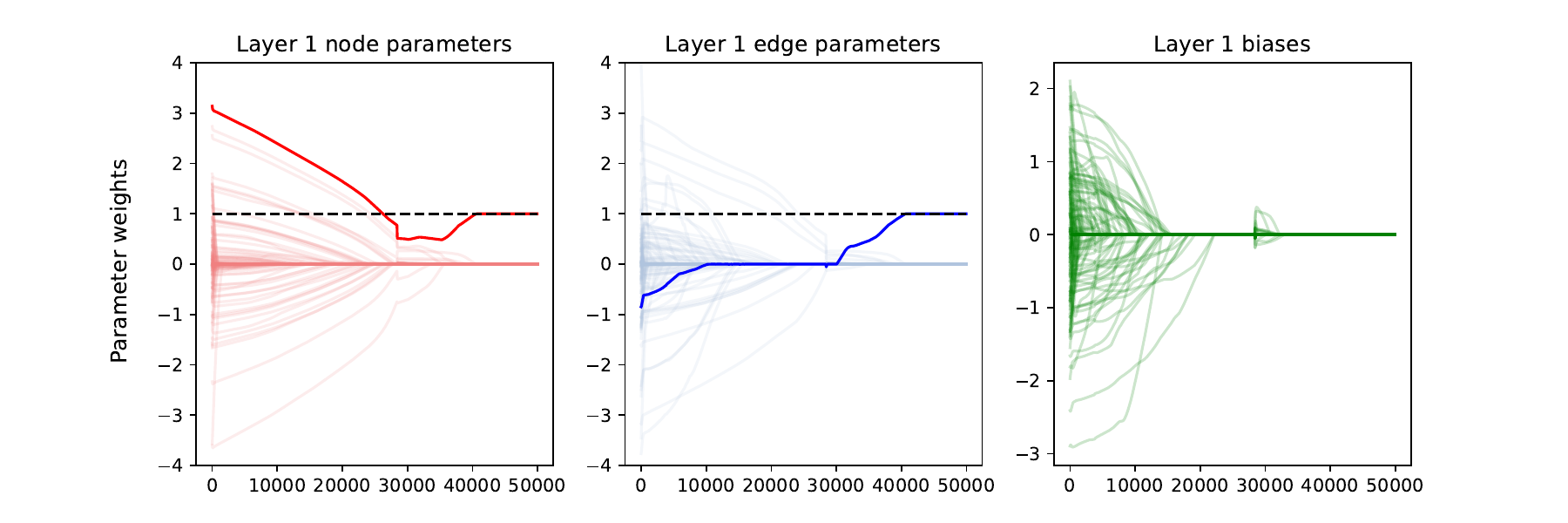}
        \caption{Model parameters for models trained with $\mathcal{L}_{\mathrm{MSE}, L_1}$}
        \label{fig:subfig1}
    \end{subfigure}
       \hfill
    \begin{subfigure}[b]{0.45\textwidth}
        \centering
        \includegraphics[width=0.75\textwidth]{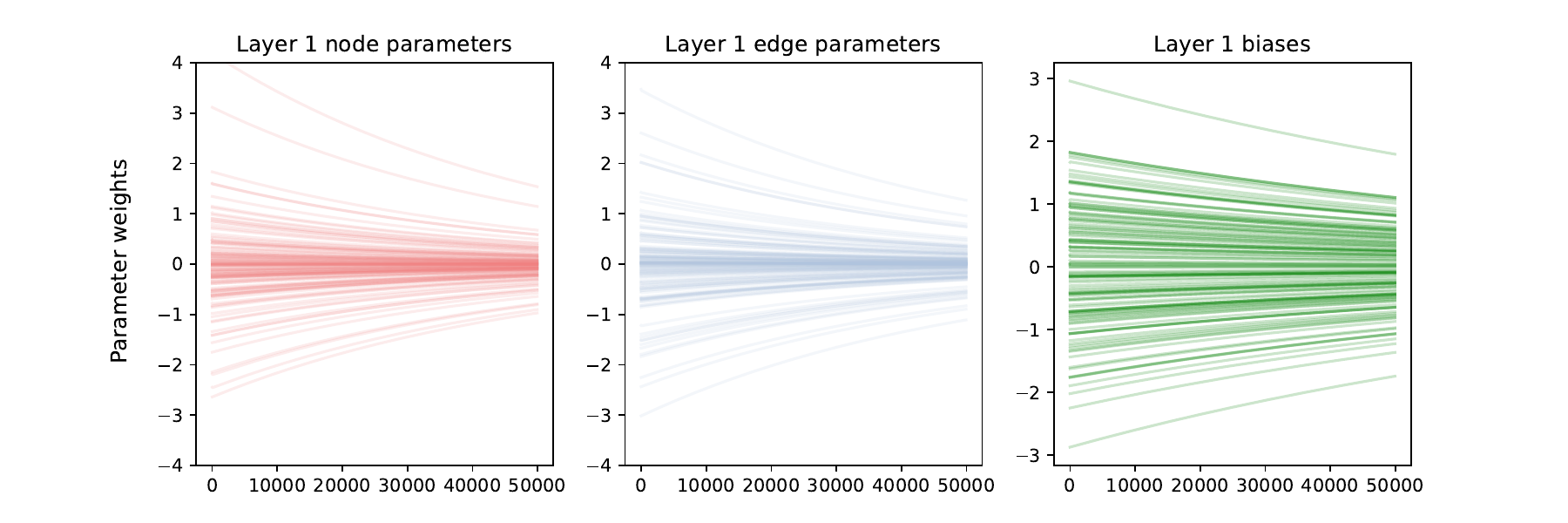}
        \caption{Model parameters for models trained with $\mathcal{L}_{\mathrm{MSE}}$}
        \label{fig:subfig2}
    \end{subfigure}
    \caption{Error metrics and parameter updates for a one-layer MinAgg GNN trained on a single step of the Bellman-Ford algorithm. Note that the dotted line in (a) and (b) is the global minimum of $\mathcal{E}_{\mathrm{reg}}$ and the red highlighted region indicates the error bound described in \Cref{thm:main-deep}. (a) and (b) show how each error metric changes over each training epoch for the models trained with $\mathcal{E}_{\mathrm{MSE}, L_1}$ and $\mathcal{E}_{\mathrm{MSE}}$. Note that $\mathcal{E}_{\mathrm{test}}$ is 0.00008 for the $L_1$ regularized model and $\mathcal{E}_{\mathrm{test}}$ is 0.212 for the un-regularized model. Additionally, (b) and (c) show how the model parameters update from epoch for models trained with $\mathcal{L}_{\mathrm{MSE}, L_1}$ and $\mathcal{L}_{\mathrm{MSE}}$, respectively. }
    \label{fig:1step-1layer}
\end{figure}

\section{Additional Experiments}
We include additional experiments verifying our theoretical claims with different configurations of the MinAgg GNN including the simple MinAgg GNN described in \Cref{thm:main-small} as well as several configurations of the complex MinAgg GNN utilized in \Cref{thm:main-deep}. 
All models are trained on 8 NVidia A100 GPUs using the AdamW optimizer with a learning rate of 0.001. We evaluate each model using the metrics described in \Cref{sec:experiments}.

\subsection{Simple MinAgg GNN}
We show in \Cref{fig:small-bf-gnn-results} that, with gradient descent, a simple MinAgg GNN will converge to the parameter configuration described in \Cref{thm:main-small}. The simple MinAgg GNN is trained with the specified train set in \Cref{thm:main-small}: four single-edge graphs initialized at step 0 and two double-edge graphs initialized at step 1.
Recall that an update for the simple MinAgg GNN is defined as
\begin{equation*}
    h_v^{(1)} = \sigma(w_2 \min_{u \in \mathcal{N}(v)} \{\sigma(W_1(x_u \oplus x_{(v, u)} + b_1 )) \} + b_2)
\end{equation*}
where $w_2, b_1, b_2 \in \R$ and $W_1 \in \R^{1 \times 2}$. 
From \Cref{thm:main-small}, we know that the simple MinAgg GNN will implement a single step of Bellman-Ford if $w_2W_{11} = w_2w_{12} = 1$ and $w_2b_1 + b_2 = 0$. 
Therefore, in \Cref{fig:small-bf-gnn-results} (a), we see that, via gradient descent,  the parameter configurations converge to the expected values (indicated by the black dotted lines).
We further empirically verify the results in \Cref{thm:main-small} in \Cref{fig:small-bf-gnn-results}(b) by showing that the as the parameter configurations converge to the expected values, the test error also converges to zero.

\begin{figure*}[t]
    \centering
    
    \begin{subfigure}[b]{0.45\textwidth}
        \centering
        \includegraphics[width=\textwidth]{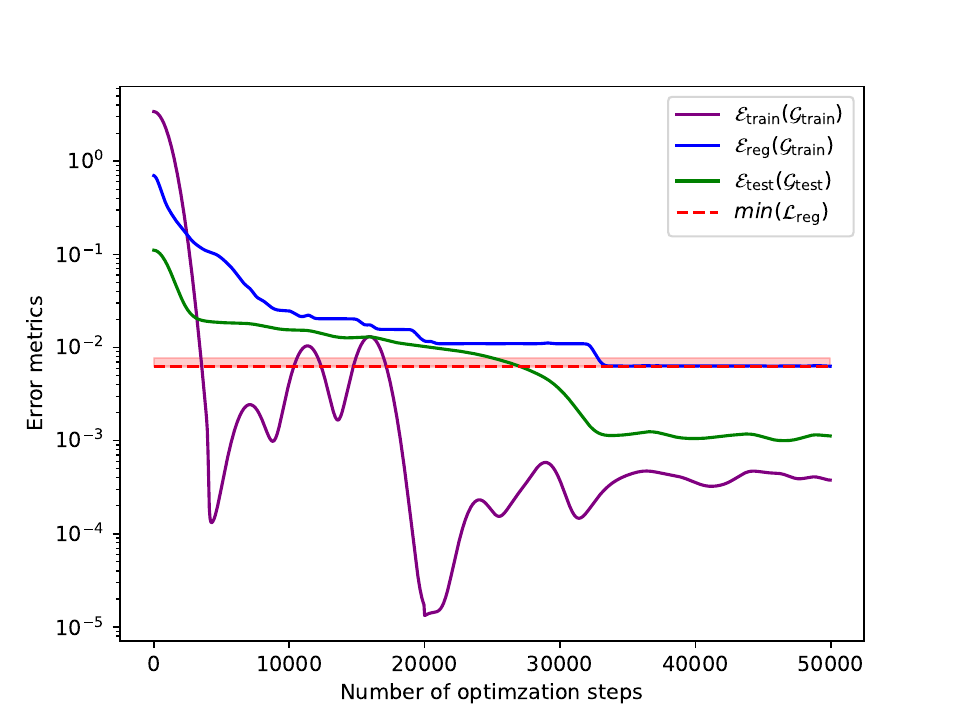}
        \caption{Error metrics for models trained with $\mathcal{L}_{\mathrm{MSE}, L_1}$. }
        \label{fig:subfig1}
    \end{subfigure}
    \hfill
    \begin{subfigure}[b]{0.45\textwidth}
        \centering
        \includegraphics[width=\textwidth]{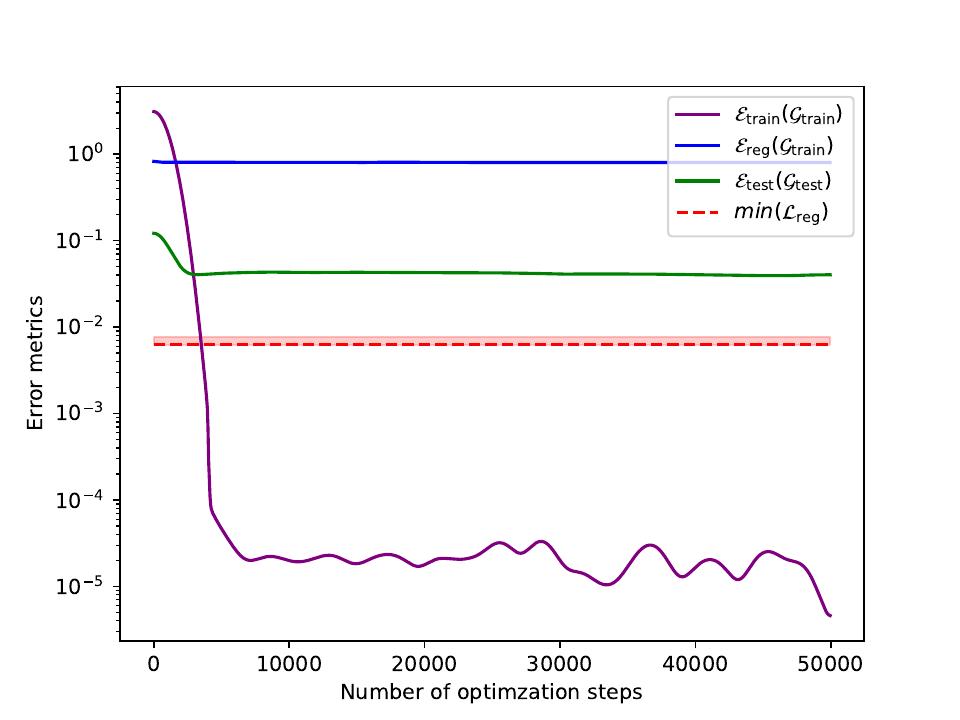}
        \caption{Error metrics for models trained with $\mathcal{L}_{\mathrm{MSE}}$.}
        \label{fig:subfig2}
    \end{subfigure}
\\
    \begin{subfigure}[b]{0.45\textwidth}
        \centering
        \includegraphics[width=\textwidth]{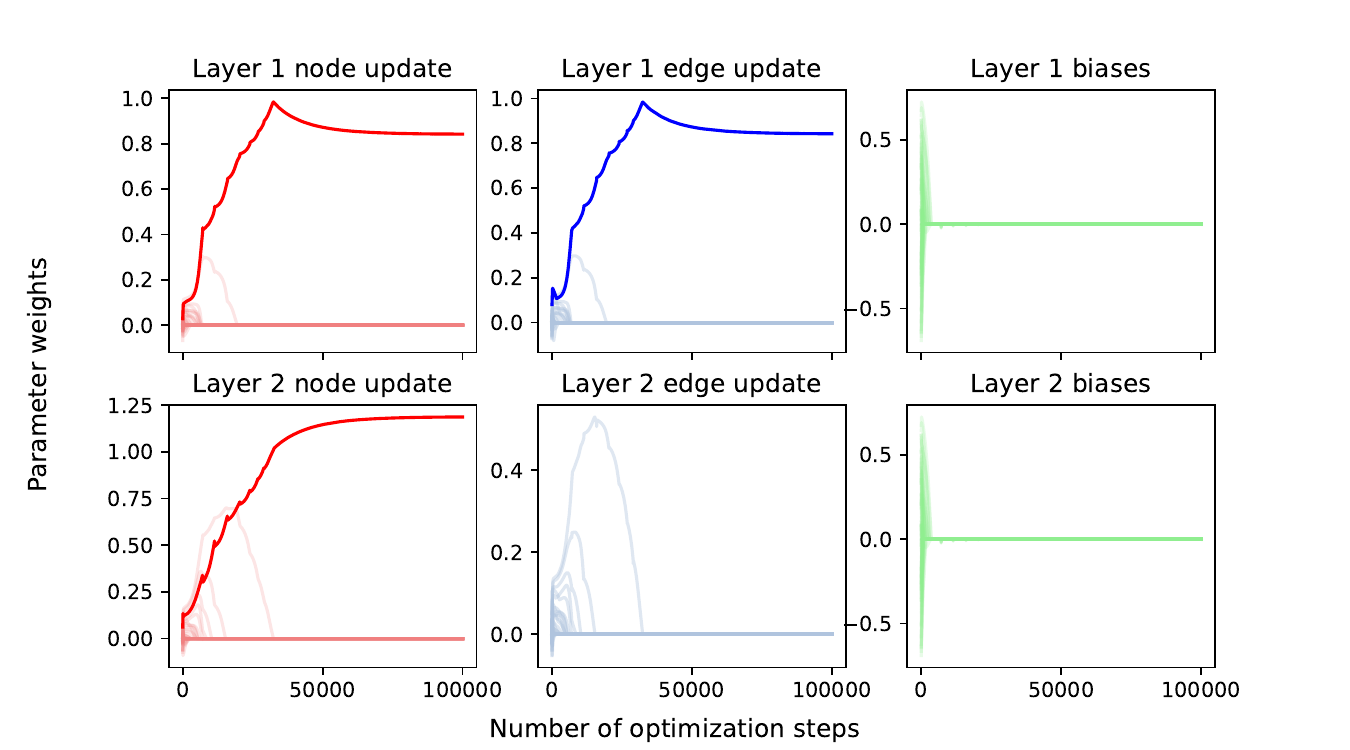}
        \caption{Model parameters for models trained with $\mathcal{L}_{\mathrm{MSE}, L_1}$}
        \label{fig:subfig1}
    \end{subfigure}
    \hfill
    \begin{subfigure}[b]{0.45\textwidth}
        \centering
        \includegraphics[width=\textwidth]{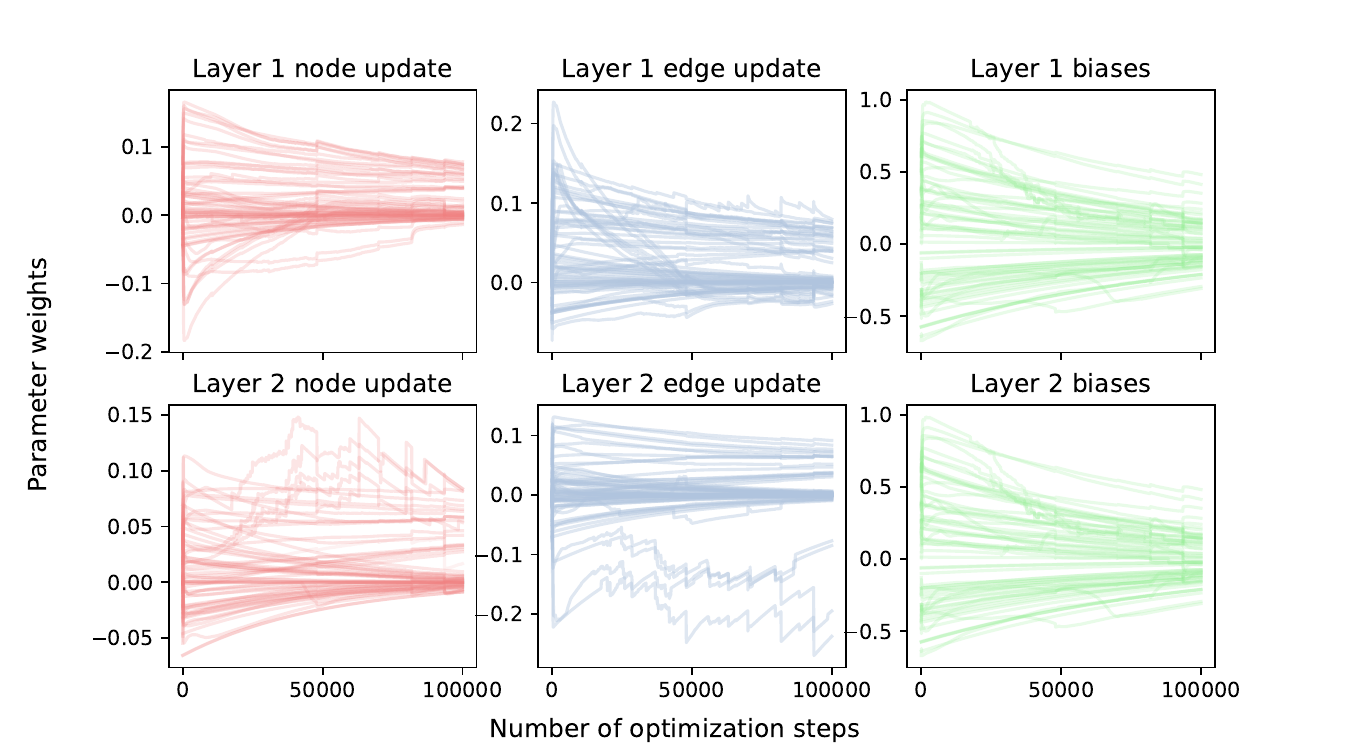}
        \caption{Model parameters for models trained with $\mathcal{L}_{\mathrm{MSE}}$}
        \label{fig:subfig2}
    \end{subfigure}
    \caption{Error metrics and parameter updates for a two-layer MinAgg GNN trained on a single step of the Bellman-Ford algorithm. Note that the dotted line in (a) and (b) is the global minimum of \Cref{eq:l0-loss}. (a) and (b) show how the train loss and test loss change over each training epoch for the models trained with $\mathcal{L}_{\mathrm{MSE}, L_1}$ and $\mathcal{L}_{\mathrm{MSE}}$ as well as the theoretical loss term $\mathcal{L}_{\mathrm{reg}}$ over time. $\mathcal{E}_{\mathrm{test}}$ converges to 0.001 for the $L_1$ regularized model and 0.0312 for the un-regularized model. (b) and (c) show how the model parameters update from epoch for models trained with $\mathcal{L}_{\mathrm{MSE}, L_1}$ and $\mathcal{L}_{\mathrm{MSE}}$, respectively. Each curve has been smoothed with a truncated Gaussian filter with $\sigma=20$. }
    \label{fig:1step-mse}
\end{figure*}

\subsection{Deep MinAgg GNNs}
We examine a variety of MinAgg GNN configurations to empirically verify \Cref{thm:main-deep}. 
As in \Cref{sec:experiments}, we compare models trained with $L_1$ regularization against models trained using just $\mathcal{L}_{\mathrm{MSE}}$. 

Similar to \Cref{sec:experiments}, for each model, we track $\mathcal{E}_{\mathrm{test}}$, $\mathcal{E}_{\mathrm{train}}$, and $\mathrm{E}_{\mathrm{reg}}$ throughout optimization.
Note that $\mathcal{E}_{\mathrm{test}}$ is evaluated on the same set of test graphs described in \Cref{sec:experiments} i.e. $\mathcal{G}_{\mathrm{test}}$ is a set of 200 graphs which are a mix of cycles, complete graphs, and Erd\"os-Reny\'i graphs with $p = 0.5$. 
Note that \Cref{thm:main-deep} requires $\mathcal{E}_{\mathrm{reg}}$ to fall below a certain $\epsilon$ threshold (indicated by the red region). 
For each of the complex GNN configurations we analyze below, we observe that models trained with $L_1$ regularization satisfy this bound, aligning with \Cref{thm:main-deep}. 
Furthermore, to verify that the model learns the correct parameters which implement Bellman-Ford, we also track a summary of the model parameters per epoch, defined as follows.
This is the same as the model parameters from \Cref{sec:experiments} in the main text, but here we provide more detail. 
Recall the definition of MinAgg GNNs in \Cref{def:BFGNN}.
Each layer can be precisely expressed as follows: 
\begin{align*}
    \sigma \Big(W^{\mathrm{up}, (\ell)} (\min \{ \sigma(W^{\mathrm{agg}, (\ell)}(h_u^{(\ell - 1)} \oplus x_{(u, v)})) +b^{\mathrm{agg}, (\ell)} \} \oplus h_v^{(\ell - 1)})\Big)+ b^{\mathrm{up}, (\ell)}
\end{align*}
To analyze parameter dynamics, we visualize the following for each optimization step and each layer: 
\begin{itemize}
    \item Layer $\ell$ node parameters: Given a node feature in $\R^d$, the first $d$ columns of $W^{\mathrm{agg}, (\ell)}$, $W^{\mathrm{agg}, (\ell)}[:, :d]$ scale the incoming neighboring node features.
    The contribution of incoming neighboring node features to the layer-wise output node feature for $v$ can be summarized as follows:

    \begin{equation*}
        \bigoplus_{j = 1}^d \Big( \bigoplus_{i = 1}^{d_{\mathrm{up}}} W^{\mathrm{up}, (\ell)}[i, :d_{\mathrm{agg}}] \odot W^{\mathrm{agg}, (\ell)}[:, j] \Big) \oplus \Big( \bigoplus_{d_{\mathrm{up}}}^{d_{\mathrm{agg}} + d_{\mathrm{up}}} W^{\mathrm{up}, (\ell)}[i, d_{\mathrm{agg}}: d_\mathrm{agg} + d_\mathrm{up}] \Big)
    \end{equation*}
    where $\odot$ is the element-wise product and $\oplus$ denotes concatenation. 

    \item Layer $\ell$ edge parameters: Similar to above, we know that the last column, $d + 1$, of $W^{\mathrm{agg}, (\ell)}$ scales the incoming edge features. Therefore, the contribution of neighoring edges to the layer-wise output node feature can be summarized as 
    \begin{equation*}
      \bigoplus_{i = 1}^{d_{\mathrm{up}}} W^{\mathrm{up}, (\ell)}[i :d_{\mathrm{agg}}] \odot W^{\mathrm{agg}, (\ell)}[:, d + 1].
    \end{equation*}
    \item Layer $\ell$ biases: We track the bias terms for each layer as $b^{\mathrm{agg}, (\ell)} \oplus b^{\mathrm{up}}$. Note that for the sparse implementation of Bellman-Ford that we describe previously, we require that the bias terms all converge to zero. 
\end{itemize}
Finally, note that we summarize each model's size generalization ability in \Cref{tab:large-error-vs-size-table} (analogous to \Cref{tab:error_vs_size} in the main text) in both the setting where we make a single forward pass through the model and when we use the model as a module and make repeated forward passes through the model. In \Cref{tab:error_vs_size}, we see that each of the models trained with $L_1$ regularization achieve significantly lower test error than those trained without $L_1$ regularization. 
\subsubsection{One layer} We configure the single layer GNN with MinAgg GNN with $d_{\mathrm{agg}} = 64$ and $d_{\mathrm{up}} = 1$. 
We evaluate $\mathcal{L}_{\mathrm{MSE}}$ on $\mathcal{G}_{\mathrm{train}}$ which is the same as for the simple MinAgg GNN (four two-node path graphs and four three-node path graphs starting from step one of Bellman-Ford). 
Intuitively, the two-node path graphs provide a signal which controls the edge update feature while the three-node path controls the node update feature of Bellman-Ford. 
The results for a single layer MinAgg GNN are summarized in \Cref{fig:1step-1layer}. 

First, as illustrated in \Cref{fig:1step-1layer} (a) and (b), the train error $\mathcal{L}{\mathrm{MSE}}$ alone does not capture the model's generalization ability. Both the $L_1$-regularized and non-regularized models achieve low $\mathcal{L}{\mathrm{MSE}}$ (0.002 for both the $L_1$ regularized model and the un-regularized model), yet only the regularized model—where $\Lreg$ converges to its minimum value—exhibits low test error.

Furthermore, we verify in \Cref{fig:1step-1layer} (b) and (c) that the parameters for the single layer MinAgg GNN converge to a configuration which \textit{approximately implements} a single step of Bellman-Ford.
Since we are only considering a single layer MinAgg GNN and the input node feature are the initial distances to source ($d = 1$), the node parameter summary that we consider is 
\begin{equation*}
    W^{\mathrm{up}, (1)}[:d_{\mathrm{agg}}] \odot W^{\mathrm{agg}, (1)}[:, 1] \oplus W^{\mathrm{up}, (1)}[d_{\mathrm{agg}} + d]
\end{equation*}
where $W^{\mathrm{agg}, (1)} \in \R^{d_{\mathrm{agg}} \times 2}$ and $W^{\mathrm{up}, (1)} \in \R^{d_{\mathrm{agg}} + d}$. 
Additionally, the edge parameter summary we consider is 
\begin{equation*}
    W^{\mathrm{up}, (1)}[:d_{\mathrm{agg}}] \odot W^{\mathrm{agg}, (1)}[:, 2].
\end{equation*}
Therefore, for a single layer MinAgg GNN, the model parameters which exactly implement Bellman-Ford has a single $k \in [64]$ such that
\begin{equation*}
    W^{\mathrm{up}, (1)}[k] \cdot W^{\mathrm{agg}, (1)}[k, 1] = W^{\mathrm{up}, (1)}[k] \cdot W^{\mathrm{agg}, (1)}[k , 2] = 1.0.
\end{equation*}
This means there is a single identical positive non-zero value for both the node and edge parameters. Additionally, all biases are zero.
In \Cref{fig:1step-1layer} (c), we see that the trained MinAgg GNN using $L_1$ regularization approximately converges to this configuration of parameters. However, the MinAgg GNN trained without regularization does not achieve this parameter configuration, explaining the higher test error $\mathcal{E}_{\mathrm{test}}$.

\begin{figure*}[t]
    \centering
    
    \begin{subfigure}[b]{0.45\textwidth}
        \centering
        \includegraphics[width=0.8\textwidth]{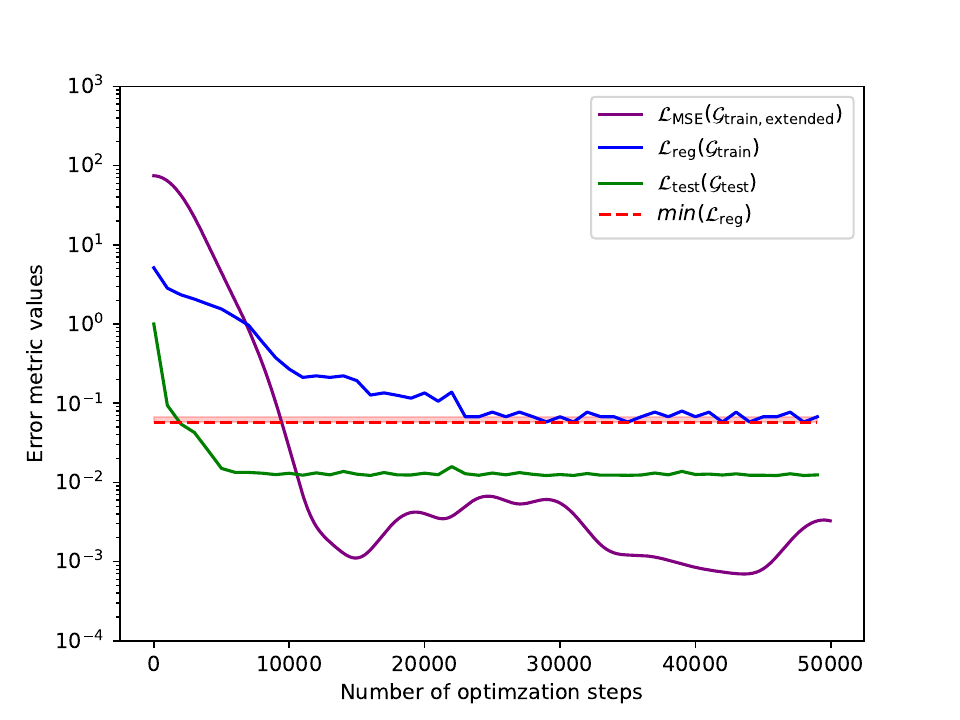}
        \caption{Error metrics for models trained with $\mathcal{L}_{\mathrm{MSE}, L_1}$.}
        \label{fig:subfig1}
    \end{subfigure}
    \hfill
    \begin{subfigure}[b]{0.45\textwidth}
        \centering
        \includegraphics[width=0.8\textwidth]{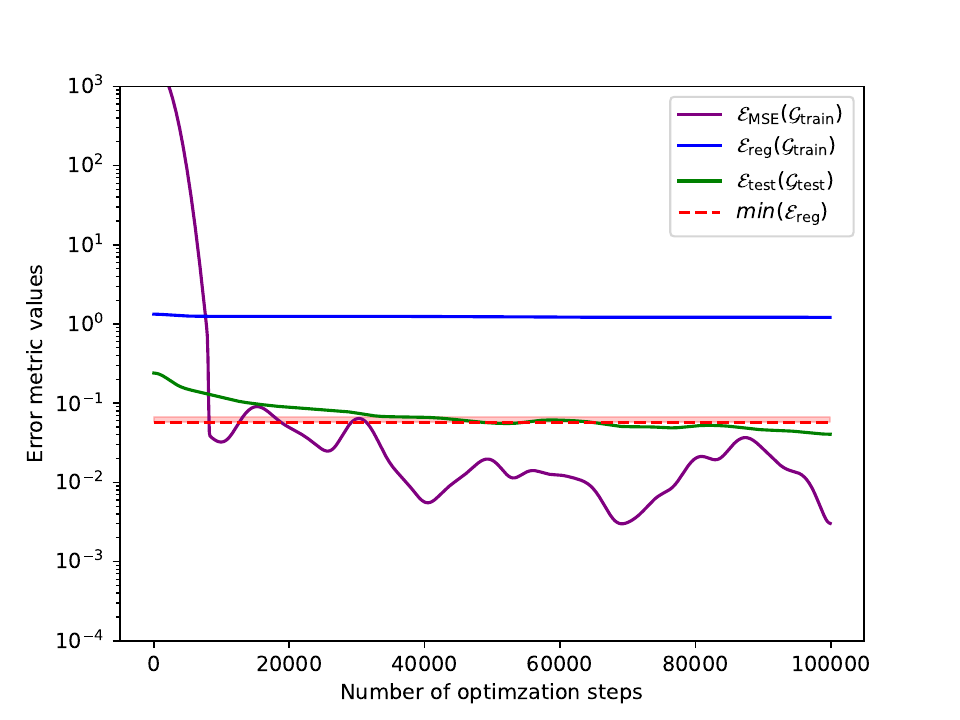}
        \caption{Error metrics for models trained with $\mathcal{L}_{\mathrm{MSE}}$}
        \label{fig:subfig2}
    \end{subfigure}
    
    \begin{subfigure}[b]{0.45\textwidth}
        \centering
        \includegraphics[width=0.9\textwidth]{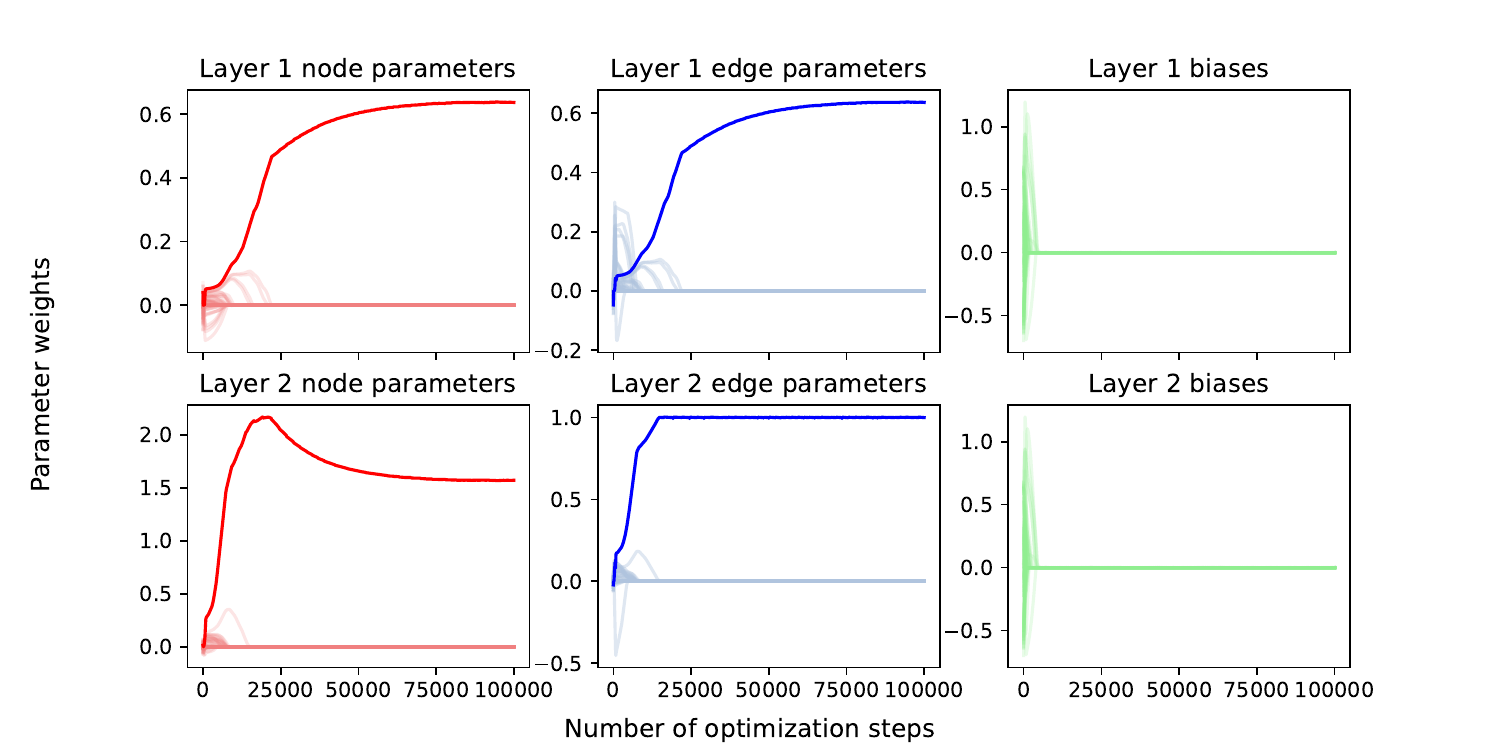}
        \caption{Model parameters summaries for model trained $\mathcal{L}_{\mathrm{MSE}, L_1}$.}
        \label{fig:subfig1}
    \end{subfigure}
    \hfill
    \begin{subfigure}[b]{0.45\textwidth}
        \centering
        \includegraphics[width=0.9\textwidth]{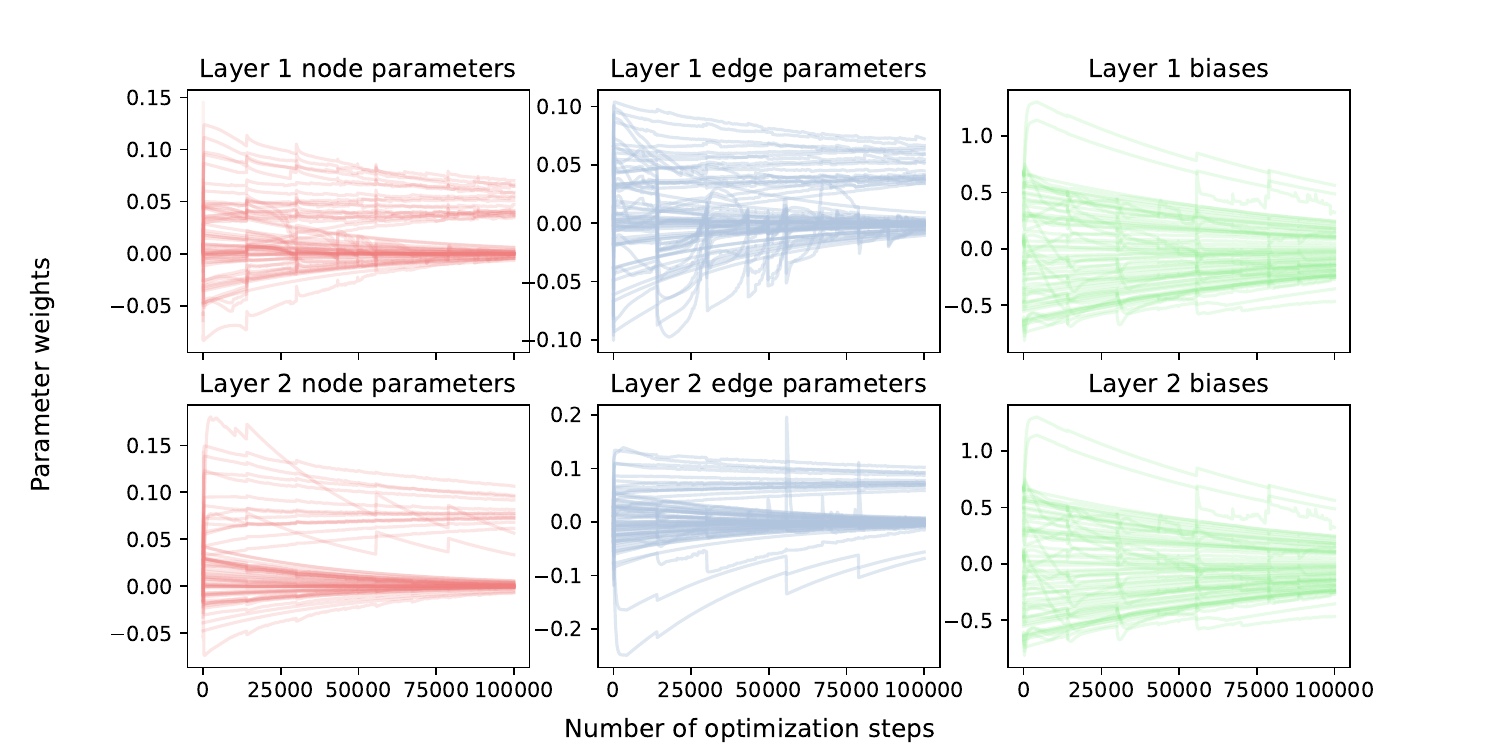}
        \caption{Model parameters summaries for model trained $\mathcal{L}_{\mathrm{MSE}}$.}
        \label{fig:subfig2}
    \end{subfigure}
    \caption{Performance metrics and parameter updates for a two-layer MinAgg GNN trained on a two steps of the Bellman-Ford algorithm. The dotted line in (a) and (b) is the global minimum of \Cref{eq:l0-loss} and the red region represents the $\epsilon$ bounds in \Cref{thm:main-deep}. In (a) and (b), we track the change in the train loss, test loss, and $\Lreg$ over each optimization step for the models trained with $\mathcal{L}_{\mathrm{MSE}, L_1}$ and $\mathcal{L}_{\mathrm{MSE}}$. Note that the final test loss for the model trained with $\mathcal{L}_{\mathrm{MSE}, L_1}$ is 0.006 while the final test loss for the model trained with $\mathcal{L}_{\mathrm{MSE}}$ is 0.288. (b) and (c) show how the model parameters change over each optimization step models trained with $\mathcal{L}_{\mathrm{MSE}, L_1}$ and $\mathcal{L}_{\mathrm{MSE}}$, respectively. Each curve has been smoothed with a truncated Gaussian filter with $\sigma=20$.}
    \label{fig:2step}
\end{figure*}

\begin{table}[t]
    \centering \small
    \begin{tabular}{ccccccccccccc}
        &  & \multicolumn{4}{c}{Single} & \multicolumn{4}{c}{Iterated} \\
         \cline{3-10}
         & \# of nodes & 1L, un-reg. & 1L, reg. & 2L, un-reg. & 2L, reg. & 1L, un-reg. & 1L, reg. & 2L, un-reg. & 2L, reg. \\
         \hline
        \multirow{3}*{One step} & 100 & 0.0079 & 0.00006 & 0.0569 &0.0022 & 0.0320 & 0.00012 & 0.00965 & 0.00133\\
        & 500 & 0.0070 & 0.00006 & 0.0560 & 0.0022 & 0.0289 & 0.00012 & 0.0074 & 0.00147 \\
        & 1K  & 0.0071 & 0.00006 & 0.0558 & 0.0021 & 0.0290 & 0.00011 &  0.00722 & 0.00151 \\
        \hline
        \multirow{3}*{Two steps} & 100 & - & - & 0.0296 &0.0173 & - & - & 0.0596 & 0.0182\\
        & 500 & - & - & 0.0297 & 0.0174 & - & - & 0.0391 & 0.0197 \\
        & 1K  & - & - & 0.0308 & 0.0180 & - & - &  0.0367 & 0.0199 \\
    \end{tabular}
    \caption{Error ($\mathcal{E}_{\mathrm{test}}$) versus size for all model configurations. The first row of the table contains the test error for both the single (indicated by 1L) and two-layer (indicated by 2L) model configurations trained on a single step of Bellman-Ford and the second row contains the test error for all model configurations trained on two steps of Bellman-Ford. We use `reg' that the model is trained with $L_1$ regularization and `un-reg' to indicate a model trained without $L_1$ regularization. Similar to \Cref{tab:error_vs_size}, we examine the error for a single pass of each model (one step of BF for the first row and two steps of BF for the second row) and for three forward passes of each model (three steps of BF for the first row and six steps of BF for the second row). 
    Each test set consists of Erd\"os–R\'enyi graphs generated with the corresponding sizes listed with $p$ such that the expected degree $np = 5$. For both models trained with $L_1$ regularization and without, the error does not change much as the number of nodes in the test graphs increase. However, when each model is used as a module and iterated, we see that the $L_1$ regularized model remains accurate while the error for the un-regularized model increases significantly. }
    \label{tab:large-error-vs-size-table}
\end{table}

\subsubsection{Two layer, single step}
We configure a two layer GNN with $d_{\mathrm{agg}} = 64$ and $d_{\mathrm{up}}$ = 1 for all layers and train it on a single step of Bellman-Ford. Note that this setup is overparameterized for modeling a single step of Bellman-Ford.
The results of training on a single step of Bellman-Ford are summarized in \Cref{fig:1step-mse}.
As with the single layer MinAggGNN configuration, the train set again consists of four two-node path graphs and four three-node path graphs with varying edge weights. 

First, while $\mathcal{L}_{\mathrm{MSE}}$ converged to a low error for both models, only the model trained with $\mathcal{L}_{\mathrm{MSE}, L_1}$ has $\mathcal{L}_{\mathrm{reg}}$ converging to a low value. 
This also corresponds to a significantly lower $\mathcal{L}_{\mathrm{test}}$, again demonstrating an overparameterized model's ability to learn Bellman-Ford under an $L_1$-regularized training error and  generalize to larger graph sizes with lower test error. 
These results align with our theoretical results, where the convergence of $\mathcal{L}_{\mathrm{reg}}$ to its minimum value provides a certificate for size generalization. 

In \Cref{fig:1step-mse} (b) and (c), we again further verify the size generalization ability of our models and emphasize the importance of sparsity regularization for size generalization by showing that the model parameter summaries for the model trained with $\mathcal{L}_{\mathrm{MSE}, L_1}$ converge to an implementation of a single step of Bellman-Ford. 
Since $d_{\mathrm{up}, (\ell)} = 1$ for both layers, the node parameter summary that we analyze is again 
\begin{equation*}
    W^{\mathrm{up}, (\ell)}[:d_{\mathrm{agg}}] \odot W^{\mathrm{agg}, (\ell)}[:, 1] \oplus W^{\mathrm{up}, (\ell)}[d_{\mathrm{agg}} + 1]
\end{equation*}
and the edge parameter summary is 
\begin{equation*}
      W^{\mathrm{up}, (\ell)}[:d_{\mathrm{agg}}] \odot W^{\mathrm{agg}, (\ell)}[:, 2].
\end{equation*}
for both layers. 
Similar to the single layer and single edge case, for the first layer, we expect that in the sparse implementation of a single step of Bellman, there will only a unique identical and positive non-zero value for both the node and edge parameter summaries. 
Therefore, for this sparse implementation, for any node $v$ in a given input graph, the node feature for $v$ at the first layer is $a(x_{u'}+ x_{(u', v)}$ where $a > 0$ and $u' = \argmin_{u \in \mathcal{N}(v)} \{x_u + x_{(u, v)}\}$.
In the second layer, $W^{\mathrm{up}, (2)}[65] = 1/a$ is the only positive non-zero parameter.
Therefore, the final output for $v$ will be $\min_{u \in \mathcal{N}(v)}\{x_u + x_{(u, v)}\}$.
In \Cref{fig:1step-mse} (b) and (c), we see again that the model trained with $L_1$ regularization 
(i.e. $\mathcal{L}_{\mathrm{MSE}, L_1}$) has its parameters approximately converge to this sparse implementation of Bellman-Ford. In contrast, the model trained without $L_1$ regularization (i.e. only $\mathcal{L}_{\mathrm{MSE}}$) does not appear to converge such a sparse implementation of Bellman-Ford, which accounts for the higher test error of the model. 

\subsubsection{Two layer, two steps} In the main text, we evaluate the ability of a two layer MinAgg GNN to learn two steps of Bellman-Ford. Here, we show the ability of the MinAgg GNN to learn two steps of Bellman-Ford in a somewhat under-parameterized setting as we let $d_{\mathrm{agg}} = 64$ and $d_{\mathrm{up}} = 1$. The results are summarized in \Cref{fig:2step}. 
Similar to the other model configurations evaluated (both in the supplement and the main text), we see that in \Cref{fig:2step} (a) and (b) that the $L_1$ regularized model achieves much lower $\mathcal{E}_{\mathrm{reg}}$ and correspondingly, much lower $\mathcal{E}_{\mathrm{test}}$. The parameter configurations visualized in \Cref{fig:2step} (c) and (d) show that the $L_1$ regularized model with low $\mathcal{E}_{\mathrm{reg}}$ converges to the sparse implementation of two-steps Bellman-Ford, as we have shown theoretically in \Cref{thm:main-deep}.
Additionally, note in \Cref{tab:large-error-vs-size-table}, the gap between the error for the unregularized model and the error for the regularized model is much lower than that of \Cref{tab:error_vs_size} in the main text.

\pagebreak

\section{Table of Notation} \label{sec:notation}
\begin{table}[htbp]
\small
\centering
\begin{tabular}{|l|p{10cm}|}
\hline
\textbf{Symbol} & \textbf{Definition} \\ \hline
\multicolumn{2}{|c|}{\textbf{General Notations}} \\ \hline
$[n]$ & The set $\{1, 2, \dots, n\}$. \\ \hline
$x \oplus y$ & Concatenation of vectors $x$ and $y$. \\ \hline
$x_i$ or $[x]_i$ & $i$-th component of vector $x$. \\ \hline
$\beta$ & A large constant representing the unreachable node feature value. \\ \hline

\multicolumn{2}{|c|}{\textbf{Graphs and Attributed Graphs}} \\ \hline
$G = (V, E, X_{\mathrm{v}}, X_{\mathrm{e}})$ & Attributed graph with vertices $V$, edges $E$, edge weights $X_{\mathrm{e}}$, and node attributes $X_{\mathrm{v}}$. \\ \hline
$X_{\mathrm{e}}, X_{\mathrm{v}}$ & Edge weights $\{x_e : e \in E\}$ and node attributes $\{x_v : v \in V\}$. \\ \hline
$\mathrm{d}^{(t)}(s, v)$ & Length of the $t$-step shortest path from node $s$ to $v$; $\beta$ if no such path exists. \\ \hline
$G^{(t)}$ & $t$-step Bellman-Ford (BF) instance with node features $\{x_v = \mathrm{d}^{(t)}(s, v) : v \in V\}$. \\ \hline
$\Gamma$ & Operator implementing a single step of the BF algorithm. \\ \hline
$\mathscr{G}$ & Set of all edge-weight-bounded attributed graphs. \\ \hline
$P_k^{(\ell)}(a_1, \dots, a_k)$ & $k$-edge path graph at step $\ell$ with edge weights $a_1, \dots, a_k$. \\ \hline
$\mathcal{N}(v)$ & Neighborhood of node $v$ in the graph. \\ \hline
$V^*(G)$ & Set of reachable nodes in graph $G$, i.e., nodes with $x_v \neq \beta$. \\ \hline

\multicolumn{2}{|c|}{\textbf{Graph Neural Networks (GNNs)}} \\ \hline
$\mathcal{A}_\theta$ & $L$-layer Bellman-Ford Graph Neural Network (MinAgg GNN) parameterized by $\theta$. \\ \hline
$h_v^{(\ell)}$ & Hidden feature of node $v$ at layer $\ell$ in the MinAgg GNN. \\ \hline
$\mathcal{A}_\theta(G)$ & Output graph of MinAgg GNN $\mathcal{A}_\theta$ after $L$ layers, with updated node features. \\ \hline
$d_\ell$ & Dimensionality of hidden features at layer $\ell$. \\ \hline
$f^{\mathrm{agg}}, f^{\mathrm{up}}$ & MLPs used for aggregation and update operations in MinAgg GNN layers. \\ \hline
$W_j^{\mathrm{agg}}, W_j^{\mathrm{up}}$ & Weight matrices for aggregation and update MLPs in MinAgg GNN. \\ \hline
$b_j^{\mathrm{agg}}, b_j^{\mathrm{up}}$ & Bias vectors for aggregation and update MLPs in MinAgg GNN. \\ \hline
$K$ & Number of message passing steps in the MinAgg GNN. \\ \hline
$m$ & Number of layers in the MLPs used for aggregation and update in MinAgg GNN. \\ \hline
$L$ & Total number of layers in the MinAgg GNN. \\ \hline
$d$ & Dimensionality of hidden features in MinAgg GNN layers. \\ \hline

\multicolumn{2}{|c|}{\textbf{Training and Loss Functions}} \\ \hline
$\mathcal{H}_{\mathrm{small}}$ & A set of small training graphs used to analyze GNN performance. \\ \hline
$\mathcal{G}_{\mathrm{train}}$ & Set of training examples for the MinAgg GNN, consisting of input-output graph pairs. \\ \hline
$\mathcal{L}_{\mathrm{reg}}$ & Regularized loss function for MinAgg GNN, combining training loss and parameter sparsity penalty. \\ \hline
$\mathcal{L}_{\mathrm{MAE}}$ & Mean absolute error loss over the training set. \\ \hline
$\eta$ & Regularization coefficient for sparsity in the MinAgg GNN. \\ \hline
$\mathcal{E}_{\mathrm{test}}$ & Multiplicative error over the test set.  \\
$\mathcal{E}_{\mathrm{reg}}$ & Model sparsity combined with mean absolute error over the training set. Same as $\mathcal{L}_{\mathrm{reg}}$.\\
$\mathcal{E}_{\mathrm{train}}$ & Mean squared error over the training set.\\
\hline
\end{tabular}
\label{tab:reorganized_notation}
\end{table}
\pagebreak
\section*{}
\tableofcontents

\end{document}